\theoremstyle{plain}
\newtheorem{theorem}{Theorem}[section]
\newtheorem{example}{Example}[section]
\newtheorem{lemma}[theorem]{Lemma}
\newtheorem{corollary}[theorem]{Corollary}
\theoremstyle{definition}
\newtheorem{assumption}[theorem]{Assumption}
\theoremstyle{remark}
\newtheorem{remark}[theorem]{Remark}
\newcommand{\indep}{\perp \!\!\! \perp}
\DeclareMathOperator{\E}{\mathbb{E}}
\let\Pr\relax\DeclareMathOperator{\Pr}{\mathbb{P}}
\DeclareMathOperator*{\argmin}{arg\,min}
\newcommand{\N}{\mathcal{N}}
\newcommand{\I}{\mathcal{I}}
\newcommand{\F}{\mathcal{F}}
\newcommand{\Pa}{\texttt{Pa}}
\newcommand{\D}{\mathcal{D}}
\newcommand{\R}{\mathcal{R}}
\newcommand{\At}{\texttt{At}}
\newcommand{\fc}{f_\circ}
\newcommand{\ThetaTr}{\Theta_{\text{train}}}
\title{PAC Generalization via Invariant Representations}
\author[1]{Advait Parulekar\thanks{Email: advaitp@utexas.edu}}
\author[2]{Karthikeyan Shanmugam\thanks{This research was completed while the author was at IBM Research AI group, Yorktown Heights, NY 10598, USA. Email: karthikeyanshanmugam88@gmail.com}}
\author[1]{Sanjay Shakkottai\thanks{Email: sanjay.shakkottai@utexas.edu}}
\affil[1]{Department of Electrical and Computer Engineering, The University of Texas at Austin}
\affil[2]{Google Research India}
\begin{document}

\maketitle

\begin{abstract}
One method for obtaining generalizable solutions to machine learning tasks when presented with diverse training environments is to find \textit{invariant representations} of the data. These are representations of the covariates such that the best model on top of the representation is invariant across training environments. In the context of linear Structural Equation Models (SEMs), invariant representations might allow us to learn models with out-of-distribution guarantees, i.e., models that are robust to interventions in the SEM. To address the invariant representation problem in a {\em finite sample} setting, we consider the notion of $\epsilon$-approximate invariance. We study the following question: If a representation is approximately invariant with respect to a given number of training interventions, will it continue to be approximately invariant on a larger collection of unseen SEMs? This larger collection of SEMs is generated through a parameterized family of interventions. Inspired by PAC learning, we obtain finite-sample out-of-distribution generalization guarantees for approximate invariance that holds \textit{probabilistically} over a family of linear SEMs without faithfulness assumptions. Our results show bounds that do not scale in ambient dimension when intervention sites are restricted to lie in a constant size subset of in-degree bounded nodes. We also show how to extend our results to a linear indirect observation model that incorporates latent variables.
\end{abstract}

\section{Introduction}

A common failure of empirical risk minimization in machine learning is that it is beneficial to exploit spurious correlations to minimize training loss. A classic example of this comes from \cite{Beery2018RecognitionIT} where we have a dataset with pictures of cows and camels. In most cases, the cows are in pastures and the camels are in deserts, and a classifier that works well on such a dataset generalizes poorly to classifying cows in deserts. How do we prevent a classifier tasked with distinguishing pictures of cows from pictures of camels from using the color of the images? Certainly doing so might actually help with the classification - take an extreme example wherein rather than just having an indicative background, the species of the animal is annotated in the image itself. In this case, simply learning to interpret what is annotated will lead to an excellent classifier. However, such a classifier will perform poorly on a future dataset with incorrect or missing annotations.

In asking for such ``out-of-distribution" 
(OOD) generalization (generalization to different distributions rather than different samples from the same distribution), we are asking for more than what is guaranteed by traditional PAC learning. For instance, we might have a \textit{covariate shift}, meaning the marginal on the covariates of the joint distributions of our training and test data is different. We would like a classifier that is trained on training environments to generalize to the test environments. 

Recently, in the OOD generalization literature, the data for the training and test environments have been modeled to arise from a set of {\em causal models} that are intervened versions of each other. The true causal relationship in causal models between a target variable and its causal parents remains invariant while other relationships could change \cite{peters2016causal,bareinboim2012local}. Building on this observation, authors in \cite{arjovsky2019invariant} proposed to learn a representation across the training environments such that the optimal classifier on top of it remains the same. 
We refer to this as the \textit{invariant representation} in this work. Will this representation and the predictor generalize (remain invariant) in unseen environments? Does such a formulation result in improvements over other methods such as empirical risk minimization or distributionally robust optimization?

There has been some debate about this matter. Some theoretical works such as \cite{rosenfeld2020risks}, \cite{kamath2021does} describe regimes and settings in which either the convex penalty suggested by \cite{arjovsky2019invariant} or the invariance formulation in general are not sufficient to generalize to new environments. On the other hand, there are also settings described in \cite{arjovsky2019invariant}, \cite{ahuja2020empirical}, \cite{kamath2021does} in which IRM does provably lead to OOD generalization guarantees that are not possible using other methods. However, these prior studies require strong conditions such as faithfulness \cite{yang2018characterizing}, \cite{pearl_2009}, that come from the structure learning community. Indeed, as noted in \cite{uhler2013geometry}, faithfulness (that observed invariances imply structural constraints on the causal graph) is a very strong assumption in the finite sample setting. 
This is because the volume of the set of linear SEMs that are ``close'' to ones with faithfulness violations is a large constant fraction of all linear SEMs. This makes it difficult to resolve the question of whether an observed conditional independence truly holds due to the underlying causal structure or not with finite samples. Finally, invariance based results almost always study the infinite sample setting and also assume general position conditions on the training environments; similar to the faithfulness assumption, these become much stronger assumptions in the finite sample setting. 

Thus in the worst case, perhaps we cannot expect generalization guarantees without an exponentially many interventions/samples. Instead of studying the worst case setting, suppose we consider the \textit{PAC} setting -- can we now derive finite sample generalization guarantees without the need of strong faithfulness or general position assumptions, yet with a polynomial scaling in interventions/samples?

\subsection{Contributions}\label{section:contributions}
In this paper, we provide the {\em first known generalization and associated finite-sample guarantees for invariant representations without faithfulness assumptions.}
We work in a setting in which we have access to a family of linear SEMs related to each other by hard and soft interventions on arbitrary subsets of variables. We assume that the training environments arise by random sampling from a distribution over this family. We derive a PAC bound for the number of interventions needed to get probabilistic generalization over the family of SEMs. A central result is that we need only $O(\frac{n^4}{\delta'})$ training environments, such that approximately invariant representations on the training set generalizes with probability $1-\delta'$ on the family of unseen linear SEMs.
%
We show tighter interventional complexity independent of $n$ for SEM families that have simultaneous atomic interventions on any fixed set of $k$ nodes and soft interventions on $k$ nodes in a degree bounded setting.

Secondly, we characterize the number of samples in each training environment that is needed for approximate invariance to hold with high probability. This gives an end to end sample complexity guarantee along with the interventional complexity required in the above setting without any faithfulness or general position assumptions. Furthermore, we show extensions of our results to a linear indirect observation model that incorporates latent variables.

Finally, we empirically demonstrate that in the setting described above using a notion of generalization that we describe, \textit{most} approximately invariant representations generalize to \textit{most} new distributions.

\subsection{Related Work}
\textbf{Causal Structure Learning:} One approach, in the presence of data sampled from structural models, is simply to try and learn the DAG structure explicitly. Algorithms that use conditional independence (CI) tests and/or penalized likelihood scoring are used to determine the DAG \cite{Spirtes2000,solus2021consistency,chickering2020statistically,brouillard2020differentiable}. These methods 
can only learn DAGs up to what is called the Markov Equivalence Class (MEC) \cite{Verma1990}, or $\I$-MECs (Interventional MECs) when given access to general interventions \cite{hauser2012characterization,squires2020permutation,brouillard2020differentiable} \cite{yang2018characterizing} under some form of faithfulness assumptions on the underlying causal model. It has been shown that some forms of faithfulness assumption are actually stronger than one might expect; linear SEMs that are close to a faithful violation form a much larger set \cite{uhler2013geometry} and are difficult to distinguish in finite samples from faithful ones. However, in light of the fact that we only want generalization guarantees for invariance and not structure recovery, one could argue that learning the structure, or even feature selection 
is stronger than what we actually need.


\textbf{Domain Adaptation Methods:} Domain adaptation literature \cite{ganin2016domain, muandet2013domain,ben2007analysis} learns representations who distributions does not change across training and an unlabeled test distribution. Another line of work searches for the best mixture of training distributions to train on or mixture of pretrained models for generalization to unlabeled test data or test data with limited test labels \cite{mansour2021theory,mansour2009domain}. Distributionally robust optimization approaches that optimize the worst risk over an uncertainty set of distributions have been proposed \cite{duchi2021statistics,sagawa2019distributionally}. Please see Appendix A of \cite{gulrajani2020search} for a more exhaustive list of works. Robust optimization and multi source domain adaptation methods search in an uncertainty ball around the training distributions while domain adaptation fail even with a shift in marginal distributions of the labels \cite{zhao2019learning}. Such methods fail when the test distribution is outside the convex hull of training distributions.

\textbf{Finite Samples and Invariant Risk Minimization:} One framework proposed to bypass structure learning difficulties and directly look for invariant representations is to use \textit{Invariant Risk Minimization} (IRM) \cite{arjovsky2019invariant}. \cite{arjovsky2019invariant} show that under some general position assumptions on the population covariances of the training distributions the exact invariant representation recovers the true causal parents. The work of \cite{ahuja2020empirical} makes similar assumptions to derive sample complexity guarantees. \cite{kamath2021does} also consider the question of getting generalization guarantees from IRM for general distributions,
and address the number of training intervention needed for IRM; however, they work in the infinite sample limit with exact invariance and require an analytical mapping between intervention index set and training distributions. All these works assume regularity conditions and provide deterministic guarantees for a test distribution.

\textbf{Invariant Predictors on DAGs:} 
Recent works \cite{subbaswamy2019preventing,magliacane2017domain} have assumed the knowledge of the causal DAG behind the unseen test and/or train distributions along with information about the nodes that are intervened on in the test. It is then possible to characterize all possible invariant representations as a function of the graph. In our study, we do not assume any such side information about the SEM underlying our distributions.

\section{Model}\label{section:model}
\subsection{Structural Equation Model}
We consider a causal generative model specified by a linear Structural Equation Model (SEM) - data that is generated by linear equations on a directed acyclic graph (DAG) of variables. Specifically, we work with a random $n+1-$dimensional vector $X = (X_1, X_2, \cdots, X_n, X_t)$. We assume that there is some unknown directed acyclic graph $G$ with nodes $(v_1, v_2, \cdots, v_n, v_t)$ which specifies the relationships between the variables in $X$, so that the node $v_i$ corresponds to the random variable $X_i$. The variables are related to each other by structural equations of the form:
\begin{align}
X_i &= \left(\beta_i^{\theta}\right)^\top \Pa_i+\eta_i = \sum_{j\in \Pa_i}\beta_{j, i}^{\theta}X_j + \eta_i^{\theta},\label{eq:SEM}\\
X_t &= \beta_t^\top \Pa_t + \eta_t = \sum_{j\in \Pa} \beta_{j, t}X_j+\eta_t.\label{eq:targetSEM}
\end{align}
Here $\eta_i$ models the exogenous variables in our system and $X_i$ denote the endogenous variables in the system. $\Pa_i$ denotes the set of parents of node $v_i$ in $G$. The weights of the linear SEM, when there is a directed edge between node $v_j$ and $v_i$ is represented by $\beta_{j, i}$. There is some target node $X_t$ that is considered to be special in that we would like to predict its value given the rest of the variables. We will denote the vector consisting of the rest of the variables as $X_{-t}$. We will denote by barred quantities the zero-padded versions of the above vectors, so $\overline{\beta}_t$, is such that $\beta_t^\top \Pa_t = \overline \beta_t^\top X_{-t}$.

\subsection{Interventions}
An important feature of Equation (\ref{eq:SEM}) is the dependence of $\beta^\theta_i$ on $\theta\in \Theta$, an element of some \textit{intervention} index set $\Theta$. $\Theta$ specifies a paramterized intervention family, i.e. a family of linear SEMs related to each other by interventions. Conceptually, in a single interventional environment, the mechanism by which causal parents of a variable influence it is changed. The changes in the mechanisms are reflected in the weights $\beta_{j,i}^{\theta}$ parameterized by $\theta$. Note that the conditional distribution of the target variable $X_t$ given its parents $\Pa_t$ is assumed to be the same even for different $\theta\in \Theta$. This is a property that we would like to exploit so that our task of predicting $X_t$ is robust to different interventions in $\Theta$, i.e. there exists at least one robust predictor for all environments. We begin with an observational distribution with weights given by $\beta^{\theta_\circ}$. There are two common types of interventions studied in the literature.

\textit{Atomic or Hard Interventions:} In an \textit{atomic} or \textit{hard} intervention $\theta$ at some node $v_i$, we assign a value to $X_i$. This corresponds to setting $\beta_{j, i} = 0$ for all $j\in \Pa_i$, and setting $\eta_i^{\theta} = a_i^\theta$.

\textit{Soft Interventions: }In a \textit{soft} intervention $\theta$, we modify the weights $\beta_i^\theta\ne \beta_i^{\theta_\circ}$ while keeping the noise the same $\eta_i^{\theta} = \eta_i^{\theta_\circ}$.

In this sense, the data is generated from some joint distribution $\Delta_\theta$, and for training purposes we see samples of the form $\{(\theta, x_{-t}, x_t)\}_{\theta\in \ThetaTr}$ for some $\ThetaTr\subset \Theta$. That is, we see data from a number of training data-sets indexed by intervention. 

\textbf{Distributions over Interventions and Sampling Model:} In our model, we consider a distribution $\D$ over $\Theta$ from which we assume that our training and test data-sets are drawn by sampling the intervention index $\theta$ independently and randomly from $\D$. We provide generalization bounds that work with high probability over the randomness of the choice of test distribution, rather than deterministically as has been done in prior works (as we will explore in more detail, providing deterministic bounds is not possible without additional assumptions). Formally, $\ThetaTr$ is a set of intervention index set samples drawn i.i.d from $\D$ over $\Theta$. The test intervention index $\theta_{\text{test}}$ is also drawn from $\D$ independently.
\begin{remark}
One setting in which random interventional distributions occur is in conditional sub-sampling (see for example, \cite{peters2016causal}, \cite{shah21Treatment}, \cite{shah22aFinding}). Here we are only provided with an observational dataset $(\textbf{x}_i)_{i\in [N]}$. We artificially generate interventions using some rule $f(\textbf{x}_S)\in [m]$, where $\textbf{x}_S$ denotes some subset of $\textbf{x}$ indexed by a subset $S$ of all nodes. Here the interventional datasets are taken to be $(N_j)_{j\in [m]}$ where $N_j = \{\textbf{x}_i: f((\textbf{x}_i)_S) = j\}.$
\end{remark}

\section{PAC-Invariant Representations}\label{section:PACIRM}
Motivated by Invariant Risk Minimization (IRM), we consider the search for a model that performs well under a variety of distributions in the hopes of attaining generalization guarantees. Consider a prediction function $f$ with loss $\R^\theta(f)$. Here the superscript $\theta$ represents the intervention index. For example, in least squares regression, $f\in \mathbb{R}^n$ is a vector, and the loss is given by $\R^\theta(f) = \E_{\theta}[\left (f^\top X_{-t}-X_t\right)^2]$ where the expectation is taken with respect to $\Delta_\theta$. We are interested in generalization guarantees for representations $\Phi$ that satisfy the property that 
\begin{equation}\label{eq:IRM}
f = \argmin_f \R^\theta(f \circ \Phi)\hspace{0.5cm} \forall\theta\in \ThetaTr,\end{equation} is invariant across environments. That is, the least squares solution on top of the representation is the same for every intervention. We will denote the set of all invariant representations over a class of SEMs $\Theta'$ by $\I(\Theta')$. We refer to the full model $f\circ\Phi$ when $\Phi$ is invariant as an \textit{invariant solution}. We use $\R^\theta_\Phi(f)$ to denote the loss for $f$ on top of representation $\Phi$, so $\R^\theta_\Phi = \R^\theta(f\circ\Phi)$. We refer to $f$ as the \textit{head} of the model.

Feature selection, a motivation for the study of invariant representations, corresponds to diagonal representations; henceforth, we focus on the class of representations {\color{black} that are feature selectors}, given by diagonal matrices.



\textbf{Invariance via gradients: }

Rather than work with the loss itself, we follow \cite{ahuja2020empirical}, \cite{arjovsky2019invariant} and use the \textit{gradient} of the loss instead (see Lemma \ref{lemma:lossisgrad}).
$$
\Phi\in \I(\ThetaTr)\iff\exists f \text{ s.t. }\nabla_f \R^\theta_\Phi(f) = 0\hspace{0.2cm}\forall \theta\in \ThetaTr.
$$


Unlike previous work, we do not further minimize the weighted empirical loss over training environments. We will instead only consider the generalizability of \textit{any} approximately invariant representation.

\textbf{Finite samples and $\epsilon-$approximately Invariant Representations: }Since we are working with finite samples,
we can no longer hope for \textit{exact} invariance across environments. We slightly change the definition of invariance to be about the gradient being close to $0$.
\begin{equation}\label{eq:approximateGradientIRM}
\Phi\in \I^\epsilon(\ThetaTr)\iff\exists f \text{ s.t. }\Vert \nabla \R^\theta_\Phi(f)\Vert_2\le \epsilon ~\forall \theta\in \ThetaTr.
\end{equation}

We refer to approximate versions of quantities using a superscript $\cdot^\epsilon$, and, given specific datasets, we refer to finite sample versions of these quantities using hats, so for instance the set of representations that are invariant for some set $\Theta'$ is denoted $\I(\Theta')$, the set of $\epsilon-$approximately invariant representations is given by $\I^\epsilon(\Theta')$, finally, given a particular dataset, the set of $\epsilon-$approximately invariant representations would be denoted $\hat \I^{\epsilon}(\Theta')$. Note that the finite sample quantities are random (from the randomness of the samples drawn from $\Delta_\theta$), while the $\epsilon-$approximate quantities are deterministic. The dataset of $N$ points is denoted in matrix form as $\textbf{X}$ with $\textbf{X}_{-t}$ and $\textbf{X}_t$ denoting the non-target matrix and target vector respectively. 

\textbf{Significance of linear representations:} A linear representation $\Phi$ affects invariance non-trivially by selecting an ``effective" column space for the regression, that is, in determining the column space of $\Phi^\top \textbf{X}_{-t}$. Because we are looking for a fixed (across environments) least-squares head on top of the representation, $\Phi$ can be further composed with any invertible linear map, only to be inverted at the head to obtain another invariant solution $(\Phi A, A^{-1}f)$ from an invariant solution $(\Phi, f)$. Because of this freedom, we can actually choose to work with some fixed head ahead of time, say $(1, 0, \cdots, 0)$. This is formalized in Lemma \ref{lemma:headdoesntmatter}. As we will see, a key construction is that of $\I^\epsilon_{\fc}(\theta)$, defined as those $\Phi$ such that the gradient is near-zero at $\fc$, that is, $$\Phi \in \I^\epsilon_{\fc}(\theta) \iff \Vert \nabla \R^\theta_\Phi(\fc)\Vert \le \epsilon,$$ for some specific $\fc$. Similarly, for some set of interventions $\Theta'$, we define $\I^\epsilon_{\fc}(\Theta') = \bigcap_{\theta\in \Theta'}\I^\epsilon_{\fc}(\theta)$. This allows us to look at invariance as a property of a representation and a \textit{single} intervention, rather than that over collection of interventions as in Equation \ref{eq:IRM}. Why this is important is discussed further in Remark \ref{remark:fixing_head}.

\subsection{Motivation for Probabilistic Guarantees}\label{section:probabilistic_guarantee}
Consider a distribution $\D$ over interventions. During training, we see samples from specific interventions drawn from this distribution $\ThetaTr$, and compute invariant representations for these interventions. Ideally, we would like to be able to say that an invariant solution computed in this way will generalize to all future interventions on the same DAG. However, this is not possible without further assumptions, as the following examples illustrate.

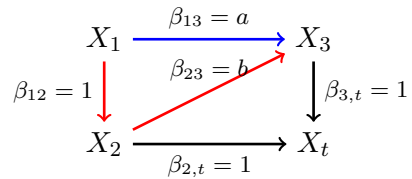
\begin{wrapfigure}{r}{0.4\textwidth}
\centering
\begin{tikzpicture}
\node[text centered] (x1) {$X_1$};
\node[below = 0.8 of x1, text centered] (x2) {$X_2$};
\node[right = 2 of x1, text centered] (x3) {$X_3$};
\node[below = 0.8 of x3, text centered] (y) {$X_t$};

\draw[->, red, line width= 1] (x1) -- node[left,black,font=\footnotesize]{$\beta_{12} = 1$} (x2);
\draw [->, blue, line width= 1] (x1) -- node[above,black,font=\footnotesize]{$\beta_{13} = a$} (x3);
\draw [->, red, line width= 1] (x2) -- node[above,black, font=\footnotesize]{$\beta_{23} = b$} (x3);
\draw [->, line width= 1] (x2) -- node[below,font=\footnotesize]{$\beta_{2, t} = 1$} (y);
\draw [->, line width= 1] (x3) -- node[right,font=\footnotesize]{$\beta_{3, t} = 1$} (y);
\end{tikzpicture}
\caption{Figure for Example \ref{example:faithfulness} showing that there could be more independencies in the data than apparent from the DAG alone.}\label{fig:faithfulness}
\end{wrapfigure}

\begin{example}[Faithfulness]\label{example:faithfulness}

Consider a linear SEM on four variables $X_1, X_2, X_3, X_t$ given by 
\begin{align*}
  X_1 &= \N(0, 1),&& X_2 = X_1 + \N(0, 1),\\
  X_3 &= aX_1 + bX_2 + \N(0, 1),&&X_t = X_2 + X_3 + \N(0, 1).
\end{align*}
depicted in Figure \ref{fig:faithfulness}. This system has at least the following two invariant representations of $X = (X_1, X_2, X_3)$ if $a = -b$ for every intervention: $(X_2, X_3)$ and $(X_2)$. However, only the first continues to be invariant when $a\ne -b$. In other words, given an arbitrary number of training environments, each of which satisfies $a = b$, we might decide that $(X_2)$ is an invariant classifier. However, this fails to be invariant once we include a test intervention for which $a\ne -b$. This happens because the joint distribution is not \textit{faithful} to the DAG provided. In short, this means that the conditional independencies indicated by the DAG are not the only ones found in the data. In our case, the effect of $X_1$ on $X_3$ through the blue path and the red path exactly cancel in all training interventions.
\end{example}
\begin{example}[Degenerate Interventions]\label{example:smallPerturbation}
Consider the degenerate situation in which each of the interventions is actually the same. The Empirical Risk Minimization (ERM) solution itself is invariant, however, the ERM solution comes with no generalization guarantees to other interventions. It is clear that some ``diversity" condition among the training environments is necessary to get generalization guarantees. 
\end{example}
The key idea is that in such situations one might say that these issues (faithfulness violations as in Example \ref{example:faithfulness} or degeneracies as in \ref{example:smallPerturbation}) will likely continue to manifest in future distributions. In particular, if we assume that both training and test intervention come from a single distribution over interventions, can we at least say that with high probability over a test intervention also drawn from $\D$, we will generalize to that intervention? 
%
More formally, we ask the following:

\textit{How big should $\ThetaTr$ be so that we can say that for $\theta\sim\D$, $\I_{\fc}^\epsilon(\ThetaTr) \in \I_{\fc}^\epsilon(\theta)$ with high probability?}

In other words, we would like invariance on our training interventions to \textit{certify} invariance over a future interventions with high probability. To further clarify the nature of the guarantee we are considering, we look at the following simple example.

\begin{wrapfigure}{r}{0.5\textwidth}
\centering
\begin{tikzpicture}
\node[text centered] (x1) {$X_1$};
\node[right = 1 of x1, text centered] (x2) {$X_2$};
\node[right = 1 of x2, text centered] (x3) {$X_{n-1}$};
\node[right = 1 of x3, text centered] (x4) {$X_{n}$};
\node[below = 0.5 of x4, text centered] (y) {$X_t$};

\draw[->, line width= 1] (x1) -- (x2);
\draw[->, dashed, line width= 1] (x2) -- (x3);
\draw[->, line width= 1] (x3) -- (x4);
\draw[->, line width= 1] (x1) -- (y);
\draw[->, line width= 1] (x4) -- (y);

\end{tikzpicture}
\caption{DAG for intervention in Example \ref{example:triangleDAG}}\label{fig:probabilistic_guarantee}
\end{wrapfigure}
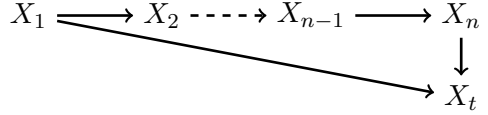
\begin{example}
\label{example:triangleDAG}
Consider the DAG given in Figure (\ref{fig:probabilistic_guarantee}). Consider the following distribution over interventions: We set $X_1=1$. Independently and with probability $1/2$ for each node that is not $X_1$ or $X_t$, we assign its value to be $0$. Else we set its value equal to its parent. The only invariant representations in this case are $(X_n)$ and $(X_1, X_n)$. Now consider any training set of $m$ interventions. If $m \ll 2^n$, we expect to never see the intervention in which \textit{none} of the edges are disconnected (i.e., an environment with no intervention on any of $X_2, X_3, ..,, X_{n-1}$), in which case $(X_1)$ also appears to be invariant over every observed intervention. However, it fails to be invariant on the disconnected intervention described above,
which occurs with an extremely low probability of $1/2^{n-1}$.



Our result instead gives a probabilistic guarantee that says that with a certain probability over the randomness from which we are drawing interventions, any representation that appears to be invariant over the training set will also be invariant when the set is expanded to include another i.i.d. intervention (the \textit{test} intervention). As demonstrated in this example, we may need $\Theta(2^n)$ interventions to get deterministic guarantees while our $\text{poly}(n)$ result highlights the benefit of getting a probabilistic guarantee in this setting.
\end{example}

\subsection{PAC Formulation}
In this section, we show that the question of generalization of $\epsilon-$invariant representations introduced in the previous section can be rewritten as a question of PAC generalization. 
\subsubsection{VC dimension and PAC generalization}
Let $X\subset \mathbb{R}^d$ be a subset of points. A function class $\F$ is said to \textit{shatter} $X$ if every binary assignment to $X$ is realized by some element of $\F$, that is, for any $\sigma:X\to \{\pm 1\}$, we have that there is some $f\in \F$ such that $f(x) = \sigma(x)$ for all $x\in X$. The VC dimension of a function class is defined as the size of the largest set that is shattered by it.
The following classical result from PAC learning theory allows us to determine how many interventions are needed to generalize.

\begin{lemma}[\cite{shalev-shwartz_ben-david_2014}]\label{lemma:PAC}
Consider a class $\F$ of functions from $\mathcal{X}$ to $\{\pm 1\}$ of VC dimension $d$, and a distribution $\tilde{\D}$ over $\mathcal{X}\times\{\pm1\}$. In the realizable setting, that is, when there exists $f\in \F$ such that $\Pr_{(X, Y)\sim \tilde{\D}}[f(X)\ne Y] = 0$, given $m = O(\frac{d+\log \frac{1}{\delta_1}}{\delta_2})$ samples $(x_i, y_i)_{i = 1}^m$ and $\tilde f$ such that $\tilde f(x_i) = y_i$ for all $i$, with probability at least $1-\delta_1$, $\tilde f$ satisfies $\Pr_{(X, Y)\sim \tilde{\D}}[\tilde f(X)\ne Y] < \delta_2$.
\end{lemma}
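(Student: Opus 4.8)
The statement is the classical VC generalization bound in the realizable setting, so the plan is to reproduce the standard double-sampling (symmetrization) argument; alternatively one can simply invoke \cite[Theorem~6.8]{shalev-shwartz_ben-david_2014}, but here is the outline I would follow. Fix a consistent hypothesis $f^\star\in\F$, let $S=((x_i,y_i))_{i=1}^m$ be the i.i.d.\ sample from $\tilde\D$, and for $f\in\F$ write $\mathrm{err}(f)=\Pr_{(X,Y)\sim\tilde\D}[f(X)\ne Y]$. It suffices to bound the probability of the bad event $B=\{\exists f\in\F:\ f\text{ agrees with }S\text{ and }\mathrm{err}(f)\ge\delta_2\}$ by $\delta_1$, since off $B$ every classifier consistent with $S$ — in particular the learner's output $\tilde f$ — has error below $\delta_2$.

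First I would symmetrize: introduce an independent ghost sample $S'=((x_i',y_i'))_{i=1}^m$ and let $B'$ be the event that some $f\in\F$ agrees with $S$ yet disagrees with $S'$ on at least $\delta_2 m/2$ points. A one-sided Chernoff (or even Chebyshev) estimate shows that, conditionally on a fixed $f$ with $\mathrm{err}(f)\ge\delta_2$, the number of disagreements on $S'$ is at least $\delta_2 m/2$ with probability at least $1/2$ provided $m\gtrsim 1/\delta_2$; this yields $\Pr[B]\le 2\Pr[B']$. Next I would restrict attention to the $2m$ points of $S\cup S'$: the event $B'$ depends only on the dichotomies $\F$ induces on these points, and by Sauer--Shelah there are at most $\sum_{i\le d}\binom{2m}{i}\le(2m+1)^d$ of them, so a union bound over that many ``effective'' hypotheses is enough. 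Conditioning on the unordered multiset of $2m$ points, the pair $(S,S')$ is a uniformly random partition into two halves (equivalently, a uniform random permutation swapping $x_i\leftrightarrow x_i'$), so for a fixed effective hypothesis with $k\ge\delta_2 m/2$ disagreements among the $2m$ points, the probability that all of them land in the $S'$-half is at most $2^{-k}\le 2^{-\delta_2 m/2}$. Combining,
\[
\Pr[B]\le 2\Pr[B']\le 2(2m+1)^d\,2^{-\delta_2 m/2},
\]
and forcing the right-hand side to be $\le\delta_1$ and solving for $m$ gives $m=O\!\big(\tfrac{d\log(1/\delta_2)+\log(1/\delta_1)}{\delta_2}\big)$, which already implies the displayed $O(\cdot)$ bound; sharpening it to $m=O\!\big(\tfrac{d+\log(1/\delta_1)}{\delta_2}\big)$ uses the standard refinement that removes the $\log(1/\delta_2)$ factor (a chaining / one-inclusion-graph argument, as in \cite{shalev-shwartz_ben-david_2014}).

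The one genuinely delicate step is making the union bound legitimate when $\F$ is infinite — that is exactly what the symmetrization in the first step buys us, and the $1/2$-probability estimate there is what forces the $1/\delta_2$ scaling of the sample size. The remaining ingredients (Sauer--Shelah, the permutation computation, and inverting the tail bound) are routine.
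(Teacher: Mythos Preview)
The paper does not prove this lemma at all; it simply quotes it as a known result with the citation to \cite{shalev-shwartz_ben-david_2014} and moves on. Your outline is the standard double-sampling/symmetrization proof from that reference and is correct, including your honest flag that the basic argument yields an extra $\log(1/\delta_2)$ factor which must then be removed by a separate refinement (chaining or the one-inclusion graph). Since the paper offers no argument of its own, there is nothing to compare against beyond noting that your write-up supplies strictly more than the paper does.
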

We will need the following known VC dimension.
\begin{lemma}[\cite{shalev-shwartz_ben-david_2014}]\label{lemma:halfspaceVC}
The VC dimension of halfspaces in $\mathbb{R}^d$, that is, the function class 
$\F = \big\{\mathbbm{1}\{a^\top x < b\}: a\in \mathbb{R}^d, b\in \mathbb{R}\big\}$ is $d+1$. 
\end{lemma}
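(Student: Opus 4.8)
The statement to prove is Lemma \ref{lemma:halfspaceVC}: the VC dimension of halfspaces in $\mathbb{R}^d$, i.e. the function class $\F = \{\mathbbm{1}\{a^\top x < b\}: a\in\mathbb{R}^d, b\in\mathbb{R}\}$, equals $d+1$.

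\medskip

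The plan is to prove the two inequalities $\mathrm{VCdim}(\F) \ge d+1$ and $\mathrm{VCdim}(\F) \le d+1$ separately. For the lower bound, I would exhibit an explicit set of $d+1$ points in $\mathbb{R}^d$ that is shattered by $\F$. A natural choice is the origin together with the $d$ standard basis vectors, $S = \{0, e_1, \dots, e_d\}$. Given any target labeling $\sigma: S \to \{\pm 1\}$, I would show one can pick $a \in \mathbb{R}^d$ and $b \in \mathbb{R}$ realizing it: roughly, set $b$ to separate the sign at $0$ from the rest (e.g. choose the sign of $b$ according to $\sigma(0)$ using the convention that $\mathbbm{1}\{a^\top 0 < b\} = \mathbbm{1}\{0 < b\}$), and then set each coordinate $a_i$ large enough in the appropriate direction so that $\mathbbm{1}\{a_i < b\}$ matches $\sigma(e_i)$; since the coordinates act independently this is always possible. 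This shows $S$ is shattered, hence $\mathrm{VCdim}(\F) \ge d+1$.

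\medskip

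For the upper bound, I would show that no set of $d+2$ points in $\mathbb{R}^d$ can be shattered. The key tool is Radon's theorem: any set of $d+2$ points in $\mathbb{R}^d$ can be partitioned into two disjoint subsets $S_1, S_2$ whose convex hulls intersect. Suppose for contradiction that such a set $T$ (with $|T| = d+2$) is shattered; then there is a halfspace realizing the labeling that assigns $+1$ to $S_1$ and $-1$ to $S_2$. A halfspace $\{x : a^\top x < b\}$ is convex, and its complement is also convex, so if all of $S_1$ lies in the open halfspace and all of $S_2$ lies in its complement, then $\mathrm{conv}(S_1)$ lies in the open halfspace and $\mathrm{conv}(S_2)$ lies in the closed complementary halfspace $\{x : a^\top x \ge b\}$ — these are disjoint, contradicting the fact that the two convex hulls share a point. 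Hence no $(d+2)$-point set is shattered, so $\mathrm{VCdim}(\F) \le d+1$. Combining the two bounds gives $\mathrm{VCdim}(\F) = d+1$.

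\medskip

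The main obstacle, such as it is, is a minor bookkeeping point rather than a conceptual one: the strict inequality in the definition of $\F$ (open halfspaces) versus the non-strict inequality one gets from Radon's theorem on the complement side. One must be careful that $\mathrm{conv}(S_1) \subseteq \{x: a^\top x < b\}$ and $\mathrm{conv}(S_2) \subseteq \{x : a^\top x \ge b\}$ really are disjoint — which they are, since no point can simultaneously satisfy $a^\top x < b$ and $a^\top x \ge b$ — so the Radon point lies in neither, a contradiction. For the lower bound, the analogous care is needed at the point $0$, where membership in the halfspace depends only on $\mathrm{sign}(b)$; handling that coordinate first and then the basis vectors independently sidesteps any difficulty. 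Since this is a standard textbook result (it is cited from \cite{shalev-shwartz_ben-david_2014}), I would keep the write-up brief, invoking Radon's theorem as a known fact.
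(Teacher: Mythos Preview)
Your argument is correct and is precisely the standard proof of this classical fact: exhibit $\{0,e_1,\dots,e_d\}$ as a shattered set for the lower bound, and invoke Radon's theorem for the upper bound. The bookkeeping you flag about strict versus non-strict inequalities is handled correctly.

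That said, there is nothing to compare against: the paper does not prove this lemma at all. It is stated as a citation to \cite{shalev-shwartz_ben-david_2014} and used as a black box in the VC-dimension bounds of Theorems~\ref{thm:general}, \ref{thm:hard}, \ref{thm:soft}, and \ref{appendix_thm:indirect}. Your write-up is therefore more than what the paper provides; if you want to match the paper's treatment, you would simply cite the textbook and move on.
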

\subsubsection{PAC Invariance}
We will use PAC learning theory to study generalization for representations. For that we need to rephrase the invariance problem to being one of the generalization of binary classifiers. The proof is deferred to the Appendix, Lemma \ref{appendix_lemma:invariance_to_classification}.

\begin{lemma}
There is a function class $\F$ mapping interventions $\theta\in \Theta$ to $\{0, 1\}$, such that
$$\Phi\in \I_{\fc}^\epsilon(\ThetaTr) \iff \exists f_\Phi^\epsilon \in \mathcal{F} \text{ such that } f^\epsilon_\Phi(\theta) = 1 ~\forall \theta\in \ThetaTr.$$ 
\end{lemma}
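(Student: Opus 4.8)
The plan is to let $f_\Phi^\epsilon$ be the indicator of the single-intervention $\epsilon$-invariance condition, $f_\Phi^\epsilon(\theta) := \mathbbm{1}\{\Vert \nabla \R^\theta_\Phi(\fc)\Vert_2 \le \epsilon\}$, and take $\F := \{f_\Phi^\epsilon : \Phi \text{ a feature-selector representation}\}$; this class does not depend on $\ThetaTr$. With this definition the stated equivalence is immediate from the definitions in Section~\ref{section:PACIRM}: since $\I_{\fc}^\epsilon(\ThetaTr) = \bigcap_{\theta\in\ThetaTr}\I_{\fc}^\epsilon(\theta)$ and $\Phi \in \I_{\fc}^\epsilon(\theta)\iff \Vert \nabla \R^\theta_\Phi(\fc)\Vert_2 \le \epsilon \iff f_\Phi^\epsilon(\theta)=1$, we get $\Phi\in\I_{\fc}^\epsilon(\ThetaTr)$ iff $f_\Phi^\epsilon(\theta)=1$ for every $\theta\in\ThetaTr$. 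The substance of the lemma — and the reason this reformulation is useful for the PAC argument to come — is to exhibit $\F$ as a subclass of \emph{halfspaces in a fixed feature space depending only on $\theta$}, so that Lemmas~\ref{lemma:PAC} and~\ref{lemma:halfspaceVC} can later be invoked.

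To do that I would first compute the gradient explicitly. For $\R^\theta_\Phi(f)=\E_\theta[(f^\top\Phi^\top X_{-t}-X_t)^2]$ one gets $\nabla_f\R^\theta_\Phi(\fc)=2\Phi^\top\big(S^\theta_{-t}\Phi\fc - S^\theta_{-t,t}\big)$, where $S^\theta := \E_\theta[XX^\top]$ is the matrix of second moments of $X$ under $\Delta_\theta$, and $S^\theta_{-t}$, $S^\theta_{-t,t}$ are the non-target block and target cross-column. The key observation is that for a \emph{fixed} $\Phi$ this gradient is a \emph{linear} function of the entries of $S^\theta$: block extraction and left/right multiplication by $\Phi$ are linear maps with $\theta$-independent ($\Phi$-dependent) coefficients, so $\nabla_f\R^\theta_\Phi(\fc) = M_\Phi\,\mathrm{vec}(S^\theta)$ for a matrix $M_\Phi$ determined by $\Phi$ alone. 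All of the (possibly complicated, rational) dependence of $S^\theta$ on the underlying intervention parameters is absorbed into the map $\theta\mapsto S^\theta$, which is well defined from the SEM family and does not depend on $\Phi$.

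Next I would linearize the Euclidean-ball constraint. Writing $\Vert \nabla_f\R^\theta_\Phi(\fc)\Vert_2^2 = \mathrm{vec}(S^\theta)^\top M_\Phi^\top M_\Phi\,\mathrm{vec}(S^\theta) = \langle Q_\Phi,\ \mathrm{vec}(S^\theta)\,\mathrm{vec}(S^\theta)^\top\rangle$ with $Q_\Phi := M_\Phi^\top M_\Phi$, I define the fixed feature map $\phi(\theta) := \mathrm{vec}\!\big(\mathrm{vec}(S^\theta)\,\mathrm{vec}(S^\theta)^\top\big)\in\mathbb{R}^D$ (the degree-two monomials in the entries of $S^\theta$), so that $\Vert \nabla_f\R^\theta_\Phi(\fc)\Vert_2^2 = \langle q_\Phi,\phi(\theta)\rangle$ with $q_\Phi := \mathrm{vec}(Q_\Phi)$. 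Hence $f_\Phi^\epsilon(\theta)=\mathbbm{1}\{\langle q_\Phi,\phi(\theta)\rangle\le\epsilon^2\}$ and $\F$ is contained in the class of halfspaces over $\phi(\Theta)\subseteq\mathbb{R}^D$. Since $S^\theta$ is $(n+1)\times(n+1)$ and symmetric, $D=O(n^4)$, so Lemma~\ref{lemma:halfspaceVC} bounds $\mathrm{VC}(\F)\le D+1 = O(n^4)$ — which, fed into Lemma~\ref{lemma:PAC}, is where the $O(n^4/\delta')$ interventional complexity in Section~\ref{section:contributions} comes from.

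I do not expect a real obstacle, as the statement is essentially a change of viewpoint; the one step that needs care is reducing $\Vert\nabla_f\R^\theta_\Phi(\fc)\Vert_2\le\epsilon$ to a single linear threshold. One cannot treat the ball as a halfspace in the gradient coordinates directly, because the gradient depends on $\Phi$ as well as $\theta$ and the PAC argument needs a feature map in $\theta$ \emph{alone} that works uniformly over all $\Phi$; the quadratic monomial lift above is precisely what resolves this, at the cost of the $O(n^4)$ blow-up. Minor checks that I would include are that the block-selection and $\Phi$-multiplication maps are indeed linear in $S^\theta$, and that restricting to diagonal $\Phi$ (feature selectors, as fixed in Section~\ref{section:PACIRM}) only shrinks $\F$, which can only improve the VC bound.
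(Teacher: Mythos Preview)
Your construction of $\F$ as the class of indicators $f_\Phi^\epsilon(\theta)=\Ind\{\Vert\nabla\R^\theta_\Phi(\fc)\Vert_2\le\epsilon\}$ is exactly what the paper does; the only cosmetic difference is that the paper first invokes Lemma~\ref{lemma:gradientexpanded} to write the gradient explicitly as $2\Phi^\top\Sigma_\theta\Phi\fc-2\Phi^\top(\Sigma_\theta\beta-J(I-B_\theta)^{-1}e_t)$ before taking the indicator, whereas you keep the gradient abstract. For the lemma as stated the two proofs are the same.

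Where you genuinely diverge is in the extra material anticipating the VC bound. The paper (Lemma~\ref{lemma:general} and Theorem~\ref{thm:general}) factors the gradient as $U_\theta\, r(\Phi)^{(\dot 2)}$, i.e.\ linear in degree-$\le 2$ monomials of the \emph{representation} with $\theta$-dependent coefficients, then squares and uses diagonality of $\Phi$ to land in $\mathbb{R}^{(n+1)^4}$. You instead factor it as $M_\Phi\,\mathrm{vec}(S^\theta)$, i.e.\ linear in the entries of the \emph{second-moment matrix} with $\Phi$-dependent coefficients, and lift to degree-two monomials in $S^\theta$. Both are correct bilinear decompositions of the same quadratic form and both yield an $O(n^4)$ halfspace embedding; yours is arguably cleaner for the general case since it never unpacks the SEM structure and does not rely on $\Phi$ being diagonal. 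The trade-off is that the paper's $\Phi$-side parameterization, combined with the expansion of $\Sigma_\theta$ via Lemma~\ref{lemma:inverseexpansion}, is what later enables the sharper $k^4$ and $d^{4k}$ bounds for atomic and soft interventions (Theorems~\ref{thm:hard},~\ref{thm:soft}); your moment-side embedding would not directly expose that structure.
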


In summary, if we can bound the VC dimension of $\F$, we can specify interventional complexity guarantees. The proof is deferred to the Appendix, Corollary \ref{appendix_cor:VCtoPAC}
\begin{corollary}\label{cor:VCtoPAC}
If the VC dimension of $\F$ is bounded by $d$, then given at least $O(\frac{d+\log\frac{1}{\delta}}{\delta'})$ training interventions, if $\I^\epsilon(\Theta)\ne \{\}$, we have with probability $1-\delta$ over the set of training interventions, 
$$\Pr_{\theta \sim \D} (\I^\epsilon_{\fc}(\ThetaTr)\not\subset \I^\epsilon_{\fc}(\theta)) \leq \delta'. $$
\end{corollary}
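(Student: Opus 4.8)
The plan is to recognize the statement as an instance of realizable PAC learning over the function class $\F$ produced by the reduction lemma above, and then apply Lemma \ref{lemma:PAC}. First I would pass from $\I^\epsilon$ to $\I^\epsilon_{\fc}$: by Lemma \ref{lemma:headdoesntmatter} (composing a representation with an invertible map and inverting it at the head) the assumption $\I^\epsilon(\Theta)\neq\{\}$ yields a representation $\Phi^\star$ with $\Phi^\star\in\I^\epsilon_{\fc}(\theta)$ for every $\theta\in\Theta$. Translating through the reduction lemma, $f^\epsilon_{\Phi^\star}\in\F$ satisfies $f^\epsilon_{\Phi^\star}(\theta)=1$ for all $\theta$. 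Let $\tilde{\D}$ on $\Theta\times\{0,1\}$ draw $\theta\sim\D$ and assign the constant label $1$; then $f^\epsilon_{\Phi^\star}$ has zero risk under $\tilde{\D}$, so we are exactly in the realizable setting of Lemma \ref{lemma:PAC}, with the VC dimension of $\F$ bounded by the assumed $d$.

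Next I would invoke Lemma \ref{lemma:PAC} with $\delta_1=\delta$ and $\delta_2=\delta'$: drawing $m=O\!\big(\frac{d+\log(1/\delta)}{\delta'}\big)$ interventions $\ThetaTr$ i.i.d.\ from $\D$, with probability at least $1-\delta$ over $\ThetaTr$ every $f\in\F$ consistent with $\ThetaTr$ (i.e.\ $f\equiv 1$ on $\ThetaTr$) has $\Pr_{\theta\sim\D}[f(\theta)\neq 1]\le\delta'$. Now I would translate back: by the reduction lemma, $\Phi\in\I^\epsilon_{\fc}(\ThetaTr)$ exactly when $f^\epsilon_\Phi$ is consistent with the all-ones labeling of $\ThetaTr$, and $\Phi\notin\I^\epsilon_{\fc}(\theta)$ exactly when $f^\epsilon_\Phi(\theta)=0$. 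Hence $\I^\epsilon_{\fc}(\ThetaTr)\not\subset\I^\epsilon_{\fc}(\theta)$ is precisely the event that some $\ThetaTr$-consistent hypothesis in $\F$ errs at $\theta$, and on the good $1-\delta$ event this has $\D$-probability at most $\delta'$, which is the claim; matching $O(\cdot)$ constants to the $\frac{d+\log(1/\delta)}{\delta'}$ rate is the standard VC bookkeeping already inside Lemma \ref{lemma:PAC}.

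The step that needs the most care is this final translation, because the bad test set $\{\theta:\I^\epsilon_{\fc}(\ThetaTr)\not\subset\I^\epsilon_{\fc}(\theta)\}$ is a union, over \emph{all} representations $\Phi$ surviving on $\ThetaTr$, of the non-invariance regions $\{\theta:f^\epsilon_\Phi(\theta)=0\}$, and that family of $\Phi$ is generally infinite, so a naive union bound is useless. One must therefore lean on the uniform-convergence content of Lemma \ref{lemma:PAC} — that \emph{every} sample-consistent hypothesis generalizes, proved via the growth-function/double-sampling argument that only ever sees the finitely many labelings $\F$ realizes on a finite set of interventions — rather than arguing per representation. Checking that the reduction lemma delivers a class $\F$ for which ``VC dimension $\le d$'' is genuinely the only property needed is the crux; the remainder is the routine realizable PAC computation.
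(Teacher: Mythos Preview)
Your proposal is correct and follows essentially the same route as the paper: define $\tilde{\D}$ on $\Theta\times\{0,1\}$ with the constant label $1$, verify realizability from the nonemptiness hypothesis, invoke Lemma~\ref{lemma:PAC} with $\delta_1=\delta$, $\delta_2=\delta'$, and translate back through the reduction lemma. You are in fact more careful than the paper in two places---explicitly passing from $\I^\epsilon$ to $\I^\epsilon_{\fc}$ via Lemma~\ref{lemma:headdoesntmatter}, and flagging the union-over-$\Phi$ issue in reading the set-containment event---whereas the paper's own proof simply records the per-hypothesis PAC conclusion and stops.
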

Finally, there is at least one truly invariant representation, as demonstrated in the following Lemma.

\begin{lemma}\label{lemma:realizable}
There exists at least one invariant representation.
\end{lemma}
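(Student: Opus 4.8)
The plan is to exhibit the feature selector that retains exactly the causal parents of the target node, paired with the structural coefficient vector of the target as its head, and to verify invariance directly from the gradient characterization. Let $\Phi^*$ be the diagonal $0/1$ matrix acting on $X_{-t}$ whose support is $\Pa_t$, so that $\Phi^{*\top}X_{-t}$ is $X_{-t}$ with every non-parent coordinate zeroed out, and let $f^* = \overline{\beta}_t$ be the zero-padded coefficient vector from \eqref{eq:targetSEM}, which is itself supported on $\Pa_t$. Since $\Phi^*$ is diagonal it belongs to the class of representations we consider, so it suffices to show $\nabla_f \R^\theta_{\Phi^*}(f^*) = 0$ for every $\theta \in \Theta$; this gives $\Phi^* \in \I(\Theta)$, hence $\I^\epsilon(\Theta) \neq \{\}$ for every $\epsilon \ge 0$, which is the form in which the downstream results (e.g.\ Corollary \ref{cor:VCtoPAC}) need it.

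First I would reduce the residual. For the least-squares loss, $\nabla_f \R^\theta_{\Phi^*}(f^*) = 2\,\E_\theta\big[\Phi^{*\top}X_{-t}\,(f^{*\top}\Phi^{*\top}X_{-t} - X_t)\big]$. Because $f^* = \overline{\beta}_t$ already vanishes off $\Pa_t$, pre-multiplying $X_{-t}$ by $\Phi^{*\top}$ changes nothing inside that inner product: $f^{*\top}\Phi^{*\top}X_{-t} = \overline{\beta}_t^\top X_{-t} = \beta_t^\top \Pa_t$. Now I invoke the target's structural equation $X_t = \beta_t^\top \Pa_t + \eta_t$ from \eqref{eq:targetSEM}; by assumption the mechanism generating $X_t$ from its parents — both $\beta_t$ and $\eta_t$ — is identical in every environment, so there is no hidden $\theta$-dependence here. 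Hence the residual collapses to $f^{*\top}\Phi^{*\top}X_{-t} - X_t = -\eta_t$, and the gradient becomes $-2\,\E_\theta[(\Phi^{*\top}X_{-t})\,\eta_t]$.

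The crux is that this last expectation vanishes. Coordinatewise it is $0$ on the non-parent coordinates (killed by $\Phi^{*\top}$) and equal to $\E_\theta[X_j\,\eta_t]$ for $j \in \Pa_t$. Since the SEM is acyclic with mutually independent exogenous variables, and since no intervention alters the mechanism at $v_t$, each parent $X_j$ of $v_t$ is a function of the exogenous variables other than $\eta_t$, so $X_j \indep \eta_t$ and $\E_\theta[X_j\,\eta_t] = \E_\theta[X_j]\,\E[\eta_t] = 0$ by zero-mean (or centered) noise. Therefore $\nabla_f \R^\theta_{\Phi^*}(f^*) = 0$ for every $\theta$, so $(\Phi^*,\overline{\beta}_t)$ is an invariant solution and the lemma follows. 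If one wants the statement phrased through the fixed head $\fc$, one closes by applying Lemma \ref{lemma:headdoesntmatter} to pass from $(\Phi^*,\overline{\beta}_t)$ to $(\Phi^* A, A^{-1}\overline{\beta}_t)$ with $A$ chosen so that the head equals $\fc$.

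I do not expect a real obstacle here; the one point that genuinely matters is that we are using the \emph{gradient} notion of invariance rather than the $\argmin$ notion. That is precisely what lets $\overline{\beta}_t$ act as a common zero of $\nabla_f \R^\theta_{\Phi^*}$ simultaneously across all environments, with no invertibility or general-position hypothesis on the interventional covariances $\E_\theta[(\Phi^{*\top}X_{-t})(\Phi^{*\top}X_{-t})^\top]$ — which may well be singular for some $\theta$, yet causes no trouble.
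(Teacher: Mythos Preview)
Your proof is correct and follows essentially the same idea as the paper: exhibit a diagonal representation supported on $\Pa_t$ so that the residual collapses to $-\eta_t$, which is orthogonal to the parents. The paper's one-line proof takes $\Phi=\operatorname{diag}(\overline\beta_t)$ (implicitly with an all-ones head on $\Pa_t$) rather than your $0/1$ selector paired with $\overline\beta_t$, but these differ only by a diagonal rescaling absorbed into the head; you have simply supplied the verification the paper leaves implicit. One small caveat on your final aside: invoking Lemma~\ref{lemma:headdoesntmatter} to pass to the fixed head $\fc$ generally produces $\Phi^*A$, which need not be diagonal, so that conversion leaves the feature-selector class; this is harmless here since Lemma~\ref{lemma:realizable} as stated concerns $\I(\Theta)$ rather than $\I_{\fc}(\Theta)$.
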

\begin{proof}
The representation $\Phi = \text{diag}(\overline \beta_t)$ is invariant. That is, the diagonal matrix with $\beta_t$ on the diagonal in indices corresponding to $\Pa_t$.
\end{proof}

\begin{remark}\label{remark:fixing_head}[Fixing a head is important for PAC learning] 
Considering a fixed head $\fc$ allows us to certify invariance locally, that is, looking at only a single intervention at a time. This is in contrast with the previous characterization, Equation (\ref{eq:IRM}), in which we simply ask for the best head on top of a representation to be the same for all interventions - a condition that we can only check for by considering all interventions simultaneously. 
\end{remark}

\subsection{The VC dimension of certain classes of interventional distributions}

 In this section we derive upper bounds on the number of interventions required to certify approximate invariance during training for specific instances of intervention parameterizations with infinite samples. For arbitrary interventions, we show that $O(n^4)$ interventions suffices. This result can be extended to other observation models; please see Appendix \ref{section:intervention_models_appendix} for additional discussion. Importantly, we extend to an indirect observation model in which we observe a linear transformation of the underlying linear SEM and notably, this model allows presence of latent variables. This is described in Theorem \ref{appendix_thm:indirect}. 
Further, we show that for atomic interventions on some fixed set of $k$ nodes, $O(k^4)$ training interventions suffices. For soft interventions on $k$ nodes, assuming that each node has in-degree bounded by $d$, we show that $O(d^{4k})$ interventions suffices. In the next section, we will show how to extend these results to the finite sample setting.

\subsubsection{General Interventions}
For this section we approach the task of bounding the VC dimension from the perspective of the complexity of the representation space. 
The class of interventions we consider are any interventions that lead to joint distributions over the variables that leave fixed the conditional distribution for the target variable given its parents.
%

\begin{theorem}\label{thm:general}

Suppose that we are given $O(\frac{n^{4}+\log\frac{1}{\delta}}{\delta'})$ interventions drawn independently from distribution $\D$ over the intervention index set $\Theta$. Then with probability at least $1-\delta$ over the randomness in $\ThetaTr$, the following statement holds: 
$$\Pr_{\theta \sim \D} (\I^\epsilon_{\fc}(\ThetaTr)\notin \I^\epsilon_{\fc}(\theta)) \leq \delta'. $$


In other words, an $\epsilon$- approximate invariant representation on the training environments generalizes with high probability to the test environment.
\end{theorem}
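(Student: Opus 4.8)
The plan is to reduce the statement to a single VC-dimension estimate. By Corollary~\ref{cor:VCtoPAC} it is enough to bound the VC dimension of the classifier class $\F$ from the preceding lemma by $O(n^4)$, since realizability (that $\I^\epsilon(\Theta)\neq\{\}$) is supplied by Lemma~\ref{lemma:realizable} together with the head-normalization of Lemma~\ref{lemma:headdoesntmatter}. So the whole proof becomes a dimension count.

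First I would make the membership test $\Phi\in\I^\epsilon_{\fc}(\theta)$ explicit. Using Lemma~\ref{lemma:lossisgrad} and the closed form of the least-squares gradient, $\Phi\in\I^\epsilon_{\fc}(\theta)$ holds iff $\bigl\|\Phi^\top(\Sigma^\theta\Phi\fc-c^\theta)\bigr\|_2^2\le\epsilon^2$, where $\Sigma^\theta:=\E_\theta[X_{-t}X_{-t}^\top]$ and $c^\theta:=\E_\theta[X_{-t}X_t]$. The two facts I want from this form are: (i) an intervention $\theta$ enters only through the pair $(\Sigma^\theta,c^\theta)$, which ranges over a space of dimension $D=\binom{n+1}{2}+n=O(n^2)$; and (ii) for each fixed $\Phi$ the left-hand side is a polynomial of degree at most two in the entries of $(\Sigma^\theta,c^\theta)$ — writing $w=\Phi\fc$ and $v^\theta=\Sigma^\theta w-c^\theta$, which is affine in $(\Sigma^\theta,c^\theta)$, the test is the quadratic form $(v^\theta)^\top\Phi\Phi^\top v^\theta\le\epsilon^2$.

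Next I would linearize. Let $\psi\colon\mathbb{R}^{D}\to\mathbb{R}^{D'}$ send a point to the vector of all its monomials of degree at most two, so $D'=\binom{D+2}{2}=O(n^4)$; then $\theta\mapsto\psi(\Sigma^\theta,c^\theta)$ is a single map into $\mathbb{R}^{D'}$, independent of $\Phi$, and for each $\Phi$ the set $\{\theta:\Phi\in\I^\epsilon_{\fc}(\theta)\}$ is the preimage under this map of a halfspace $\{z:\langle a_\Phi,z\rangle\le\epsilon^2\}$. Since composing a hypothesis class with a fixed feature map cannot increase VC dimension, Lemma~\ref{lemma:halfspaceVC} gives $\mathrm{VC}(\F)\le D'+1=O(n^4)$, and substituting $d=O(n^4)$ into Corollary~\ref{cor:VCtoPAC} finishes the proof.

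The step that needs genuine care is the first one: confirming that, after normalizing the head to $\fc$, the gradient test depends on $\theta$ exactly through the $O(n^2)$ second-moment statistics $(\Sigma^\theta,c^\theta)$ and is exactly quadratic in them — no higher degree, no richer summary of $\theta$. The rest (the monomial count, the pullback bound on VC dimension, the appeal to the PAC corollary) is routine. This is also the hook for the sharper results to follow: for atomic interventions supported on a fixed size-$k$ set, or soft interventions in a degree-$d$-bounded graph, the same test factors through a much lower-dimensional feature map, and re-running these two steps with that map in place of $\psi$ yields the $O(k^4)$ and $O(d^{4k})$ bounds.
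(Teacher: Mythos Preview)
Your proposal is correct and follows essentially the same strategy as the paper: show that $\F$ is the pullback of a subclass of halfspaces in a space of dimension $O(n^4)$, then apply Lemma~\ref{lemma:halfspaceVC} together with Corollary~\ref{cor:VCtoPAC}. The only structural difference is \emph{which side of the bilinear pairing gets lifted}. The paper (via Lemma~\ref{lemma:general}) writes $\nabla\R^\theta_\Phi(\fc)=U_\theta\,r(\Phi)^{(\dot 2)}$ and then, exploiting the diagonality of $\Phi$, expresses the squared norm as a linear functional of $\text{diag}(\Phi)^{(\dot 4)}\in\mathbb{R}^{(n+1)^4}$; the $\theta$-feature is then the dual vector $r(U_\theta^\top U_\theta)\Psi$. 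You instead lift the intervention side: since $\theta$ enters the gradient only through $(\Sigma^\theta,c^\theta)\in\mathbb{R}^D$ with $D=O(n^2)$, and the squared norm is degree two in these entries, the degree-$\le 2$ monomial map $\psi$ already lands in $\binom{D+2}{2}=O(n^4)$ dimensions. Your route is marginally cleaner---it does not invoke the diagonal restriction on $\Phi$ to reach $O(n^4)$, and the dimension count is more transparent---while the paper's lift-$\Phi$ viewpoint makes explicit how diagonality buys the reduction from $n^8$ to $n^4$. Interestingly, for the sharper bounds (Theorems~\ref{thm:hard} and~\ref{thm:soft}) the paper itself switches to lifting the intervention parameters $a$ or $\beta$, which is exactly the hook you describe in your final paragraph.
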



\textbf{Connections to Causal Bayesian Networks:} Invariance Principle or (less popularly known as) the modularity condition can be equivalently used to define causal Bayesian Networks \cite{bareinboim2012local} - the central object in Pearlian Causal Models. With respect to all possible interventions wherein variable $y$ has not been intervened on, only the representation that involves the true causal parent of $y$ is invariant. In other words, the property of invariance can be taken to be the signature of causality. An open question in the setting has been the following: In how many interventional environments does this invariance property need to hold before it generalizes to most unseen environments. With respect to a random sampling on interventional environments for linear SEMs, our result answers this via high probability generalization guarantees without any faithfulness assumptions. 

\textbf{Structure Learning and Faithfulness:} 
Invariance testing between a given set of interventional environments has been used to constrain the space of causal models \cite{yang2018characterizing} in the literature. However, the learning algorithm that synthesizes various invariances does require some notion of interventional faithfulness, i.e. observed invariances imply topological constraints on the true causal graph. 
In contrast, we do not make any such assumptions about faithfulness.

\textbf{Comparison to Regularity Conditions in Invariant Prediction:} Recently, invariant prediction
\cite{arjovsky2019invariant} has been used for out of distribution generalization. However, exact invariance holding deterministically in \textit{all} unseen distributions from a family of linear SEMs was desired. This required some general position conditions on the population covariances of the training interventional environments. The number of environments despite these additional conditions, required was $O(n^2)$. Our result on the interventional complexity is weaker but holds when we need no such assumptions on covariances and it holds under random sampling for approximate invariances. Similar technical general position conditions were also required in \cite{ahuja2020empirical} for generalization.

In another recent work \cite{kamath2021does}, if the mapping between the intervention index set $\Theta$ and the observed training distributions on $X$ is analytic, then only two training environments (almost surely) suffice for exact invariance in the population setting (i.e., infinite samples) for the entire index set. Here, we analyze approximate invariance without any restrictions on the mapping being analytic. For instance, their result does not apply in Example \ref{example:triangleDAG}, where we need two \text{specific} interventions (one in which each node is assigned the value of the parent, and any other) to certify invariance. In fact, any set of interventions needs to have the specific intervention highlighted in the example to certify invariance, and this happens with very low probability. 


\section{Finite samples}\label{sec:finite_samples}
Note that the above discussion was about population statistics. In reality, we only see finitely many samples from each interventional distribution. We use an estimator similar to that of \cite{ahuja2020empirical} to get finite sample guarantees. We need the following assumption for scale.
\begin{assumption}
The following bounds hold, $\Vert \Phi\Vert_2 \le 1$, $\Vert X\Vert_\infty < 1$.
\end{assumption}
\begin{lemma}\label{lemma:singlePhiMain}
Given $\frac{4nL^2}{\epsilon^2}\left(\log\frac{2n}{\delta}+n^2\log(1+\frac{8n^{3/2}}{\epsilon})\right)$ samples from $\Delta_\theta$, we have that with probability $1-\delta$ over the samples drawn \textit{in each interventional distribution}\footnote{In contrast to claims over the randomness $\D$ over $\Theta$ from previous theorems, this is a statement about the randomness $\Delta_\theta$ for $X$.}
$$\hat \I^\epsilon_{\fc}(\theta) \subseteq \I^{2\epsilon}_{\fc}(\theta).$$
\end{lemma}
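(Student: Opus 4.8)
The plan is to show that, with high probability over the samples, the empirical gradient $\nabla \hat\R^\theta_\Phi(\fc)$ is uniformly close (over all diagonal $\Phi$ with $\|\Phi\|_2 \le 1$) to the population gradient $\nabla \R^\theta_\Phi(\fc)$; once the sup-norm of this deviation is controlled by $\epsilon$, any $\Phi$ with empirical gradient norm at most $\epsilon$ automatically has population gradient norm at most $2\epsilon$, which is exactly the claimed inclusion $\hat\I^\epsilon_{\fc}(\theta) \subseteq \I^{2\epsilon}_{\fc}(\theta)$. First I would write the least-squares gradient explicitly: $\nabla_f \R^\theta_\Phi(\fc) = \Phi^\top \E_\theta[X_{-t}(X_{-t}^\top \Phi \fc - X_t)]$, and its empirical version replaces the expectation by the average over the $N$ samples. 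So the quantity to bound is $\|\Phi^\top (\hat M - M)\Phi\fc\| + \|\Phi^\top(\hat b - b)\|$ where $M = \E_\theta[X_{-t}X_{-t}^\top]$, $b = \E_\theta[X_{-t}X_t]$, and $\hat M, \hat b$ are the empirical analogues. Because $\|\Phi\|_2 \le 1$ and $\fc$ is a fixed unit vector, this reduces to controlling $\|\hat M - M\|_2$ and $\|\hat b - b\|_2$.

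The main work is a concentration-plus-union-bound argument. Under the assumption $\|X\|_\infty < 1$, each entry of $X_{-t}X_{-t}^\top$ and of $X_{-t}X_t$ is bounded, so each coordinate of $\hat M - M$ and $\hat b - b$ is a bounded i.i.d. average; Hoeffding plus a union bound over the $O(n^2)$ entries gives $\|\hat M - M\|_2 \le \|\hat M - M\|_F = O(n \sqrt{(\log(n/\delta))/N})$ with probability $1-\delta$, and similarly for $\hat b - b$. This is essentially where the $\log(2n/\delta)$ term in the sample bound comes from. The covering-number factor $n^2 \log(1 + 8n^{3/2}/\epsilon)$ in the statement suggests the authors instead take a slightly different route — covering the space of diagonal $\Phi$ (an $n$-dimensional set, so an $\epsilon'$-net of size $(1 + c/\epsilon')^n$, or possibly covering a larger $n^2$-dimensional representation space) and controlling the gradient deviation on the net via a Bernstein/Hoeffding bound, then extending to all $\Phi$ by a Lipschitz argument in $\Phi$. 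Either way, the structure is: (i) pointwise concentration of the empirical gradient at a fixed $\Phi$; (ii) a net over the feasible $\Phi$'s; (iii) a Lipschitz-in-$\Phi$ bound to pass from the net to the whole set; (iv) choose $N$ so the total deviation is $\le \epsilon$.

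The step I expect to be the main obstacle is getting the Lipschitz constant of $\Phi \mapsto \nabla\R^\theta_\Phi(\fc)$ right and tracking how it interacts with the net resolution — the gradient is quadratic in $\Phi$ (one factor $\Phi^\top$ outside, one $\Phi$ inside the bracket), so its Lipschitz constant depends on $\|M\|_2$ and $\|b\|_2$, which are themselves $O(n)$ under the boundedness assumption; this is what forces the $8n^{3/2}/\epsilon$ scale inside the log and an overall $n/\epsilon^2$ (times lower-order) sample count. Once those constants are pinned down, plugging the chosen $N = \frac{4nL^2}{\epsilon^2}\big(\log\frac{2n}{\delta} + n^2\log(1 + \frac{8n^{3/2}}{\epsilon})\big)$ into the concentration bound and the union bound over the net should close the argument, yielding $\sup_{\|\Phi\|_2\le 1}\|\nabla\hat\R^\theta_\Phi(\fc) - \nabla\R^\theta_\Phi(\fc)\| \le \epsilon$ with probability $1-\delta$, hence the inclusion.
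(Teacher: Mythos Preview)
Your proposal is correct and matches the paper's approach: the paper's proof sketch says exactly ``standard concentration arguments for a single fixed representation $\Phi$'' followed by ``a union bound over an $\epsilon$-net of $\R^\theta_\Phi$'', and the full proof (Lemma~\ref{lemma:overallfinitesample}) implements precisely your steps (i)--(iv), with the Lipschitz/covering computation (Lemma~\ref{lemma:coveringnumber}) producing the $n^2\log(1+8n^{3/2}/\epsilon)$ term you correctly traced. One detail you did not anticipate: for the pointwise step the paper uses a sample-splitting estimator of the gradient \emph{norm} (splitting into two halves and taking $\sqrt{\langle \hat g_1,\hat g_2\rangle}$) rather than directly bounding $\|\hat\nabla\R-\nabla\R\|$, but this does not alter the structure or the final bound.
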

\begin{proof}[Proof Sketch]
We show this using the standard concentration arguments for a single fixed representation $\Phi$. We then take a union bound over an $\epsilon-$net of the $\R^\theta_\Phi$ to get a uniform bound over all representations. See Lemma \ref{lemma:overallfinitesample} for a full proof.
\end{proof}
We can now take a union bound over all $\theta\in \ThetaTr$. In conclusion, we have shown that with high probability over the randomness in the samples we see in each of our interventional distributions, $\epsilon-$approximate invariance over the training data certifies $\epsilon-$approximate invariance over a $1-\delta'$ fraction of out full intervention set. The proof is deferend to the Appendix Theorem \ref{appendix_thm:main_thm}
\begin{theorem}
Given 
$$m = 
\begin{cases}O(\frac{k^4+\log\frac{1}{\delta}}{\delta'}) & \text{$\Theta$ is $k$ nodes, hard interventions}\\
O(\frac{d^{4k}+\log\frac{1}{\delta}}{\delta'}) & \text{$\Theta$ is $k$ nodes, soft interventions}\\
O(\frac{n^{4}+\log\frac{1}{\delta}}{\delta'}) & \text{$\Theta$ any interventions}\\
\end{cases}$$
interventional datasets, and $\frac{4nL^2}{\epsilon^2}\left(\log\frac{2nm}{\delta}+n^2\log(1+\frac{8n^{3/2}}{\epsilon})\right)$ samples in each dataset, we have that with probability $1-\delta$, with probability $1-\delta'$ for $\theta\sim\D$,
$$\hat\I^\epsilon_{\fc}(\ThetaTr) \subseteq \I^{2\epsilon}_{\fc}(\theta).$$
\end{theorem}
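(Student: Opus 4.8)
The plan is to chain together three ingredients already established: the finite-sample guarantee of Lemma~\ref{lemma:singlePhiMain}, a union bound over the $m$ training environments, and the PAC generalization bound of Corollary~\ref{cor:VCtoPAC} instantiated with the VC-dimension bounds proved for each of the three intervention families. There are two independent sources of randomness here — the draw $\ThetaTr\sim\D^m$ of training interventions, and, conditionally on $\ThetaTr$, the samples $\mathbf{X}$ drawn from $\prod_{\theta\in\ThetaTr}\Delta_\theta$ — and the argument produces a ``good event'' of probability at least $1-\delta$ in the product space on which the claimed inclusion holds for a $1-\delta'$ fraction of test interventions.

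First I would control the sampling error inside the training environments. Applying Lemma~\ref{lemma:singlePhiMain} to each $\theta\in\ThetaTr$ with failure probability $\delta/(2m)$ in place of $\delta$ — which is exactly why the per-dataset sample count in the statement carries $\log\frac{2nm}{\delta}$ rather than $\log\frac{2n}{\delta}$ — and then union bounding over the $m$ environments gives, with probability at least $1-\delta/2$ over $\mathbf{X}$, that $\hat\I^\epsilon_{\fc}(\theta)\subseteq\I^{2\epsilon}_{\fc}(\theta)$ simultaneously for all $\theta\in\ThetaTr$. Intersecting over $\theta\in\ThetaTr$ and using $\I^\epsilon_{\fc}(\Theta')=\bigcap_{\theta\in\Theta'}\I^\epsilon_{\fc}(\theta)$ (and the analogous identity for the hatted set) yields the population relaxation
\[
\hat\I^\epsilon_{\fc}(\ThetaTr)\ \subseteq\ \I^{2\epsilon}_{\fc}(\ThetaTr).
\]

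Next I would invoke the population-level PAC argument, but at resolution $2\epsilon$ rather than $\epsilon$. By Corollary~\ref{cor:VCtoPAC}, if the function class $\F$ attached to $2\epsilon$-invariance (the reduction of invariance to binary classification in Section~\ref{section:PACIRM}) has VC dimension at most $d$, then $O\!\big(\tfrac{d+\log(2/\delta)}{\delta'}\big)$ training interventions suffice so that, with probability at least $1-\delta/2$ over $\ThetaTr\sim\D^m$,
\[
\Pr_{\theta\sim\D}\big(\I^{2\epsilon}_{\fc}(\ThetaTr)\not\subseteq\I^{2\epsilon}_{\fc}(\theta)\big)\ \le\ \delta'.
\]
The realizability hypothesis $\I^{2\epsilon}(\Theta)\neq\{\}$ needed here follows from Lemma~\ref{lemma:realizable} together with the head-invariance reduction (Lemma~\ref{lemma:headdoesntmatter}). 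It then remains to substitute the correct $d$ for each case: $d=O(k^4)$ for hard interventions on a fixed set of $k$ nodes, $d=O(d^{4k})$ for soft interventions on $k$ degree-bounded nodes (these are the VC bounds behind the refinements of Theorem~\ref{thm:general}), and $d=O(n^4)$ for arbitrary interventions by Theorem~\ref{thm:general} itself; these are precisely the three values of $m$ in the statement. Finally, since $\ThetaTr$ is drawn first and the samples conditionally, a union bound over the two good events gives probability at least $1-\delta$ that both hold, and on that event every test intervention $\theta$ outside the exceptional $\delta'$-mass satisfies the chain $\hat\I^\epsilon_{\fc}(\ThetaTr)\subseteq\I^{2\epsilon}_{\fc}(\ThetaTr)\subseteq\I^{2\epsilon}_{\fc}(\theta)$, which is the claim.

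I expect the only genuinely non-mechanical step — as opposed to routine probability bookkeeping — to be the verification that Corollary~\ref{cor:VCtoPAC} may be applied at resolution $2\epsilon$ with an \emph{unchanged} VC bound, i.e., that the VC bounds for the class of $c$-invariant representations are uniform in the level $c$. This holds because those bounds are driven by the complexity of the family of sublevel sets $\{\theta:\Vert\nabla\R^\theta_\Phi(\fc)\Vert_2\le c\}$ as $\Phi$ ranges over diagonal representations, and this complexity does not grow with $c$; once that is in place, the remaining work is assembling pieces already proved, together with keeping the randomness over $\D$ and the randomness over $\Delta_\theta$ cleanly separated when allocating the failure budget $\delta$.
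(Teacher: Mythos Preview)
Your proposal is correct and follows essentially the same route as the paper: finite-sample concentration per environment (Lemma~\ref{lemma:singlePhiMain}), union bound over $\ThetaTr$ to get $\hat\I^\epsilon_{\fc}(\ThetaTr)\subseteq\I^{2\epsilon}_{\fc}(\ThetaTr)$, then the PAC step via Corollary~\ref{cor:VCtoPAC} and the case-specific VC bounds (Theorems~\ref{thm:general}, \ref{thm:hard}, \ref{thm:soft}). If anything, you are more careful than the paper's own proof about applying the PAC step at level $2\epsilon$ and about splitting the $\delta$ budget between the two sources of randomness.
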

\section{Empirical Study}\label{section:conclusion}
In this section we highlight the above results empirically. 
The experiment is described in detail in Appendix~\ref{appendix:empirical}.
\begin{figure}
  \centering
  \begin{minipage}{0.3\textwidth}
    \centering
    \includegraphics[width=0.9\textwidth]{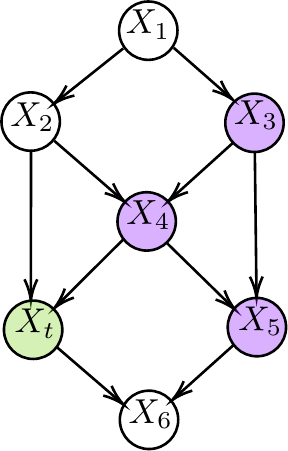} 
    \caption{Linear SEM for Section \ref{section:conclusion}. Nodes $\{v_3, v_4, v_5\}$ (purple) are used as intervention sites for $\D_{\text{hard}}$ and $\D_{\text{soft}}$. Each edge represents a weight of $1$. Node $X_t$ (green) is taken to be the target node.}\label{fig:toy_example}
  \end{minipage}\hfill
  \begin{minipage}{0.65\textwidth}
    \centering
    \includegraphics[width=0.9\textwidth]{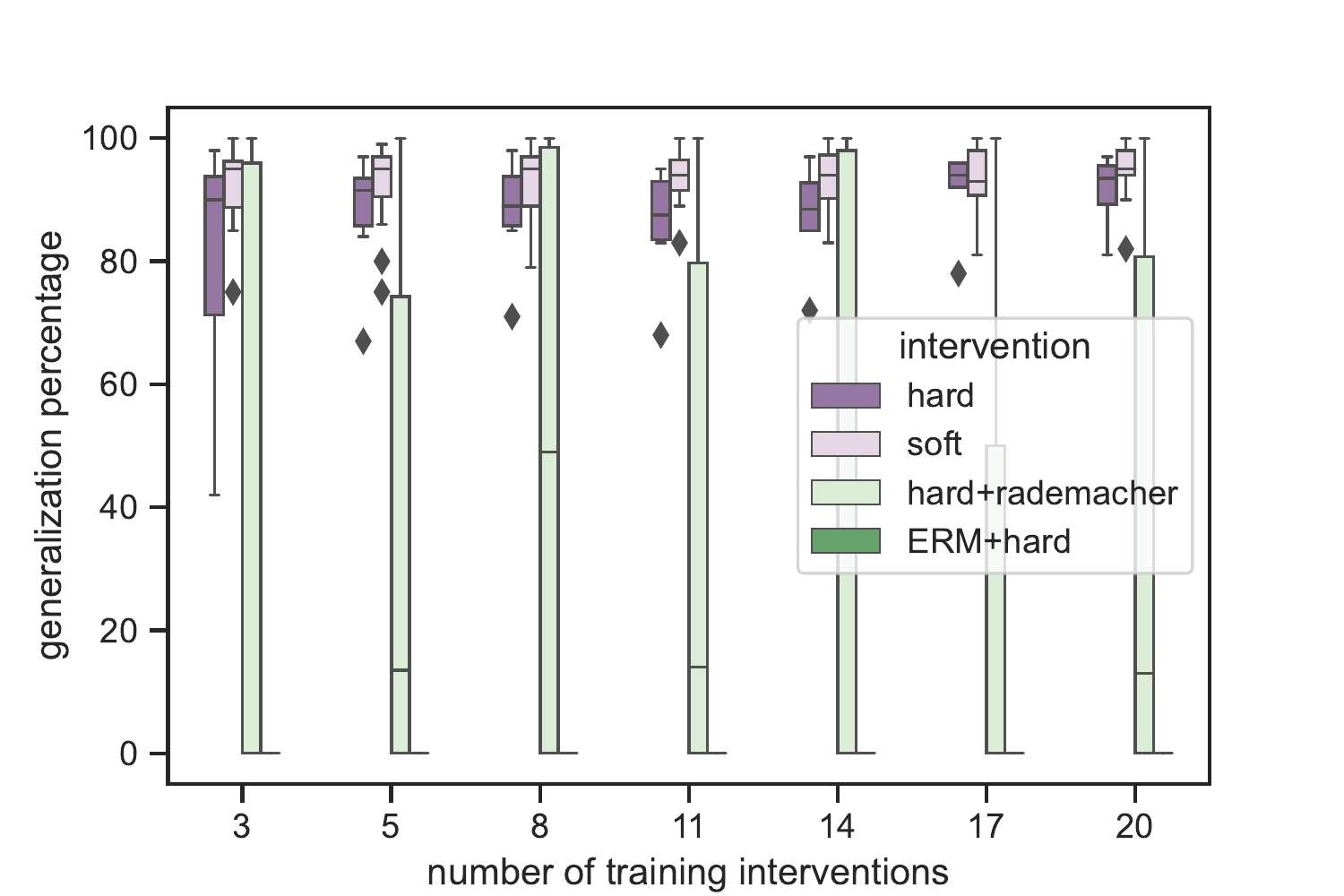} 
    \caption{Generalization of approximately invariant representations under different combinations of train and test interventional distributions for the experiment described in Section \ref{section:conclusion}. The ERM solution almost always fails to generalize, and the corresponding box plot is at `0'. }\label{fig:generalization}
  \end{minipage}
\end{figure}
%
%
%
We consider the $7$-node linear SEM in Figure \ref{fig:toy_example}. The target variable is taken to be $X_t$. Each edge weight is set to $1$ for the observational distribution. We consider an interventional distribution $\D_{\text{hard}}$ with support over the set of hard interventions on nodes $\{v_3, v_4, v_5\}$. Recall that a hard intervention consists of assigning a value to a node. We draw $m$ interventional distributions from $\D_{\text{hard}}$ as our training interventions, and draw a sample consisting of $N = 200,000$ datapoints from each distribution. We use a notion of inter-dataset variance of the least squares solutions to construct a set $s_{\text{hard}} = \{S_1, S_2, \cdots,\}$ where each $S_i$ is an approximately invariant representation derived from the training datasets. We then generate an additional $m^{\text{test}}=100$ test interventions from $\D_{\text{hard}}$, and generate datasets of size $N=200,000$ from each of them. We use the same inter-dataset variance to count the percentage of subsets in $s_{\text{hard}}$ that continue to have low variation between the least squares solutions on the new {\em test distributions} and the {\em average least squares solution on the training distributions}. We repeat the above for soft interventions drawn from $\D_{\text{soft}}$. We consider soft interventions that modify the weights of the edges into $v_3, v_4,$ and $v_5$. Please see Appendix \ref{appendix:empirical} for exact details about how the hard and soft interventional ensembles are defined.

We also consider a different interventional distribution $\D_{\text{rad}}$ to test the generalization of representations that are approximately invariant given training distributions drawn from $\D_{\text{hard}}$. These interventions are generated by randomly flipping the signs of the edge weights in the original SEM.

Finally, to confirm that there is indeed variation across the least squares solutions in the different environments, we also plot the percentage of test distributions to which the ERM solution on the training distributions is similar to ERM solution of the test distributions (this percentage is almost always 0; meaning that the ERM solution always fails to generalize). We repeat this for $m$ varying beteen $3$ and $20$. The results are plotted in Figure \ref{fig:generalization}.

\textbf{Interpretation:} While not every subset in $s_{\text{hard}}$ and $s_{\text{soft}}$ generalizes to every test dataset, we see that \textit{most} subsets generalize to \textit{a large percentage} of test distributions. Furthermore, the percentage that generalize when the train and test interventions are identically distributed exceeds the percentage that generalizes to datasets that come from the rademacher interventional family. This phenomenon is captured in our PAC bounds, which describe when \textit{most} approximately invariant representations will continue to be approximately invariant on \textit{most} test distributions. 

\section*{Acknowledgements}

This research  is supported in part by NSF Grants 2019844 and 2112471, ONR Grant N00014-19-1-2566, the Machine Learning Lab (MLL) at UT Austin, and the Wireless Networking and Communications Group (WNCG) Industrial Affiliates Program.

\bibliographystyle{alpha} 
\bibliography{PACIRM}

\newpage
\appendix

\section{Notation}
For any quantity that depends on population statistics, $A$, we use $\hat A$ to denote the corresponding sample estimate. For a vector $a\in \mathbb{R}^n$, we use $a^{(\dot k)}\in \mathbb{R}^{n^2+n}$ to denote the vector consisting of all monomials of degree {\em less than or equal to} $k$ with variables in $a$. Thus, we have
$a^{(\dot 2)} = (1, a_1, a_2, \cdots, a_n, a_1^2, a_1 a_2, \cdots, a_n^2)^{\top}$. 
We use $a^{(k)}\in \mathbb{R}^{n^2+n}$ (that is, with no dot) to denote the vector consisting of all monomials of {\em exactly} degree $k$ with variables in $a$, so $a^{(2)} = (a_1^2, a_1a_2, a_1a_3, \cdots a_n^2)^{\top}$. For a matrix $A$, we use $r(A)$ to denote the vector formed by flattening the matrix, so $r(A) = (A_{11}, A_{12}, A_{13}, \cdots, A_{n-1, n}, A_{nn})^{\top}$.
\section{Auxillary proofs}
\subsection{General Results}
\begin{lemma}\label{lemma:vectorHoefding}
Consider a sequence of i.i.d. random vectors $V_1, V_2, \cdots, V_m\in \mathbb{R}^d$ with mean $\mu$ such that $\Vert V_i\Vert_\infty < L$ always. Then for $m>\frac{dL^2}{\epsilon^2}\log\frac{d}{\delta}$, we have $$\Pr[\Vert \frac{1}{m}\sum_{i=1}^m V_i-\mu\Vert < \epsilon] >1-\delta.$$
\end{lemma}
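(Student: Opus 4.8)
The plan is to reduce this vector concentration bound to $d$ scalar concentration bounds, one per coordinate, via Hoeffding's inequality, and then reassemble them using a union bound together with the elementary comparison $\Vert v\Vert_2 \le \sqrt{d}\,\Vert v\Vert_\infty$.

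First I would write $\bar V = \frac{1}{m}\sum_{i=1}^m V_i$ and fix a coordinate $j\in[d]$. The scalars $V_{1,j},\dots,V_{m,j}$ are i.i.d., lie almost surely in the interval $(-L,L)$ (hence have range at most $2L$), and share the mean $\mu_j$. Hoeffding's inequality then gives, for any $t>0$,
$$\Pr\left[\,|\bar V_j-\mu_j|\ge t\,\right]\le 2\exp\left(-\frac{mt^2}{2L^2}\right).$$
Choosing $t=\epsilon/\sqrt{d}$ and taking a union bound over the $d$ coordinates yields
$$\Pr\left[\,\Vert\bar V-\mu\Vert_\infty\ge \epsilon/\sqrt{d}\,\right]\le 2d\exp\left(-\frac{m\epsilon^2}{2dL^2}\right).$$
On the complementary event each coordinate of $\bar V-\mu$ has absolute value below $\epsilon/\sqrt{d}$, so $\Vert\bar V-\mu\Vert_2\le\sqrt{d}\,\Vert\bar V-\mu\Vert_\infty<\epsilon$. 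It then remains to verify that the stated lower bound on $m$ (with the universal constants suitably chosen) forces $2d\exp(-m\epsilon^2/(2dL^2))\le\delta$, which is an immediate rearrangement of the inequality $\frac{m\epsilon^2}{2dL^2}\ge\log\frac{2d}{\delta}$.

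I do not expect a genuine obstacle here; the argument is a textbook Hoeffding-plus-union-bound. The two points that need a little attention are (i) using the range $2L$, rather than $L$, in the exponent of Hoeffding's bound — since $V_{i,j}$ is only controlled in absolute value — and (ii) the $\sqrt{d}$ factor lost in passing from coordinatewise $\ell_\infty$ control to $\ell_2$ control of the whole vector, which is precisely what produces the linear-in-$d$ dependence of the sample size. If one wanted to avoid the $\sqrt{d}$ loss, one could instead run a genuinely vector-valued argument (e.g. bounded differences applied directly to $\Vert\bar V-\mu\Vert_2$, or a matrix-Bernstein estimate), but the coordinatewise route already delivers the stated bound.
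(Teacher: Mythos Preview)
Your proposal is correct and follows essentially the same approach as the paper: coordinatewise Hoeffding with tolerance $\epsilon/\sqrt{d}$, a union bound over the $d$ coordinates, and the $\ell_\infty$-to-$\ell_2$ comparison. Your version is in fact more explicit about the constants (the $2L$ range and the $2d$ in the union bound) than the paper's brief sketch, and your parenthetical remark that the stated sample bound matches only up to universal constants is accurate.
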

\begin{proof}
By a standard Hoeffding bound, with probability $1-\delta$, each entry of $\frac{1}{m}\sum_{i=1}^m V_i$ will be within $\frac{\epsilon}{\sqrt{d}}$ of its mean for  $m\ge\frac{dL^2}{\epsilon^2}\log\frac{d}{\delta}$. Then the squared norm of the error $\frac{1}{m}\sum_{i=1}^mV_i-\mu$ will be less than $\epsilon$.
\end{proof}
\begin{lemma}[\cite{vershynin2011introduction}]\label{lemma:coveringnumbersphere}
The covering number of the Euclidean ball of radius $R$ is $(1+\frac{2R}{\epsilon})^d$.
\end{lemma}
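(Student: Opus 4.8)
The plan is to prove this by the standard volumetric packing argument, establishing the stated quantity as an upper bound on the covering number at scale $\epsilon$. First I would reduce covering to packing. Let $B = B(0,R) \subset \mathbb{R}^d$ denote the Euclidean ball of radius $R$, and let $\{x_1, \dots, x_N\} \subseteq B$ be a maximal $\epsilon$-separated subset, i.e.\ a set with $\Vert x_i - x_j \Vert \ge \epsilon$ for all $i \ne j$ to which no further point of $B$ can be adjoined without violating this separation. The key observation is that maximality forces this set to be an $\epsilon$-net of $B$: if some $y \in B$ satisfied $\Vert y - x_i \Vert \ge \epsilon$ for every $i$, then $y$ could be added to the separated set, contradicting maximality. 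Hence every point of $B$ lies within distance $\epsilon$ of some $x_i$, and the covering number is at most $N$.

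Next I would bound $N$ by comparing volumes. Because the centers are pairwise at distance at least $\epsilon$, the open balls $B(x_i, \epsilon/2)$ are pairwise disjoint. Each is contained in the enlarged ball $B(0, R + \epsilon/2)$, since $\Vert x_i \Vert \le R$. Summing the volumes of the disjoint small balls and comparing to the volume of the enclosing ball gives
$$
N \cdot \mathrm{vol}\!\left(B(0, \tfrac{\epsilon}{2})\right) \;\le\; \mathrm{vol}\!\left(B(0, R + \tfrac{\epsilon}{2})\right).
$$
Finally, I would use that the volume of a Euclidean ball of radius $r$ in $\mathbb{R}^d$ is $c_d \, r^d$ for a dimension-dependent constant $c_d$ that is the same for both balls. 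The constant cancels in the ratio, leaving
$$
N \;\le\; \left(\frac{R + \epsilon/2}{\epsilon/2}\right)^{d} \;=\; \left(1 + \frac{2R}{\epsilon}\right)^{d},
$$
which is the claimed bound.

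The only nontrivial ingredient — and hence the step I would treat most carefully — is the equivalence between maximal $\epsilon$-packings and $\epsilon$-coverings; once that correspondence is in place, the remainder is a routine volume comparison in which the dimensional constant $c_d$ plays no role because only the ratio of ball volumes enters. Everything else (disjointness of the half-radius balls, containment in the enlarged ball, and the $r^d$ scaling of volume) is elementary.
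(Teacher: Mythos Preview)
Your argument is correct and is precisely the standard volumetric packing argument from the cited reference \cite{vershynin2011introduction}; the paper itself does not supply a proof but merely quotes the result, so there is nothing further to compare.
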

\begin{lemma}\label{lemma:normsquare}
Let $A\in \mathbb{R}^{n\times n}$ be a matrix and $x\in \mathbb{R}^n$ be a column vector, then we have$\Vert Ax\Vert_2^2 = r(A^\top A)x^{(2)}$.
\end{lemma}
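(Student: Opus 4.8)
The plan is to reduce the claimed identity to the elementary fact that $\Vert Ax\Vert_2^2$ is the quadratic form attached to $A^\top A$, and then to match that quadratic form coordinate-by-coordinate against the pairing on the right-hand side. No structure of $A$ beyond it being a real $n\times n$ matrix will be needed, so this is essentially a bookkeeping lemma that unpacks the definitions of $r(\cdot)$ and $(\cdot)^{(2)}$ from the Notation section.

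First I would expand $\Vert Ax\Vert_2^2 = (Ax)^\top(Ax) = x^\top(A^\top A)x$ and set $M := A^\top A$, which is symmetric. Writing the quadratic form out in coordinates gives $x^\top M x = \sum_{i=1}^n\sum_{j=1}^n M_{ij}\, x_i x_j$, a sum whose terms pair a matrix entry $M_{ij}$ with the degree-two monomial $x_i x_j$ in the coordinates of $x$. By the conventions of the Notation section, $r(\cdot)$ lists the entries of a matrix in a fixed order and $x^{(2)}$ lists the degree-two monomials of $x$ in the matching order, so reading $\sum_{i,j} M_{ij} x_i x_j$ in that order exhibits it precisely as the Euclidean inner product of $r(M)$ with $x^{(2)}$, i.e. $r(A^\top A)\, x^{(2)}$ with $r(A^\top A)$ regarded as a row vector. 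Substituting $M = A^\top A$ then gives the claim; one could equivalently phrase the same step index-free as $x^\top M x = \langle M, xx^\top\rangle_F$ together with the fact that flattening is an isometry for the Frobenius/Euclidean pairing, $r(xx^\top)$ coinciding with $x^{(2)}$ up to the ordering convention.

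The only point requiring care — a notational consistency check rather than a genuine obstacle — is keeping the orderings used by $r(\cdot)$ and $(\cdot)^{(2)}$ aligned, and deciding whether an off-diagonal monomial $x_i x_j$ with $i\neq j$ is paired with a single flattened entry or, after symmetrizing, with the combined coefficient $M_{ij}+M_{ji}$. Since $A^\top A$ is symmetric we have $M_{ij}=M_{ji}$, so both readings reconstitute the same scalar, namely the double sum above, and the stated ambient space $\mathbb{R}^{n^2+n}$ is just the common space into which these monomial/flattened vectors are embedded. Thus the identity holds for every real $A\in\mathbb{R}^{n\times n}$ and $x\in\mathbb{R}^n$, and the lemma follows immediately.
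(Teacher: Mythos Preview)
Your proof is correct and follows exactly the paper's approach: the paper's own proof is the one-line remark ``This follows by writing the norm squared as a quadratic form,'' and you have simply spelled out that quadratic-form computation and the matching of orderings in detail. There is nothing to add.
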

\begin{proof}
This follows by writing the norm sqaured as a quadratic form.
\end{proof}
\begin{lemma}\label{ravelasmatrix}
Consider a diagonal matrix $\Phi \in \mathbb{R}^{n \times n}$ . Then there exists a fixed matrix $V \in \mathbb{R}^{n^2 \times n}$ such that $r(\Phi) = V \text{diag}(\Phi)$.
\end{lemma}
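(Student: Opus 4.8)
The plan is to exploit the fact that $r(\cdot)$ is a \emph{linear} map on $n \times n$ matrices and that $\Phi \mapsto \mathrm{diag}(\Phi)$ identifies the space of diagonal matrices with $\mathbb{R}^n$; hence the composite map $\mathrm{diag}(\Phi) \mapsto r(\Phi)$ is linear and, being a linear map from $\mathbb{R}^n$ to $\mathbb{R}^{n^2}$, is implemented by a single matrix $V \in \mathbb{R}^{n^2 \times n}$ that does not depend on $\Phi$. First I would fix the flattening convention: index the entries of $r(\Phi)$ by ordered pairs $(i,j)$ in the order $(1,1),(1,2),\dots,(n,n)$ used in the Notation section, so that the coordinate of $r(\Phi)$ labelled $(i,j)$ is exactly $\Phi_{ij}$.

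Next I would write down $V$ explicitly. Since $\Phi$ is diagonal, $\Phi_{ij} = \mathbbm{1}\{i=j\}\,\mathrm{diag}(\Phi)_i$, so I would define $V$ by $V_{(i,j),\,k} = \mathbbm{1}\{i=j=k\}$; equivalently, the $k$-th column of $V$ is the flattening $r(e_k e_k^\top)$ of the rank-one matrix $e_k e_k^\top$. Then for each $(i,j)$ one checks $(V\,\mathrm{diag}(\Phi))_{(i,j)} = \sum_{k=1}^{n} \mathbbm{1}\{i=j=k\}\,\mathrm{diag}(\Phi)_k = \mathbbm{1}\{i=j\}\,\mathrm{diag}(\Phi)_i = \Phi_{ij} = r(\Phi)_{(i,j)}$, which gives $r(\Phi) = V\,\mathrm{diag}(\Phi)$. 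Alternatively one can avoid coordinates entirely: by linearity of $r(\cdot)$, $r(\Phi) = r\big(\sum_k \mathrm{diag}(\Phi)_k\, e_k e_k^\top\big) = \sum_k \mathrm{diag}(\Phi)_k\, r(e_k e_k^\top) = V\,\mathrm{diag}(\Phi)$ with $V = [\,r(e_1 e_1^\top)\ \cdots\ r(e_n e_n^\top)\,]$, and this $V$ manifestly does not involve $\Phi$.

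There is no real obstacle here: the statement is essentially the observation that a linear parametrization is a matrix multiplication. The only point requiring a little care is keeping the row-ordering of $V$ consistent with the definition of $r(\cdot)$, which the fixed convention above handles.
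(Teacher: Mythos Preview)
Your proof is correct and is essentially the same construction as the paper's: the paper builds $V$ by interleaving the rows of the $n\times n$ identity with blocks of zero rows, which is precisely your matrix $V_{(i,j),k}=\mathbbm{1}\{i=j=k\}$ (equivalently, columns $r(e_k e_k^\top)$). Your presentation is slightly more explicit in verifying the identity coordinatewise, but the underlying argument is identical.
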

\begin{proof}
To construct $V$, we take the $n \times n$ identity matrix and insert with $n-1$ rows of all $0$s between every row of the identity matrix obtain $V \in \mathbb{R}^{n^2 \times n}$ which looks as follows:
$$V=
\begin{pmatrix}
1&0&0\cdots&0\\
0&0&0\cdots&0\\
\vdots\\
0&1&0\cdots&0\\
0&0&0\cdots&0\\
\vdots\\
0&0&0\cdots&0\\
0&0&0\cdots&1
\end{pmatrix}
$$.
\end{proof}   
\begin{lemma}[\cite{sullivant2010trek}, \cite{uhler2013geometry}]\label{lemma:inverseexpansion}
For the class of linear SEMs defined in Equation \ref{eq:SEM}, we have the following
\begin{itemize}
\item $\Sigma_\theta = J(1-B_\theta)^{-1}\Xi_\theta\left((1-B_\theta)^{-1}\right)^\top J^\top$ where $\Xi_\theta = \E_\theta[\eta\eta^\top]$ and $J$ is identity concatenated with one column of zeros at the index corresponding to $v_t$. 
\item $(I-B_\theta)^{-1} = \sum_{k=0}^{n-1}B_\theta^k$
\item $\Sigma_\theta = \sum_{k = 0}^{2n-2} \sum_{\substack{r+s = k\\r, s < n-1}} B^r\Xi_\theta(B^\top)^s.$
\end{itemize}
\end{lemma}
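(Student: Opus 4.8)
The plan is to treat all three identities as algebraic consequences of the matrix form of the SEM, proving them in the stated order because the second feeds directly into the third. Throughout I would write $B_\theta$ for the $(n{+}1)\times(n{+}1)$ weight matrix with $(B_\theta)_{ij}=\beta^\theta_{j,i}$, chosen so that row $i$ exactly reproduces the structural equation (\ref{eq:SEM}) for $X_i$; then the whole system reads $X = B_\theta X + \eta$, i.e. $(I-B_\theta)X=\eta$. For the first identity I would first observe that ordering the nodes topologically makes $B_\theta$ strictly triangular (every edge points parent-to-child), so $I-B_\theta$ is unit-triangular and hence invertible, giving $X=(I-B_\theta)^{-1}\eta$. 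Applying the selection matrix $J$, which deletes the target coordinate, yields $X_{-t}=J(I-B_\theta)^{-1}\eta$, and taking second moments while pulling the deterministic factors out of the (centered-noise) expectation gives
\[ \Sigma_\theta=\E_\theta[X_{-t}X_{-t}^\top]=J(I-B_\theta)^{-1}\,\E_\theta[\eta\eta^\top]\,((I-B_\theta)^{-1})^\top J^\top=J(I-B_\theta)^{-1}\Xi_\theta((I-B_\theta)^{-1})^\top J^\top, \]
which is the first claim with $\Xi_\theta=\E_\theta[\eta\eta^\top]$.

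For the second identity the key structural fact is that the same topological ordering makes $B_\theta$ nilpotent: a strictly triangular matrix on the non-target indices satisfies $B_\theta^{\,n}=0$, equivalently the longest directed path is too short to sustain a higher power. I would then verify the finite Neumann expansion directly from the telescoping product
\[ (I-B_\theta)\sum_{k=0}^{n-1}B_\theta^k = I-B_\theta^{\,n}=I, \]
so that left-multiplying by $(I-B_\theta)^{-1}$ gives $\sum_{k=0}^{n-1}B_\theta^k=(I-B_\theta)^{-1}$. The only delicate point here is the exact truncation index, which depends on the node count and on whether $J$ (the target coordinate) is folded in before or after inversion.

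For the third identity I would substitute the finite expansion into the first identity, turning both inverse factors into finite sums,
\[ \Sigma_\theta=J\Big(\sum_{r=0}^{n-1}B_\theta^r\Big)\Xi_\theta\Big(\sum_{s=0}^{n-1}(B_\theta^\top)^s\Big)J^\top, \]
and then expand the product and regroup by the total degree $k=r+s$, collecting all admissible pairs with $0\le r,s\le n-1$. Since $k$ ranges over $0,\dots,2n-2$, this produces
\[ \Sigma_\theta=\sum_{k=0}^{2n-2}\sum_{\substack{r+s=k\\0\le r,s\le n-1}}B_\theta^r\,\Xi_\theta\,(B_\theta^\top)^s, \]
matching the stated identity (with $J$ and the $\theta$-subscript on $B$ suppressed). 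I would remark that this double sum is precisely the trek/path expansion of the covariance of a linear SEM, which explains both the admissible index range and the top degree $2n-2$.

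The hard part is not analytic---every step is elementary linear algebra---but bookkeeping: fixing the $B_\theta$ convention so that rows correspond to equations, justifying invertibility and nilpotency from acyclicity, and keeping the truncation bound and the projection $J$ consistent across the three forms. Once the Neumann truncation of the second identity is in hand, the substitution and reindexing of the third are purely mechanical.
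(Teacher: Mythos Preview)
Your proposal is correct and follows essentially the same route as the paper: combine the SEM into $X=(I-B_\theta)^{-1}\eta$, apply $J$ and take second moments for the first identity; use nilpotency of $B_\theta$ to truncate the Neumann series for the second; and substitute and regroup by $r+s$ for the third. The one small difference is that for the second identity the paper invokes $\Vert B_\theta\Vert<1$ to write the full Neumann series and then truncates by nilpotency, whereas your telescoping identity $(I-B_\theta)\sum_{k=0}^{n-1}B_\theta^k=I-B_\theta^n=I$ bypasses any norm assumption entirely; your version is cleaner, since the norm bound is not actually justified (nor needed) in the paper.
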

\begin{proof}
Note that the SEM equations can all be combined into a single vector equation of the form 
$$X = B_\theta X+\eta\implies X=(I-B_{\theta})^{-1}\eta$$
for some lower triangular matrix $B_\theta\in \mathbb{R}^{n\times n}$, and a noise vector $\eta$. We can then write the covariance of the features $X_{-t}$, $\Sigma_{\theta}$, in terms of $B_{\theta}$. Let $J\in \mathbb{R}^{n\times n+1}$ denote the matrix that is identity concatenated with an additional column consisting of all zeros in the index corresponding to $v_t$, such that it selects the submatrix of $\E_\theta[XX^\top]$ corresponding to $\Sigma_\theta = \E_\theta[X_{-t}X_{-t}^\top]$
\begin{align*}
\Sigma_{\theta} &:= J\E_{\theta}[XX^\top] J^\top\\
&= J(I-B_{\theta})^{-1}\E_{\theta}[\eta \eta^\top]\left((I-B_{\theta})^{-1}\right)^\top J^\top
\end{align*}

Since $\Vert B_\theta\Vert < 1$, we have $(I-B_\theta)^{-1} = \sum_{k=0}^{\infty}B_\theta^k$. Because $B_\theta$ is lower triangular, $B_\theta^k = 0$ for $k\ge n$. So $(I-B_\theta)^{-1} = \sum_{k=0}^{n-1}B_\theta^k$. Similarly $((I-B_\theta)^{-1})^\top = \sum_{k=0}^{n-1}(B_\theta^\top)^k$. The result follows by considering every cross term in the product of the sums.
$$\Sigma_\theta = \sum_{r = 0}^{n-1} (B_\theta)^r \Xi_\theta\sum_{s = 0}^{n-1} (B^\top_\theta)^s = \sum_{k = 0}^{2n-2} \sum_{\substack{r+s = k\\r, s < n-1}} B^r\Xi_\theta(B^\top)^s.$$
\end{proof}

\section{Invariant Representations}
Consider the SEM model of Section \ref{section:model}.
\begin{align*}
X_i &= \left(\beta_i^{\theta}\right)^\top \Pa_i+\eta_i\\
X_t &= \beta^\top \Pa_t + \eta_t 
\end{align*}
We see that a representation that projects onto the parents is indeed invariant, $\E_\theta[X_t|\Pa_t] = \beta^\top \Pa_t$ and the RHS has no dependence on $\theta$. If $X$ is independent of $\eta_t$, actually more can be said: the identity representation itself is invariant! 
\begin{lemma}\label{lemma:ERMisIRM}
If $X\indep \eta_t$, then $I\in \I(\Theta)$.
\end{lemma}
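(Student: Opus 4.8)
The claim is that if $X \indep \eta_t$, then the identity representation $I$ lies in $\I(\Theta)$; that is, the least-squares head on top of the identity representation is the same across all interventions $\theta \in \Theta$. The plan is to compute the least-squares solution explicitly and show it does not depend on $\theta$. Recall from Equation~(\ref{eq:targetSEM}) that $X_t = \overline\beta_t^\top X_{-t} + \eta_t$, where $\overline\beta_t$ is the zero-padded parent-weight vector, and crucially $\overline\beta_t$ does \emph{not} depend on $\theta$ (the conditional mechanism of the target given its parents is fixed across interventions). The least-squares head on top of the identity representation for intervention $\theta$ is
\begin{equation*}
f^\theta = \argmin_{f} \R^\theta(f) = \argmin_f \E_\theta\big[(f^\top X_{-t} - X_t)^2\big] = \big(\E_\theta[X_{-t} X_{-t}^\top]\big)^{-1}\E_\theta[X_{-t} X_t],
\end{equation*}
assuming the covariance $\Sigma_\theta = \E_\theta[X_{-t}X_{-t}^\top]$ is invertible.

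The key computation is the cross-covariance term. Substituting $X_t = \overline\beta_t^\top X_{-t} + \eta_t$,
\begin{equation*}
\E_\theta[X_{-t}X_t] = \E_\theta[X_{-t} X_{-t}^\top]\,\overline\beta_t + \E_\theta[X_{-t}\eta_t] = \Sigma_\theta\,\overline\beta_t + \E_\theta[X_{-t}\eta_t].
\end{equation*}
The hypothesis $X \indep \eta_t$ forces $\E_\theta[X_{-t}\eta_t] = \E_\theta[X_{-t}]\,\E_\theta[\eta_t]$; if we further take the standard zero-mean convention on the noise (or more generally zero-mean covariates, so that $\E_\theta[X_{-t}] = 0$ or $\E_\theta[\eta_t] = 0$), this term vanishes. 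Then $f^\theta = \Sigma_\theta^{-1}\Sigma_\theta\,\overline\beta_t = \overline\beta_t$, which is independent of $\theta$. Hence the same head $\overline\beta_t$ is optimal for every $\theta$, so $I \in \I(\Theta)$ by the definition in Equation~(\ref{eq:IRM}).

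\textbf{Main obstacle.} The substantive point to get right is the handling of the $\E_\theta[X_{-t}\eta_t]$ term: independence alone gives a product of expectations, and one needs the zero-mean assumption (implicit in the Gaussian-noise setup used throughout, cf.\ the SEMs in Examples~\ref{example:faithfulness} and \ref{example:triangleDAG}) to kill it. One should also note that $X \indep \eta_t$ is a genuinely restrictive hypothesis: in a DAG, $\eta_t$ is the exogenous noise at the target, and $X_{-t}$ can only avoid depending on $\eta_t$ when no observed covariate is a descendant of $v_t$ — so this lemma is really saying that in the absence of feedback from the target into the covariates, no feature selection is needed and ERM on the raw features is already invariant. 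Beyond that, the invertibility of $\Sigma_\theta$ is a mild regularity point and can be sidestepped by working with the normal equations $\Sigma_\theta f = \E_\theta[X_{-t}X_t]$ directly, for which $f = \overline\beta_t$ is always a solution.
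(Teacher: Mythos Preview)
Your proposal is correct and takes essentially the same approach as the paper: both expand $\E_\theta[X_{-t}X_t]$ via $X_t = \overline\beta_t^\top X_{-t} + \eta_t$, invoke $X \indep \eta_t$ to factor $\E_\theta[X_{-t}\eta_t]$, and use a zero-mean assumption to kill that term, arriving at the $\theta$-independent solution $\overline\beta_t$. The only cosmetic difference is that the paper works with the gradient $\nabla\R^\theta_I(\fc) = \Sigma_\theta(\fc - \overline\beta_t)$ directly (consistent with its gradient-based invariance formulation and Lemma~\ref{lemma:lossisgrad}), whereas you solve the normal equations for the argmin; your closing remark about sidestepping invertibility via the normal equations is exactly what the paper's gradient computation accomplishes.
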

\begin{proof}
We begin with Equation (\ref{eq:gradientsimplified})
$$\nabla \R^\theta_\Phi(\fc) = \Phi^\top \Sigma_{\theta}\Phi\fc - \Phi^\top \E_{\theta}[X_{\theta}X_t^\top]$$
Plugging in $\Phi = I$, we have
\begin{align*}
\nabla \R^\theta_I(\fc) 
&= \Sigma_{\theta}\fc - \E_{\theta}[X_{\theta}X_t^\top]\\
&= \Sigma_{\theta}\fc -  \E_{\theta}[X\left(\Pa_t^\top\beta+\eta_t\right)]\\
&= \Sigma_{\theta}\fc - \E_{\theta}[X\left(X^\top\overline \beta+\eta_t\right)]\\
&= \Sigma_{\theta}\fc -  \Sigma_{\theta}\beta+\E_\theta[X\eta_t]\\
&=  \Sigma_{\theta}\fc -  \Sigma_{\theta}\beta+\E_{\theta}[X]\E_\theta[\eta_t]\\
&= \Sigma_{\theta}(\fc - \beta)
\end{align*}
So the loss is minimized at the common $\fc = \beta$ (that is, independent of intervention).
\end{proof}
\begin{remark}
In the absence of anti-causal variables (variables other than the target variable $X_t$ that are ``downstream" from $X_t$), the condition of Lemma \ref{lemma:ERMisIRM} is satisfied, and we have the convenient result that the ERM solutions are actually already invariant.
\end{remark}
\subsection{Representations}
In this section, we will look more closely at the set of representations.

\begin{lemma}\label{lemma:headdoesntmatter}
Any approximately invariant solution $(\Phi, f)$, corresponds to another invariant solution $(\Phi', \fc)$ where $\Phi' = \Phi A^{-1}$ for some invertible $A$, and $\fc = (1,0, \cdots, 0)$.
\end{lemma}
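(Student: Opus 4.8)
The plan is to realise the fixed head $\fc=(1,0,\dots,0)^\top$ by applying a change of basis at the head and to check that this change of basis leaves both the loss and the (approximate) stationarity condition defining invariance essentially untouched. The one computation that drives everything is the algebraic identity: for any invertible $A\in\mathbb{R}^{n\times n}$ and any head $g$,
\[
\R^\theta_{\Phi A^{-1}}(Ag)=\E_\theta\!\left[\big((Ag)^\top (A^{-1})^\top\Phi^\top X_{-t}-X_t\big)^2\right]=\E_\theta\!\left[\big(g^\top\Phi^\top X_{-t}-X_t\big)^2\right]=\R^\theta_\Phi(g),
\]
i.e.\ the predictor built from $(\Phi A^{-1},Ag)$ is literally the predictor built from $(\Phi,g)$. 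Equivalently $\R^\theta_{\Phi A^{-1}}(h)=\R^\theta_\Phi(A^{-1}h)$, so by the chain rule $\nabla_h\R^\theta_{\Phi A^{-1}}(h)\big|_{h=Ag}=(A^{-1})^\top\nabla_g\R^\theta_\Phi(g)$. Hence, if $(\Phi,f)$ is invariant (so $\nabla\R^\theta_\Phi(f)=0$ for all $\theta\in\ThetaTr$), then $(\Phi A^{-1},Af)$ is invariant; and if $(\Phi,f)$ is $\epsilon$-approximately invariant then $\Vert\nabla\R^\theta_{\Phi A^{-1}}(Af)\Vert\le\Vert(A^{-1})^\top\Vert_{\mathrm{op}}\,\epsilon$ for every $\theta\in\ThetaTr$.

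Next I would choose $A$ so that $Af=\fc$. Assuming $f\neq 0$, take $A=\tfrac{1}{\Vert f\Vert}R$, where $R$ is any orthogonal matrix with $R(f/\Vert f\Vert)=e_1$ (such $R$ exists by extending $f/\Vert f\Vert$ to an orthonormal basis); then $Af=\fc$, $A$ is invertible, and $\Vert(A^{-1})^\top\Vert_{\mathrm{op}}=\Vert f\Vert$. Setting $\Phi':=\Phi A^{-1}$, the gradient relation above gives that $(\Phi',\fc)$ is invariant — exactly, when $(\Phi,f)$ is; with tolerance $\Vert f\Vert\,\epsilon$ in the approximate case — which is the claim. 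The degenerate case $f=0$ makes the head-$f$ predictor identically zero; there $(\Phi,0)$ invariant just says $\Phi^\top\E_\theta[X_{-t}X_t]\approx 0$ for all training $\theta$, and one may instead exhibit a $\Phi'$ whose first column vanishes, for which the head-$\fc$ predictor is again the zero predictor, so the same conclusion holds.

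I expect the only genuine obstacle to be bookkeeping the approximation parameter across the change of basis: an arbitrary invertible $A$ with $Af=\fc$ obtained by completing $\{f\}$ to a basis can have $\Vert A^{-1}\Vert_{\mathrm{op}}$ arbitrarily large, which would inflate $\epsilon$; insisting on the rotation-times-scalar form of $A$ pins the inflation factor to exactly $\Vert f\Vert$, which is controlled under the scale Assumption ($\Vert\Phi\Vert_2\le 1$, $\Vert X\Vert_\infty<1$), since the least-squares head on a bounded representation of bounded data is bounded. Everything else — the loss identity, the chain-rule step, and the existence of the rotation $R$ — is routine linear algebra.
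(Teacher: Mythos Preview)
Your argument is correct and follows the same template as the paper's: build an invertible $A$ sending $f$ to $\fc$ (the paper puts $f$ as the first column and Gram--Schmidt-completes, so that $A\fc=f$ and hence $A^{-1}f=\fc$; you take a scaled rotation $A=\tfrac{1}{\Vert f\Vert}R$ directly), then invoke the identity $\R^\theta_{\Phi A^{-1}}(h)=\R^\theta_\Phi(A^{-1}h)$ to transfer (approximate) stationarity.

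The one substantive difference is your explicit bookkeeping of the approximation constant. The paper asserts the equivalence $\Vert\nabla\R^\theta_\Phi(f)\Vert\le\epsilon\iff\Vert\nabla\R^\theta_{\Phi A}(A^{-1}f)\Vert\le\epsilon$ without comment, but as your chain-rule computation shows, the two gradients differ by the linear map $(A^{-1})^\top$ (or $A^\top$, depending on direction), so the norms need not match exactly unless $A$ is orthogonal. Your choice $A=\tfrac{1}{\Vert f\Vert}R$ pins the inflation factor to $\Vert f\Vert$, whereas the paper's Gram--Schmidt matrix (first column $f$, remaining columns orthonormal) also gives $\Vert A^\top\Vert_{\mathrm{op}}=\max(\Vert f\Vert,1)$ and $\Vert(A^{-1})^\top\Vert_{\mathrm{op}}=\max(\Vert f\Vert^{-1},1)$, so both constructions are equally controlled once $\Vert f\Vert$ is bounded. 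Your version makes this dependence transparent rather than implicit, which is a mild improvement; otherwise the two proofs are the same.
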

\begin{proof}
We will construct $A$ as follows. For the first column, take $A_{:, 0} = f$. Then use Gram-Schmidt to complete $A$, iteratively using columns orthogonal to previous columns. This ensures that $A \fc = f$, while also keeping $A$ invertible. Furthermore, we have that $\Vert \nabla \R^\theta_\Phi(f)\Vert \le \epsilon\iff \Vert \nabla \R^\theta_{\Phi A}(A^{-1}f)\Vert \le \epsilon.$
\end{proof}
Also, since the least squares objective is strongly convex, using gradients instead of loss values does not change the set of invariant representations
\begin{lemma}\label{lemma:lossisgrad}
$$\fc = \argmin \R^\theta_\Phi(f)\hspace{0.3cm} \forall \theta\in\Theta \iff \nabla \R^\theta_\Phi(\fc) = 0 \hspace{0.3cm} \forall\theta\in \Theta$$
\end{lemma}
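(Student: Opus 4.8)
The plan is to use the fact that for each fixed intervention $\theta$ and representation $\Phi$, the map $f \mapsto \R^\theta_\Phi(f)$ is a convex quadratic in $f$, so that its set of global minimizers coincides exactly with its set of critical points. The claimed equivalence then follows simply by quantifying this pointwise fact over all $\theta \in \Theta$.

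Concretely, I would first write the loss out explicitly. Since $\Phi$ is a (diagonal) linear map and the head is linear, $(f\circ\Phi)(X_{-t}) = f^\top\Phi^\top X_{-t}$, so
\[
\R^\theta_\Phi(f) = \E_\theta\big[(f^\top\Phi^\top X_{-t} - X_t)^2\big] = f^\top\big(\Phi^\top\Sigma_\theta\Phi\big)f - 2 f^\top \Phi^\top\E_\theta[X_{-t}X_t] + \E_\theta[X_t^2],
\]
a quadratic in $f$ with gradient $\nabla\R^\theta_\Phi(f) = 2\Phi^\top\Sigma_\theta\Phi f - 2\Phi^\top\E_\theta[X_{-t}X_t]$ (the same expression appearing in the proof of Lemma~\ref{lemma:ERMisIRM}) and Hessian $2\,\Phi^\top\Sigma_\theta\Phi$. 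Because $\Sigma_\theta = \E_\theta[X_{-t}X_{-t}^\top]$ is a covariance matrix it is positive semidefinite, and congruence preserves this, so $\Phi^\top\Sigma_\theta\Phi \succeq 0$; hence $\R^\theta_\Phi$ is convex (and strongly convex whenever $\Phi^\top\Sigma_\theta\Phi \succ 0$, e.g.\ when $\Sigma_\theta$ restricted to the support of the diagonal of $\Phi$ is nonsingular).

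The two directions are then immediate. Forward: if $\fc$ minimizes $\R^\theta_\Phi$ for every $\theta$, the first-order necessary condition for a differentiable function gives $\nabla\R^\theta_\Phi(\fc) = 0$ for every $\theta$ (this uses only differentiability). Reverse: fixing $\theta$ with $\nabla\R^\theta_\Phi(\fc) = 0$, completing the square gives $\R^\theta_\Phi(f) - \R^\theta_\Phi(\fc) = (f-\fc)^\top \Phi^\top\Sigma_\theta\Phi (f-\fc) \ge 0$ for all $f$, so $\fc$ is a global minimizer; since this holds for each $\theta$, we conclude. (This single identity in fact delivers both directions at once, and under strong convexity shows the minimizer is unique.)

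There is essentially no obstacle; the only point requiring a moment's care is the meaning of "$\fc = \argmin_f \R^\theta_\Phi(f)$" when $\Phi^\top\Sigma_\theta\Phi$ is singular, in which case the minimizer need not be unique. In that regime the identity should be read as "$\fc$ lies in the minimizing set for every $\theta$," and the completing-the-square computation above shows this is still equivalent to the gradient condition; when $\R^\theta_\Phi$ is strongly convex the minimizer is the singleton $\{\fc\}$ and the statement holds verbatim, which is the case invoked in the surrounding discussion.
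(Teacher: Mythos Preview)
Your proof is correct and takes essentially the same approach as the paper, which simply states that the equivalence follows from strong convexity of the loss function. You are in fact more careful than the paper: you write out the quadratic explicitly, verify the Hessian is PSD, and address the edge case where $\Phi^\top\Sigma_\theta\Phi$ is singular so that only convexity (not strong convexity) holds.
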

\begin{proof}
Follows from strong convexity of the loss function.
\end{proof}
\begin{lemma}\label{lemma:gradientexpanded}
$$\Vert \nabla \R^\theta_\Phi(\fc)\Vert = \Vert\Phi^\top \Sigma_\theta\Phi\fc - \Phi^\top (\Sigma_\theta\beta-J(1-B_\theta)^{-1}e_t)\Vert$$
\end{lemma}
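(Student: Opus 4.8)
\textbf{Proof plan for Lemma \ref{lemma:gradientexpanded}.}

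The plan is to compute the gradient $\nabla_f \R^\theta_\Phi(f)$ of the least-squares loss $\R^\theta_\Phi(f) = \E_\theta[(f^\top \Phi^\top X_{-t} - X_t)^2]$ by hand and then substitute the evaluation point $f = \fc$ together with the SEM identities already recorded in the excerpt. First I would expand the quadratic: $\R^\theta_\Phi(f) = f^\top \Phi^\top \E_\theta[X_{-t}X_{-t}^\top] \Phi f - 2 f^\top \Phi^\top \E_\theta[X_{-t} X_t] + \E_\theta[X_t^2]$, so that $\nabla_f \R^\theta_\Phi(f) = 2\Phi^\top \Sigma_\theta \Phi f - 2\Phi^\top \E_\theta[X_{-t}X_t]$, where $\Sigma_\theta = \E_\theta[X_{-t}X_{-t}^\top]$ as in Lemma \ref{lemma:inverseexpansion}. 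Dropping the harmless factor of $2$ (or absorbing it into $\epsilon$, as is done implicitly elsewhere in the paper — this should be flagged), we get the first term $\Phi^\top \Sigma_\theta \Phi \fc$ directly, matching the claimed expression.

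The remaining work is to identify the cross-covariance vector $\E_\theta[X_{-t} X_t]$ with $\Sigma_\theta \beta - J(I-B_\theta)^{-1} e_t$. Here I would use the target equation $X_t = \beta_t^\top \Pa_t + \eta_t = \overline\beta_t^\top X_{-t} + \eta_t$ (the zero-padding convention from Section \ref{section:model}), which gives $\E_\theta[X_{-t} X_t] = \E_\theta[X_{-t} X_{-t}^\top]\overline\beta_t + \E_\theta[X_{-t}\eta_t] = \Sigma_\theta \beta + \E_\theta[X_{-t}\eta_t]$, writing $\beta$ for $\overline\beta_t$. So it remains to show $\E_\theta[X_{-t}\eta_t] = -J(I-B_\theta)^{-1}e_t$ — wait, the sign is the opposite, so actually I need $\E_\theta[X_{-t}\eta_t] = $ the negative of what pairs with the minus sign; let me instead just say the goal is to verify $\E_\theta[X_{-t} X_t] = \Sigma_\theta\beta - J(I-B_\theta)^{-1}e_t$, i.e. $\E_\theta[X_{-t}\eta_t] = -J(I-B_\theta)^{-1}e_t$. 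For this I would invoke the vectorized SEM $X = (I-B_\theta)^{-1}\eta$ from Lemma \ref{lemma:inverseexpansion}, so that $X_{-t} = J(I-B_\theta)^{-1}\eta$ with $J$ the $n\times(n+1)$ selector that deletes the $v_t$-coordinate. Then $\E_\theta[X_{-t}\eta_t] = J(I-B_\theta)^{-1}\E_\theta[\eta \eta_t]$. Under the standard SEM assumption that the exogenous noises are mutually independent (in particular uncorrelated) with $\E_\theta[\eta_t]=0$, we have $\E_\theta[\eta\eta_t] = \Var_\theta(\eta_t) e_t$; if the noise is normalized so $\Var_\theta(\eta_t) = 1$ (or one absorbs the scalar), this is $e_t$. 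That yields $\E_\theta[X_{-t}\eta_t] = J(I-B_\theta)^{-1}e_t$, and combining with the cross-term computation gives $\E_\theta[X_{-t}X_t] = \Sigma_\theta\beta + J(I-B_\theta)^{-1}e_t$, which matches the stated formula up to the sign on the second term — so I would double-check the sign conventions for $J$, $e_t$, and the ordering of coordinates in $X = (X_{-t}, X_t)$, since the statement as written has a minus sign.

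The main obstacle I anticipate is precisely this bookkeeping: getting the $J$ / $e_t$ / zero-padding conventions and the sign consistent, and being explicit about which normalization (the $2$ in the gradient, the unit noise variance on $\eta_t$) is being silently absorbed. The conceptual content is just (i) differentiate a quadratic and (ii) substitute $X_t = \overline\beta_t^\top X_{-t} + \eta_t$ and $X = (I-B_\theta)^{-1}\eta$ together with noise independence; none of that is hard, but the statement will only come out verbatim once the indexing of the selector matrix $J$ and the basis vector $e_t\in\mathbb{R}^{n+1}$ are pinned down. A secondary subtlety worth a sentence is that the formula should be checked to be consistent with the earlier displayed special case $\nabla \R^\theta_\Phi(\fc) = \Phi^\top \Sigma_\theta \Phi \fc - \Phi^\top \E_\theta[X_\theta X_t^\top]$ used in the proof of Lemma \ref{lemma:ERMisIRM}, which it is, since there $\E_\theta[X\eta_t] = \E_\theta[X]\E_\theta[\eta_t] = 0$ under the stronger hypothesis $X\indep\eta_t$ — so the two are reconciled by noting that in general $\E_\theta[X_{-t}\eta_t]$ need not vanish and equals $J(I-B_\theta)^{-1}e_t$.
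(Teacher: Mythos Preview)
Your proposal is correct and follows essentially the same route as the paper: differentiate the quadratic, substitute $X_t = \overline\beta_t^\top X_{-t} + \eta_t$, then write $X_{-t} = J(I-B_\theta)^{-1}\eta$ and use $\E_\theta[\eta\eta_t] = e_t$. Your caution about the sign and the dropped factor of $2$ is well placed --- the paper's own derivation silently flips the sign of the $J(I-B_\theta)^{-1}e_t$ term in its last line and carries the factor of $2$ without comment, so the bookkeeping you flag is exactly the loose end present in the original.
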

\begin{proof}
\begin{align}
&\Vert \nabla \E[\left(f^\top \Phi^\top X_{-t} - X_t\right)^2\big|_{f = f_\circ}\Vert_2\nonumber\\
&=2\Vert \E[ \Phi^\top X_{-t}\left(X_{-t}^\top \Phi \fc - X_t\right)]\Vert \\
&= 2\Vert \Phi^\top \Sigma_{\theta}\Phi\fc - \Phi^\top \E_{\theta}[X_{-t}X_t^\top]\Vert\nonumber\\
&= 2\Vert \Phi^\top \Sigma_{\theta}(\Phi\fc-\overline\beta_t)-\Phi^\top\E_{\theta}[X_{-t}^\top\eta_t]\Vert\nonumber\\ 
&=2\Vert \Phi^\top \Sigma_\theta(\Phi\fc - \beta)-\Phi^\top \E_\theta[J(1-B_\theta)^{-1}\eta\eta_t]\Vert\nonumber\\
&=2\Vert\Phi^\top \Sigma_\theta(\Phi\fc - \beta)-\Phi^\top J(1-B_\theta)^{-1}\E_\theta[\eta\eta_t]\Vert\nonumber\\
&=2\Vert\Phi^\top \Sigma_\theta\Phi\fc - \Phi^\top (\Sigma_\theta\beta-J(1-B_\theta)^{-1}e_t)\Vert
\end{align}
\end{proof}

\begin{lemma}\label{appendix_lemma:invariance_to_classification}
There is a function class $\F$ mapping interventions $\theta\in \Theta$ to $\{0, 1\}$, such that
$$\Phi\in \I_{\fc}^\epsilon(\ThetaTr) \iff f^\epsilon_\Phi(\theta) = 1 ~\forall \theta\in \ThetaTr$$ 
\end{lemma}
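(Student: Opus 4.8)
The plan is to make the set $\I^\epsilon_{\fc}(\theta)$ of representations that are $\epsilon$-approximately invariant for a \emph{single} intervention coincide with the "positive" preimage of a halfspace-type classifier applied to a feature vector built from the moments of $\Delta_\theta$. The key is Lemma \ref{lemma:gradientexpanded}: for a diagonal $\Phi$, the condition $\Vert \nabla \R^\theta_\Phi(\fc)\Vert \le \epsilon$ can be rewritten as $\Vert \Phi^\top \Sigma_\theta \Phi \fc - \Phi^\top w_\theta \Vert \le \epsilon$, where $w_\theta := \Sigma_\theta\beta - J(1-B_\theta)^{-1}e_t$ is a vector determined by the intervention. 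So the data point associated with intervention $\theta$ should be a suitable encoding of the pair $(\Sigma_\theta, w_\theta)$, and each representation $\Phi$ should index a classifier $f^\epsilon_\Phi$ that, on input (the encoding of) $\theta$, outputs $1$ exactly when the above norm inequality holds.

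First I would fix the head $\fc = (1,0,\dots,0)$ — legitimate by Lemma \ref{lemma:headdoesntmatter} — and restrict to diagonal $\Phi$, writing $\phi = \text{diag}(\Phi)$. Then I would expand each coordinate of $\Phi^\top \Sigma_\theta \Phi \fc - \Phi^\top w_\theta$ as a polynomial in the entries of $\phi$ with coefficients that are \emph{linear} in the entries of $\Sigma_\theta$ and $w_\theta$: the term $\Phi^\top\Sigma_\theta\Phi\fc$ is quadratic in $\phi$ (using Lemma \ref{lemma:normsquare} / Lemma \ref{ravelasmatrix} to flatten $r(\Phi)$ against $\phi^{(\dot 2)}$), and $\Phi^\top w_\theta$ is linear in $\phi$. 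Squaring and summing, $\Vert \nabla\R^\theta_\Phi(\fc)\Vert^2$ becomes $\langle g(\phi), m(\theta)\rangle$ for a fixed vector-valued map $g$ of $\phi$ (monomials up to degree $4$ in the $n$ entries of $\phi$) and a fixed vector-valued map $m$ of the intervention (the corresponding moments/products of entries of $\Sigma_\theta$ and $w_\theta$), both of dimension polynomial in $n$. Hence the defining inequality $\langle g(\phi), m(\theta)\rangle \le \epsilon^2$ is, for each fixed $\phi$, a \emph{halfspace} in the variable $m(\theta)$: the classifier is $f^\epsilon_\Phi(\theta) = \mathbbm{1}\{\, g(\phi)^\top m(\theta) \le \epsilon^2\,\}$, and $\F := \{\, \theta \mapsto \mathbbm{1}\{\, a^\top m(\theta) \le \epsilon^2\,\} : a \in \text{image}(g)\,\}$. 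By construction $\Phi\in\I^\epsilon_{\fc}(\theta) \iff f^\epsilon_\Phi(\theta) = 1$, and intersecting over $\theta\in\ThetaTr$ gives $\Phi\in\I^\epsilon_{\fc}(\ThetaTr) \iff f^\epsilon_\Phi(\theta)=1\ \forall\theta\in\ThetaTr$, which is the claim.

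The main obstacle is purely bookkeeping rather than conceptual: carefully writing $\Vert\Phi^\top\Sigma_\theta\Phi\fc - \Phi^\top w_\theta\Vert^2$ as a genuine bilinear pairing $g(\phi)^\top m(\theta)$ with \emph{$\phi$-side and $\theta$-side cleanly separated}, so that for fixed $\phi$ the sublevel set in $m(\theta)$ really is a single linear inequality (a halfspace) and not something more complex. One has to check that every appearance of $\Sigma_\theta$ and of $w_\theta = \Sigma_\theta\beta - J(1-B_\theta)^{-1}e_t$ can be absorbed into the $\theta$-dependent coordinate vector $m(\theta)$ while all $\phi$-dependence (including the degree-$4$ terms from squaring the quadratic part) goes into $g(\phi)$; the threshold $\epsilon^2$ is a constant, so it does not interfere. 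Once this separation is in place, the lemma follows immediately, and — looking ahead — feeding $\dim(m(\theta))$ into Lemma \ref{lemma:halfspaceVC} and Corollary \ref{cor:VCtoPAC} is what will yield the $O(n^4)$ bound of Theorem \ref{thm:general}; so I would choose the encoding $m(\theta)$ to have dimension $O(n^4)$, e.g.\ by taking products of pairs of entries drawn from $r(\Sigma_\theta)\in\mathbb{R}^{n^2}$ together with lower-order terms.
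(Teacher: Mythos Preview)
Your proposal is correct, but it does considerably more than the paper does at this point. The paper's proof of this lemma is essentially tautological: it invokes Lemma~\ref{lemma:gradientexpanded} to rewrite the invariance condition as
\[
2\Vert\Phi^\top \Sigma_\theta\Phi\fc - \Phi^\top (\Sigma_\theta\beta-J(1-B_\theta)^{-1}e_t)\Vert \le \epsilon,
\]
and then simply \emph{defines} $f^\epsilon_\Phi(\theta)$ to be the indicator of this inequality, with $\F$ the collection of all such indicators indexed by $\Phi$. The equivalence $\Phi\in\I^\epsilon_{\fc}(\ThetaTr)\iff f^\epsilon_\Phi(\theta)=1\ \forall\theta\in\ThetaTr$ is then immediate from the definition $\I^\epsilon_{\fc}(\ThetaTr)=\bigcap_{\theta\in\ThetaTr}\I^\epsilon_{\fc}(\theta)$.

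What you do differently is front-load the algebraic separation $\Vert\nabla\R^\theta_\Phi(\fc)\Vert^2 = g(\phi)^\top m(\theta)$ so that $\F$ is visibly a family of halfspaces from the outset. In the paper this bilinear decomposition is deferred to Lemma~\ref{lemma:general} and the proof of Theorem~\ref{thm:general}, where it is actually needed to bound the VC dimension. Your route buys a cleaner pipeline (the VC bound is then one line via Lemma~\ref{lemma:halfspaceVC}), at the cost of making the present lemma look harder than it is; the paper's route keeps this lemma trivial and isolates the real content---the polynomial/halfspace structure---in the theorem where it pays off. Either organization is fine; just be aware that for \emph{this} statement alone no expansion is required.
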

\begin{proof}

By Lemmas \ref{lemma:gradientexpanded} and \ref{lemma:inverseexpansion}, the invariance condition can be simplified as follows (for approriate $J, e_t$ defined in the Lemma):
\begin{equation}
\Vert \nabla \E[\left(f^\top \Phi^\top X_{-t} - X_t\right)^2\big|_{f_\circ}\Vert_2=2\Vert\Phi^\top \Sigma_\theta\Phi\fc - \Phi^\top (\Sigma_\theta\beta-J(1-B_\theta)^{-1}e_t)\Vert\le \epsilon.\label{eq:gradientsimplified}
\end{equation}
Our desired function class is $\F = \{f^\epsilon_\Phi(\theta)\}$ parameterized by $\Phi$ where  
\begin{equation*}
f^\epsilon_\Phi(\theta):=
\mathbbm{1}\{2\Vert\Phi^\top \Sigma_\theta\Phi\fc - \Phi^\top (\Sigma_\theta\beta-J(1-B_\theta)^{-1}e_t)\Vert \le \epsilon\},
\end{equation*}
is a function mapping an intervention to $\{0, 1\}$ such that
$\Phi\in \I^\epsilon_{\fc}(\theta) \iff f^\epsilon_\Phi(\theta) = 1.$
\end{proof}


\begin{corollary}\label{appendix_cor:VCtoPAC}
If the VC dimension of $\F$ is bounded by $d$, then given at least $O(\frac{d+\log\frac{1}{\delta}}{\delta'})$ training interventions, if $\I^\epsilon(\Theta)$ is not empty, we have that with probability $1-\delta$ over the set of training interventions, 
$$\Pr_{\theta \sim \D} (\I^\epsilon_{\fc}(\ThetaTr)\not\subset \I^\epsilon_{\fc}(\theta)) \leq \delta'. $$
\end{corollary}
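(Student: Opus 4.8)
The plan is to read this off as a textbook realizable PAC bound for the binary class $\F$ constructed in Lemma~\ref{appendix_lemma:invariance_to_classification}, and then quote Lemma~\ref{lemma:PAC}. Recall from that lemma that $\F=\{f^\epsilon_\Phi\}$ is indexed by representations $\Phi$, with $f^\epsilon_\Phi(\theta)=1\iff\Phi\in\I^\epsilon_{\fc}(\theta)$; consequently $\Phi\in\I^\epsilon_{\fc}(\ThetaTr)$ iff $f^\epsilon_\Phi$ labels every training intervention $1$. So the only things left to supply are: realizability of the induced learning problem, the (assumed) VC bound $d$, and a translation of the PAC conclusion back into the language of $\I^\epsilon_{\fc}$.

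First I would check realizability. By hypothesis $\I^\epsilon(\Theta)\ne\varnothing$ (indeed Lemma~\ref{lemma:realizable} produces an \emph{exactly} invariant representation), so by Lemma~\ref{lemma:headdoesntmatter} I can reparameterize its head to the fixed vector $\fc$ and obtain a representation $\Phi^\star$ with $\Vert\nabla\R^\theta_{\Phi^\star}(\fc)\Vert\le\epsilon$ for every $\theta\in\Theta$; equivalently $f^\epsilon_{\Phi^\star}(\theta)=1$ for all $\theta$. Pushing $\D$ forward to the distribution $\tilde\D$ on $\Theta\times\{0,1\}$ that emits $(\theta,1)$ with $\theta\sim\D$, we get $\Pr_{(\theta,y)\sim\tilde\D}[f^\epsilon_{\Phi^\star}(\theta)\ne y]=0$, which is exactly the realizable hypothesis of Lemma~\ref{lemma:PAC}.

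Next I would invoke Lemma~\ref{lemma:PAC} with $\delta_1=\delta$, $\delta_2=\delta'$, and VC dimension $d$: with $m=O\!\left(\frac{d+\log(1/\delta)}{\delta'}\right)$ interventions drawn i.i.d.\ from $\D$, with probability at least $1-\delta$ over $\ThetaTr$ every hypothesis of $\F$ consistent with the (all-ones) training labels has $\tilde\D$-error below $\delta'$. Now translate: $f^\epsilon_\Phi$ is consistent with the training labels iff $\Phi\in\I^\epsilon_{\fc}(\ThetaTr)$, and its $\tilde\D$-error is $\Pr_{\theta\sim\D}[f^\epsilon_\Phi(\theta)=0]=\Pr_{\theta\sim\D}[\Phi\notin\I^\epsilon_{\fc}(\theta)]$. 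Hence, with probability at least $1-\delta$ over $\ThetaTr$, every $\Phi\in\I^\epsilon_{\fc}(\ThetaTr)$ satisfies $\Pr_{\theta\sim\D}[\Phi\notin\I^\epsilon_{\fc}(\theta)]<\delta'$, which is the asserted bound on $\Pr_{\theta\sim\D}(\I^\epsilon_{\fc}(\ThetaTr)\not\subset\I^\epsilon_{\fc}(\theta))$.

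The step I expect to require the most care is this last translation. Lemma~\ref{lemma:PAC} is stated for a single (data-dependent) consistent predictor, while $\I^\epsilon_{\fc}(\ThetaTr)\not\subset\I^\epsilon_{\fc}(\theta)$ concerns \emph{some} member of the entire version space $\I^\epsilon_{\fc}(\ThetaTr)$ failing at $\theta$. I would make explicit that the realizable VC argument behind Lemma~\ref{lemma:PAC} (symmetrization plus the growth-function bound) actually certifies every sample-consistent hypothesis at once, so the $1-\delta'$ guarantee is uniform over $\Phi\in\I^\epsilon_{\fc}(\ThetaTr)$ and the inclusion is to be read per representation. I would also sanity-check two minor points: that the invertible reparameterization in Lemma~\ref{lemma:headdoesntmatter} keeps $\Phi^\star$ in whatever representation family $\F$ is indexed over (it does for general linear $\Phi$), and that $\tilde\D$ is a legitimate pushforward distribution so that Lemma~\ref{lemma:PAC} applies verbatim.
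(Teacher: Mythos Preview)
Your proposal is correct and follows essentially the same route as the paper: define the pushforward $\tilde\D$ on $\Theta\times\{0,1\}$ that always emits label $1$, verify realizability via an everywhere-invariant representation, invoke Lemma~\ref{lemma:PAC} with the assumed VC bound $d$, and translate back. You are in fact slightly more careful than the paper's proof, which simply asserts $\Phi\in\I^\epsilon_{\fc}(\Theta)$ from the hypothesis $\I^\epsilon(\Theta)\ne\varnothing$ without invoking the head reparameterization of Lemma~\ref{lemma:headdoesntmatter}, and you correctly flag the version-space subtlety (uniformity over all consistent $\Phi$) that the paper leaves implicit.
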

\begin{proof}
Let $\tilde{D}$ be a distribution over $\Theta \times \{\pm1\}$ such that the marginal over $\Theta$ is $\D$, and the marginal over $\{\pm 1\}$ is the dirac-delta on the element '1' (i.e. it always takes value $1$). We can now apply Lemma \ref{lemma:PAC} to our function class $\F=\{f^{\epsilon}_{\Phi}(\theta)\}$ and distribution $\tilde{\D}$. By assumption, there exists a representation $\Phi\in \I^\epsilon_{\fc}(\Theta) = \bigcap_{\theta\in \ThetaTr} \I^\epsilon(\theta)$, which means $f^\epsilon_{\Phi}(\theta) = 1$ for all $\theta\in \Theta.$ 
Since $\ThetaTr\subseteq\Theta$, we have $f^\epsilon_{\Phi}(\theta) = 1$ for all $\theta\in \ThetaTr.$
PAC learnability now states that given $O(\frac{d+\log\frac{1}{\delta}}{\delta'})$ interventions, if $\tilde \Phi$ is such that $\tilde \Phi\in f^\epsilon_\Phi(\ThetaTr)$ (so that it is a hypothesis that fits the training data exactly), we have that with probability at least $1-\delta$, $\Pr_{\theta\sim \D}[f_{\tilde \Phi}^\epsilon(\theta) = 1] >1-\delta'.$
\end{proof}

\section{Intervention Models}\label{section:intervention_models_appendix}
\subsection{General Interventions}
\begin{lemma}\label{lemma:general}
For some matrix $U_\theta$, we have $$\Phi^\top \Sigma_\theta\Phi\fc - \Phi^\top (\Sigma_\theta\beta-J(1-B_\theta)^{-1}e_t) = U_\theta r(\Phi)^{(2)}.$$
\end{lemma}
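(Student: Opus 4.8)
The plan is to show that the key expression from the invariance condition is, for each fixed intervention $\theta$, a \emph{linear} function of the quadratic monomials $r(\Phi)^{(2)}$ of the entries of $\Phi$, with the coefficient matrix $U_\theta$ collecting all the $\theta$-dependent data ($\Sigma_\theta$, $B_\theta$, $\beta$, etc.). Since $\Phi$ is diagonal, $r(\Phi)^{(2)}$ has only a constant number (order $n^2$) of relevant coordinates, which is what later lets us bound the VC dimension via the halfspace bound of Lemma~\ref{lemma:halfspaceVC}.

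\textbf{Step 1: Handle the quadratic term.} The term $\Phi^\top \Sigma_\theta \Phi \fc$ is bilinear in $\Phi$, hence each of its $n$ coordinates is a linear combination of products $\Phi_{ij}\Phi_{kl}$, i.e.\ of coordinates of $r(\Phi)^{(2)}$. Concretely, using Lemma~\ref{lemma:normsquare}-style bookkeeping (writing quadratic forms in $\Phi$ in terms of $r(\cdot)$) together with the fact that $\fc = (1,0,\dots,0)$ is fixed, one identifies the coefficient of each monomial; package these coefficients, for all $n$ output coordinates, into a matrix $U_\theta^{(1)}$ so that $\Phi^\top\Sigma_\theta\Phi\fc = U_\theta^{(1)} r(\Phi)^{(2)}$. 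The entries of $U_\theta^{(1)}$ are (signed) entries of $\Sigma_\theta$.

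\textbf{Step 2: Handle the affine term.} The term $\Phi^\top(\Sigma_\theta\beta - J(1-B_\theta)^{-1}e_t)$ is linear in $\Phi$. To merge it with the quadratic part — which is what $r(\Phi)^{(2)}$ demands — note $r(\Phi)^{(2)}$ contains, among its monomials, the degree-one monomials as well only if we use the ``$\le 2$'' convention; here the cleaner route is to observe that linear terms in $\Phi$ can be written as $\Phi_{ij}\cdot\Phi_{kk}$ after multiplying by a suitable constant is \emph{not} available, so instead one either (a) augments with a dummy coordinate, or (b), more simply, absorbs the linear term into the quadratic representation using that $\Phi$ is diagonal and, when restricted to the feature-selector cone, invariance is unaffected by scaling (Lemma~\ref{lemma:headdoesntmatter}); most directly, one just notes that the statement only asserts existence of \emph{some} $U_\theta$, and $r(\Phi)^{(2)}$ as defined in the Notation section includes all monomials of degree $\le 2$, hence includes the constant $1$ and the linear monomials $\Phi_{ii}$. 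Thus the vector $c_\theta := \Sigma_\theta\beta - J(1-B_\theta)^{-1}e_t$ contributes a matrix $U_\theta^{(2)}$ picking out the linear coordinates of $r(\Phi)^{(2)}$, and we set $U_\theta := U_\theta^{(1)} - U_\theta^{(2)}$.

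\textbf{Step 3: Assemble.} Combining, $\Phi^\top\Sigma_\theta\Phi\fc - \Phi^\top c_\theta = U_\theta r(\Phi)^{(2)}$, which is the claim. I would close by remarking that all entries of $U_\theta$ are polynomial expressions in the SEM parameters of environment $\theta$ (via Lemma~\ref{lemma:inverseexpansion}, $\Sigma_\theta$ and $(1-B_\theta)^{-1}$ are finite polynomial expansions), which will matter for the subsequent dimension count. The main obstacle is purely bookkeeping: matching the multi-index structure of $r(\Phi)^{(2)}$ (flattening of $\Phi$, then exact-degree-$2$ monomials) to the coordinates produced by the bilinear map $\Phi\mapsto\Phi^\top\Sigma_\theta\Phi\fc$, and making sure the linear/constant part is consistently expressed within the same monomial vector; there is no analytic difficulty, only the risk of an off-by-one in the monomial indexing, which the Notation section's conventions are set up to absorb.
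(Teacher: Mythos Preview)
Your approach is essentially the paper's: split the expression into the bilinear piece $\Phi^\top\Sigma_\theta\Phi\fc$ (yielding a matrix $U^1_\theta$ acting on the degree-two monomials) and the linear piece $\Phi^\top(\Sigma_\theta\beta - J(1-B_\theta)^{-1}e_t)$ (yielding a matrix $U^2_\theta$ acting on the degree-one monomials), then set $U_\theta = U^1_\theta + U^2_\theta$. The paper does exactly this, writing out the entries of $U^1_\theta$ and $U^2_\theta$ explicitly.

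One correction on Step~2: you assert that ``$r(\Phi)^{(2)}$ as defined in the Notation section includes all monomials of degree $\le 2$''. That is not what the Notation section says --- it defines $a^{(k)}$ as monomials of degree \emph{exactly} $k$, and $a^{(\dot k)}$ (with the dot) as monomials of degree $\le k$. The linear part therefore cannot sit inside $r(\Phi)^{(2)}$ as stated. However, this is really a typo in the lemma statement: the paper's own proof works throughout with $r(\Phi)^{(\dot 2)}$, and the downstream Theorem~\ref{thm:general} also uses $(\dot 2)$ and $(\dot 4)$. So your argument is correct for the intended statement with $r(\Phi)^{(\dot 2)}$; just be precise that you are invoking the dotted convention rather than claiming the undotted one already contains the linear terms.
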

\begin{proof}

We will see that both of these terms can be written as $U_\theta r(\Phi)^{(\dot 2)}$ for some (different) choices of $U_\theta$. We will enumerate the coordinates of $r(\Phi)^{(\dot 2)}_{(ij),(kl)}$ as $\Phi_{ij}\Phi_{kl}$. Note that $r(\Phi)^{(\dot 2)}$ also contains the individual $\Phi_{ij}$, and we will use $\cdot$ to indicate this extra alphabet. To clarify, some of the coordinates are then of the form $r(\Phi)^{(\dot 2)}_{(\cdot, (kl))} = \Phi_{kl}$. Now observe that
$$\Phi^\top \Sigma_\theta\Phi\fc = U^1_\theta r(\Phi)^{(\dot 2)}$$
where $$(U^1_\theta)_{a, ((ij),(kl))} = 
\begin{cases}
\Sigma_{jk}(\fc)_l, & a = i\text{ and }(ij)\ne \cdot\text{ and } (kl)\ne \cdot\\
0, & a\ne i \text{ or } (ij) = \cdot \text{ or } (kl)= \cdot
\end{cases}
$$
The second term is almost in the form we would like already
$$\Phi^\top (\Sigma_\theta\beta-J(1-B_\theta)^{-1}e_t) = U^2_\theta r(\Phi)^{(\dot 2)}$$
where $$(U^2_\theta)_{a, ((ij),(kl))} = 
\begin{cases}
(\Sigma_\theta\beta-J(1-B)^{-1}e_t)_{ij}&  (ij)\ne \cdot\text{ and } (kl)= \cdot\\
0, & (ij)= \cdot\text{ or } (kl)\ne \cdot\\
\end{cases}
$$
Our result follows by setting $U_\theta  = U_\theta ^1+U_\theta ^2$.
\end{proof}

\begin{theorem}

Suppose that we are given $O(\frac{n^{4}+\log\frac{1}{\delta}}{\delta'})$ interventions drawn independently from distribution $\D$ over the intervention index set $\Theta$. Then with probability at least $1-\delta$ over the randomness in $\ThetaTr$, the following statement holds: 
$$\Pr_{\theta \sim \D} (\I^\epsilon_{\fc}(\ThetaTr)\notin \I^\epsilon_{\fc}(\theta)) \leq \delta'. $$


In other words, an $\epsilon$- approximate invariant representation on the training environments generalizes with high probability to the test environment.
\end{theorem}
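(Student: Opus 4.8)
The plan is to bound the VC dimension of the function class $\F = \{f^\epsilon_\Phi(\theta)\}$ from Lemma \ref{appendix_lemma:invariance_to_classification} by $O(n^4)$, and then invoke Corollary \ref{appendix_cor:VCtoPAC} together with Lemma \ref{lemma:realizable} (which guarantees $\I^\epsilon(\Theta)\ne\{\}$, so the realizable hypothesis of Lemma \ref{lemma:PAC} exists). The whole game is thus reducing a statement about generalization to a VC-dimension computation for a concrete classifier on the intervention parameter $\theta$.

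The key observation is Lemma \ref{lemma:general}: the vector inside the norm defining $f^\epsilon_\Phi(\theta)$ can be written as $U_\theta\, r(\Phi)^{(\dot 2)}$, where $U_\theta$ collects the intervention-dependent quantities ($\Sigma_\theta$, $\beta$, $(1-B_\theta)^{-1}$) and $r(\Phi)^{(\dot 2)}$ collects the representation-dependent quantities (degree-$\le 2$ monomials in the entries of $\Phi$). So $f^\epsilon_\Phi(\theta) = \Ind\{\Vert U_\theta v(\Phi)\Vert \le \epsilon\}$ where $v(\Phi) = r(\Phi)^{(\dot 2)}$. Squaring, this is $\Ind\{v(\Phi)^\top U_\theta^\top U_\theta v(\Phi) \le \epsilon^2\}$, i.e. a threshold on a quadratic form in $v(\Phi)$ whose coefficient matrix depends on $\theta$. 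Equivalently, writing $w(\Phi)$ for the vector of degree-$\le 2$ monomials in the \emph{entries of} $v(\Phi)$ (so degree-$\le 4$ monomials in $\Phi$, hence dimension $O(n^4)$ after accounting for the diagonal structure of $\Phi$ which collapses $r(\Phi)$ to $\mathrm{diag}(\Phi)\in\mathbb R^n$ via Lemma \ref{ravelasmatrix}), the quadratic form $v(\Phi)^\top U_\theta^\top U_\theta v(\Phi)$ is a \emph{linear} functional $a(\theta)^\top w(\Phi)$ of $w(\Phi)$. Therefore $f^\epsilon_\Phi(\theta) = \Ind\{a(\theta)^\top w(\Phi) - \epsilon^2 \le 0\}$: for each fixed $\Phi$ this is a halfspace indicator in the variable $a(\theta)$. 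But VC dimension is symmetric in the roles of the "data" and the "parameter" up to the usual dual-shattering argument — the class of sets $\{\theta : f^\epsilon_\Phi(\theta)=1\}$ indexed by $\Phi$ is contained in the class of preimages, under the fixed map $\theta\mapsto a(\theta)$, of halfspaces in the $w$-coordinates; since composing with a fixed map cannot increase VC dimension, and halfspaces in $\mathbb R^D$ have VC dimension $D+1$ by Lemma \ref{lemma:halfspaceVC}, we get $\mathrm{VCdim}(\F)\le D+1 = O(n^4)$. Plugging $d = O(n^4)$ into Corollary \ref{appendix_cor:VCtoPAC} yields the claimed interventional complexity.

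Concretely the steps are: (1) restrict to diagonal $\Phi$ and use Lemma \ref{ravelasmatrix} to replace $r(\Phi)$ by $\mathrm{diag}(\Phi)\in\mathbb R^n$, so all representation-dependent vectors live in a space whose monomials up to degree $4$ have dimension $O(n^4)$; (2) expand $\Vert U_\theta v(\Phi)\Vert^2$ as a bilinear pairing $a(\theta)^\top w(\Phi)$, carefully tracking that every entry of $a(\theta)$ is some polynomial in $\Sigma_\theta, \beta, (1-B_\theta)^{-1}$ — crucially we never need to control \emph{how} $a$ depends on $\theta$, only that it is a fixed (if complicated) function; (3) observe $f^\epsilon_\Phi(\theta) = \Ind\{a(\theta)^\top w(\Phi)\le \epsilon^2\}$ is, as a function of $\theta$, the pullback of a halfspace indicator on $w(\Phi)$; (4) bound the VC dimension by that of halfspaces in $\mathbb R^{O(n^4)}$ via Lemma \ref{lemma:halfspaceVC} and the fact that pulling back a hypothesis class along a fixed function does not increase VC dimension; (5) conclude with Corollary \ref{appendix_cor:VCtoPAC} and Lemma \ref{lemma:realizable}.

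The main obstacle is step (3)–(4): making precise the "swap" between the representation playing the role of the parameter and the intervention playing the role of the sample point. The natural reading of Lemma \ref{lemma:halfspaceVC} gives VC dimension of halfspaces as classifiers of points in $\mathbb R^D$; here $\Phi$ indexes the classifier while $\theta$ is the point, but the linear structure is in the $\Phi$-coordinates, not the $\theta$-coordinates. The clean fix is to note that a set of interventions $\{\theta_1,\dots,\theta_N\}$ is shattered by $\F$ iff the points $\{a(\theta_1),\dots,a(\theta_N)\}\subset\mathbb R^D$ are shattered by the halfspace class $\{\Ind\{a^\top w \le \epsilon^2\} : w = w(\Phi)\}$, and the latter is a subclass of all halfspaces in $\mathbb R^D$ (affine threshold with fixed offset $\epsilon^2$ and direction ranging over the image of $w(\cdot)$), so its VC dimension is at most $D+1$; since the $a(\theta_i)$ are just specific points in $\mathbb R^D$, no more than $D+1$ of them can be shattered. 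One should also double-check the exact dimension count $D$ — degree-$\le 4$ monomials in $n$ diagonal entries number $\binom{n+4}{4} = O(n^4)$ — and confirm the offset/non-homogeneity does not cost more than the $+1$ already in Lemma \ref{lemma:halfspaceVC}. Everything else is bookkeeping.
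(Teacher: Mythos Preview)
Your proposal is correct and follows essentially the same route as the paper's proof: factor the gradient via Lemma \ref{lemma:general} as $U_\theta\,r(\Phi)^{(\dot 2)}$, square to get a linear expression in degree-$\le 4$ monomials of $\mathrm{diag}(\Phi)$, recognize the resulting class as (a pullback of) halfspaces in $\mathbb{R}^{O(n^4)}$, and apply Lemma \ref{lemma:halfspaceVC} together with Corollary \ref{appendix_cor:VCtoPAC}. Your explicit treatment of the ``swap'' (that composing with the fixed map $\theta\mapsto a(\theta)$ cannot increase VC dimension) and your invocation of Lemma \ref{lemma:realizable} for realizability are points the paper leaves implicit, but the argument is otherwise identical.
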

\begin{proof}
We again begin with our expression for the gradient of the loss given by Lemma \ref{lemma:gradientexpanded}.
$$\nabla \R^\theta_\Phi(\fc) = \Phi^\top \Sigma_\theta\Phi\fc - \Phi^\top (\Sigma_\theta\beta-J(1-B_\theta)^{-1}e_t).$$
We show in Lemma \ref{lemma:general} that we can write this as 
$\nabla \R^\theta_\Phi(\fc) = U_\theta r(\Phi)^{(\dot2)}$ for some matrix $U_\theta $. From Lemma \ref{lemma:normsquare} we can write the squared norm as 
$\Vert \nabla \R^\theta_\Phi(\fc)\Vert_2^2 = r(U_\theta ^\top U_\theta )(r(\Phi)^{(\dot 2)})^{(2)}.$
Finally, we can now write our function class as a class of halfspace classifiers.
$$f^\epsilon_\Phi(\theta)= 1\iff r(U_\theta ^\top U_\theta )(r(\Phi)^{(\dot 2)})^{(2)}\le \epsilon.$$
Now consider the map $\Psi: \text{diag}(\Phi^{(\dot 4)})\to r(\Phi)^{(\dot 2)(2)}$. Because $\Phi$ is diagonal, this is well defined. Each term in $r(\Phi)^{(\dot 2)(2)}$ contains at most $4$ terms of $\Phi$, and each of these is an element of $\Phi^{(\dot 4)}$. Also, since the relation between entries of $r(\Phi)^{(\dot 2)(2)}$ and entries of $\text{diag}(\Phi^{(\dot 4)})$ is fixed, $\Psi$ is actually a linear transformation (that is, it is some sparse $0/1-$matrix). We can then write  
$$f^\epsilon_\Phi(\theta)= 1\iff r(U_\theta ^\top U_\theta )\Psi \text{diag}(\Phi^{(\dot 4)})\le \epsilon.$$
Finally, note that $\text{diag}(\Phi^{(\dot 4)})\subset \mathbb{R}^{(n+1)^4}$, that is, we have written the function class $\F$ as corresponding to some \textit{subset} of all halfspace classifiers in $\mathbb{R}^{(n+1)^4}$. By Lemma \ref{lemma:halfspaceVC}, we know that the VC dimension of $\F$ is bounded by $(n+1)^4$. The result follows from Corollary \ref{cor:VCtoPAC}.
\end{proof}

\subsection{Atomic interventions}
Here we consider the instance in which our family of interventions consists of atomic interventions on a total of $k$ nodes $\At$. An atomic intervention at node $i$ replaces the generative equation $X_i = \gamma_i(e)^\top \Pa_i + \eta_i$ by the \textit{assignment} $X_i = a_i$ for some scalar $a_i$. Note that we can interpret this as zeroing out the corresponding entries of $B$ and changing the noise variable to be a constant $a_i$. 

\begin{lemma}\label{lemma:atomic}
For some matrix $U$, we have $$\Vert \Phi^\top \Sigma_\theta\Phi\fc - \Phi^\top (\Sigma_\theta\beta-J(1-B_\theta)^{-1}e_t)\Vert_2^2 = r(\Phi)^{(\dot 4)} U a^{(\dot 2)}.$$
\end{lemma}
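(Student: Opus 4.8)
\textbf{Proof plan for Lemma~\ref{lemma:atomic}.}
The plan is to mimic the structure of the proof of Lemma~\ref{lemma:general}, but exploit the special structure of atomic interventions to pull \emph{all} of the intervention-dependence out of the $\Phi$-dependent factor. Under an atomic intervention $\theta$ on the set $\At$ of $k$ nodes, the matrix $B_\theta$ is obtained from a fixed base matrix by zeroing the rows indexed by the intervened nodes, and the noise vector becomes $\eta$ with the intervened coordinates replaced by the scalars $a_i^\theta$. Consequently, by Lemma~\ref{lemma:inverseexpansion}, both $\Sigma_\theta$ and $J(1-B_\theta)^{-1}e_t$ are \emph{polynomials of bounded degree} in the intervention values $a = (a_i^\theta)_{i\in\At}$: expanding $(1-B_\theta)^{-1}=\sum_{k\le n-1}B_\theta^k$ and $\Xi_\theta = \E_\theta[\eta\eta^\top]$ (which is affine in the $a_i a_j$ and $a_i$), one sees every entry of $\Sigma_\theta$ is a polynomial in $a$ of degree at most $2$, and likewise for the second term. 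So each entry of the vector $\nabla\R^\theta_\Phi(\fc) = \Phi^\top\Sigma_\theta\Phi\fc - \Phi^\top(\Sigma_\theta\beta - J(1-B_\theta)^{-1}e_t)$ can be written, using Lemma~\ref{lemma:general}'s bookkeeping, as a bilinear form $\big(r(\Phi)^{(\dot 2)}\big)^\top M\, a^{(\dot 2)}$ for a \emph{fixed} matrix $M$ not depending on $\theta$.

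Next I would take the squared norm. Writing $\nabla\R^\theta_\Phi(\fc) = W(\Phi)\, a^{(\dot 2)}$ where the $i$-th row of $W(\Phi)$ is $\big(r(\Phi)^{(\dot 2)}\big)^\top M_i$ for fixed $M_i$, we get $\Vert\nabla\R^\theta_\Phi(\fc)\Vert_2^2 = \big(a^{(\dot 2)}\big)^\top W(\Phi)^\top W(\Phi)\, a^{(\dot 2)}$. The entries of $W(\Phi)^\top W(\Phi)$ are quadratic in $r(\Phi)^{(\dot 2)}$, hence each is a polynomial of degree at most $4$ in the entries of $\Phi$ (here I use that $\Phi$ is diagonal, so $r(\Phi)^{(\dot 2)}$ involves only the diagonal entries, and every monomial appearing has degree $\le 4$); therefore every entry of $W(\Phi)^\top W(\Phi)$ is a fixed linear combination of the coordinates of $\Phi^{(\dot 4)}$. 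Flattening, $\big(a^{(\dot 2)}\big)^\top W(\Phi)^\top W(\Phi)\, a^{(\dot 2)}$ becomes a trilinear expression that we can reorganize as $r(\Phi)^{(\dot 4)}\, U\, a^{(\dot 2)}$ for a single fixed matrix $U$ obtained by collecting the coefficients — the $\Phi$-monomials on the left, the $a$-monomials on the right, and $U$ absorbing the $\theta$-independent structure constants (the $M_i$'s and the sparse $0/1$ maps of Lemmas~\ref{lemma:normsquare}, \ref{ravelasmatrix}). This is exactly the claimed identity.

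The one point that needs genuine care — and the main obstacle — is verifying that the intervention-dependence really does factor \emph{entirely} through $a^{(\dot 2)}$ with a $\theta$-independent $U$, i.e. that no residual dependence on $\theta$ hides inside $M$ or $U$. This requires checking that the only way $\theta$ enters is through the values $a_i^\theta$: the sparsity pattern of $B_\theta$ (which rows are zeroed) is \emph{fixed} because the intervention \emph{sites} $\At$ are fixed, so the combinatorial structure of the polynomial expansion in Lemma~\ref{lemma:inverseexpansion} is the same for every $\theta$, and only the numerical values $a_i^\theta$ vary. Granting that, the degree count ($\le 2$ in $a$ for $\Sigma_\theta$ and the second term, squaring doubles the degree in $a$ to $\le 4$ but these are then captured already inside $a^{(\dot 2)}$ paired appropriately, and $\le 4$ in $\Phi$ after squaring) gives the stated $r(\Phi)^{(\dot 4)}$ and $a^{(\dot 2)}$ shapes, and the existence of the fixed $U$ follows by equating coefficients. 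The downstream consequence, which I would state next, is that $f^\epsilon_\Phi(\theta)=1 \iff r(U^\top\!\cdots)(\text{stuff})\le\epsilon$ exhibits $\F$ as a subset of halfspaces in a space of dimension $O(k^4)$ (since, atomic interventions being supported on $k$ nodes, $a^{(\dot 2)}$ has $O(k^2)$ coordinates and we may discard the $\Phi$-coordinates outside $\At$), yielding the $O(k^4)$ VC bound via Lemma~\ref{lemma:halfspaceVC}.
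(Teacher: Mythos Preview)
Your overall strategy matches the paper's: show that under atomic interventions on a fixed set $\At$, the matrix $B_\theta=B_\Theta$ is $\theta$-independent while $\Xi_\theta$ depends on $\theta$ only through the rank-one block $a(a)^\top$; hence each entry of $\Sigma_\theta$ and of $J(1-B_\theta)^{-1}e_t$ is a polynomial of degree $\le 2$ in $a$ with $\theta$-independent coefficients, so the gradient is bilinear in $r(\Phi)^{(\dot 2)}$ and $a^{(\dot 2)}$. This part is fine and is exactly how the paper proceeds.

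The gap is in the squaring step. You correctly observe that taking $\Vert\cdot\Vert_2^2$ doubles the degree in $a$ to $\le 4$, but then assert that these monomials ``are captured already inside $a^{(\dot 2)}$ paired appropriately'' and reorganize into $r(\Phi)^{(\dot 4)}\,U\,a^{(\dot 2)}$. That is not possible: a generic degree-$4$ polynomial in $a$ cannot be written as a linear functional of $a^{(\dot 2)}$. The correct conclusion is
\[
\Vert\nabla\R^\theta_\Phi(\fc)\Vert_2^2 \;=\; r(\Phi)^{(\dot 4)}\,U\,a^{(\dot 4)},
\]
and indeed the paper's own proof ends with $a^{(\dot 4)}$ (the $a^{(\dot 2)}$ in the lemma statement is a typo; Theorem~\ref{thm:hard} also uses $a^{(4)}$). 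This matters for the downstream VC bound: the halfspace lives in the $a$-feature space (since $\theta$, i.e.\ $a$, is the ``data point'' and $\Phi$ the hypothesis), and $a^{(\dot 4)}\in\mathbb{R}^{O(k^4)}$ is what gives the $O(k^4)$ bound via Lemma~\ref{lemma:halfspaceVC}. Your closing sentence gets this backwards --- you cannot ``discard the $\Phi$-coordinates outside $\At$'' (the representation $\Phi$ is over all $n$ nodes and is the hypothesis, not the data), and an $O(k^2)$-dimensional $a$-feature space would yield an $O(k^2)$ VC bound, not $O(k^4)$.
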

\begin{proof}

We will examine the dependence of $\E_{\theta}[X_{-t}X_{-t}^\top]$ and $\E_{\theta}[X_{-t}X_t^\top]$ on $a^i_\theta$ starting from Lemma \ref{lemma:inverseexpansion}.

For the former, note that $(\Sigma_{\theta})_{ij}$ can be interpreted as having one term corresponding to every path from $i$ \textit{backwards} to some node $k$ and then forwards to $j$. Because atomic interventions essentially cut off the connections between a node and its parents, any path that consists initially of backwards edges and then forward edges can only have a quadratic dependence on the parameters of the intervention, regardless of how many nodes are intervened on. It is only such paths that contribute to the covariance matrix, and thus to the thresholded polynomials that determine approximate invariance. We will make this explicit below.

The nature of atomic interventions is that $B_\theta = B_{\Theta} = BI_{\Theta}$ for each of the interventions $\theta\in \Theta$, where $I_\Theta$ is an identity matrix with zeros on the diagonal entries corresponding to $\At$ (we are disconnecting each of the sites of intervention from their parents). In addition, the noise covariance changes. Previously, with independent unit variance noise, the covariance was $\Xi_{\theta} = \E_{\theta}[\eta\eta^\top] = I$. Since we are incorporating the assignments for the atomic interventions into the noise, it is now given by
$$(\Xi_{\theta})_{ij} = 
\begin{cases}
1, &i=j\not\in \At\\
0, &i\in \At, j\not\in \At, i\ne j\\
0, &i\not\in \At, j\in \At,  i\ne j\\
a_i^\theta a_j^\theta & i, j\in \At
\end{cases}
$$
That is, we have replaced the submatrix corresponding to the sites of the interventions with $a^\theta (a^\theta)^\top$ where $a^\theta$ denotes the vector of assigments $a^\theta = (a^\theta_1, a^\theta_2, \cdots, a^\theta_{|\At|})$. 

\begin{align*}
    (\Phi^\top \Sigma_\theta\Phi\fc)_{j}
    &=\sum_{k=0}^{2n-2}\sum_{\substack{r+s = k\\r,s<n}}(\Phi^\top JB_\Theta^r\Xi_{\theta}(B_\Theta^\top)^sJ^\top\Phi\fc)_{j}\\
    &= \sum_{k=0}^{2n-2}\sum_{\substack{r+s = k\\r,s<n}}\sum_{j_1, j_2, j_3, j_4, j_5, j_6, j_7}\Phi_{j_1, j}J_{j_1, j_2}(B_\Theta^r)_{j_2, j_3}(\Xi_{\theta})_{j_3, j_4}\left(B_\Theta^s\right)_{j_5, j_4}J^\top_{j_5, j_6}\Phi_{j_6, j_7}(\fc)_{j_7}\\
    &= \sum_{k=0}^{2n-2}\sum_{\substack{r+s = k\\r,s<n}}\sum_{\substack{j_1, j_2, j_3, j_5, j_6, j_7\\ j_2\not\in \At}}\Phi_{j_1, j}J_{j_1, j_2}(B_\Theta^r)_{j_2, j_3}(\Xi_{\theta})_{j_3, j_3}\left(B_\Theta^s\right)_{j_5, j_4}J^\top_{j_5, j_6}\Phi_{j_6, j_7}(\fc)_{j_7}+\\
    &\hspace{1cm}\sum_{\substack{j_1, j_2, j_3, j_4, j_5, j_6, j_7\\ j_2, j_3\in \At}}\Phi_{j_1, j}J_{j_1, j_2}(B_\Theta^r)_{j_2, j_3}(\Xi_{\theta})_{j_3, j_4}\left(B_\Theta^s\right)_{j_5, j_4}J^\top_{j_5, j_6}\Phi_{j_6, j_7}(\fc)_{j_7}\\
    &= \sum_{k=0}^{2n-2}\sum_{\substack{r+s = k\\r,s<n}}\sum_{\substack{j_1, j_2, j_3, j_5, j_6, j_7\\ j_2\not\in \At}}\Phi_{j_1, j}J_{j_1, j_2}(B_\theta^r)_{j_2, j_3}\left(B_\Theta^s\right)_{j_5, j_4}J^\top_{j_5, j_6}\Phi_{j_6, j_7}(\fc)_{j_7}+\\
    &\hspace{1cm}\sum_{\substack{j_1, j_2, j_3, j_4, j_5, j_6, j_7\\ j_2, j_3\in \At}}\Phi_{j_1, j}J_{j_1, j_2}(B_\Theta^r)_{j_2, j_3}a_{j_3}^{\theta}a_{j_4}^{\theta}\left(B_\Theta^s\right)_{j_5, j_4}J^\top_{j_5, j_6}\Phi_{j_6, j_7}(\fc)_{j_7}\\
    &=\sum_{k=0}^{2n-2}\sum_{\substack{r+s = k\\r,s<n}}r(\Phi)^{(2)}U^{1, \Theta, r, s}_ja^{(2)}\\
    &=r(\Phi)^{(2)}U^{1, \Theta}_ja^{(2)}\\
\end{align*}
Stacking the $U_j^{1, \Theta}$ together gives us $\Phi^\top \Sigma_\theta\Phi\fc = r(\Phi)^{(2)}U^{1, \Theta}a^{(2)}$.

Similarly to Lemma \ref{lemma:general}, we have
\begin{align*}
\Phi^\top \Sigma_\theta\beta
&=\sum_{k=0}^{2n-2}\sum_{\substack{r+s = k\\r,s<n}}\sum_{j_1, j_2, j_3, j_4}\Phi_{j_1, j}(B_\theta^r)_{j_1, j_2}(\Xi_{\theta})_{j_2, j_3}(B_\theta^s)_{j_4, j_3}\overline \beta_{j_4}\\
&=\sum_{k=0}^{2n-2}\sum_{\substack{r+s = k\\r,s<n}}\sum_{\substack{j_1, j_2, j_4\\j_2\not\in \At}}\Phi_{j_1, j}(B_\theta^r)_{j_1, j_2}(\Xi_{\theta})_{j_2, j_3}(B_\theta^s)_{j_4, j_3}\overline \beta_{j_4}+\sum_{\substack{j_1, j_2, j_3, j_4\\j_2, j_3\in \At}}\Phi_{j_1, j}(B_\theta^r)_{j_1, j_2}(\Xi_{\theta})_{j_2, j_3}(B_\theta^s)_{j_4, j_3}\overline \beta_{j_4}\\
&=\sum_{k=0}^{2n-2}\sum_{\substack{r+s = k\\r,s<n}}\sum_{\substack{j_1, j_2, j_4\\j_2\not\in \At}}\Phi_{j_1, j}(B_\theta^r)_{j_1, j_2}(B_\theta^s)_{j_4, j_3}\overline \beta_{j_4}+\sum_{\substack{j_1, j_2, j_3, j_4\\j_2, j_3\in \At}}\Phi_{j_1, j}(B_\theta^r)_{j_1, j_2}a_{j_2}a_{j_3}(B_\theta^s)_{j_4, j_3}\overline \beta_{j_4}\\
&=\sum_{k=0}^{2n-2}\sum_{\substack{r+s = k\\r,s<n}}r(\Phi)^{(2)}U^{2,r,s}_{j}a^{(2)}\\
&=r(\Phi)^{(2)}U^{2}_{j}a^{(2)}\\
\end{align*}
Stacking the $U^{2}_{j}$ together, we get $\Phi^\top \Sigma_\theta\overline\gamma^\top = U^{2}a^{(2)}$. 
Now, 
\begin{align*}
\Phi^\top J(1-B_\theta)^{-1}e_t=r(\Phi)^{(2)}U^{3, \Theta}a^{(\dot 2)}
\end{align*}
simply by noting that there is no dependence on the parameters $a_i$. Taking the difference, we have for some $U^{4, \Theta}$,
$$\Phi^\top \Sigma_\theta\Phi\fc - \Phi^\top (\Sigma_\theta\beta-J(1-B_\theta)^{-1}e_t) = r(\Phi)^{(2)}U^{4, \Theta}a^{(2)}.$$
Finally, the squared norm consists of terms that contain up to two terms of $r(\Phi)^{(\dot 2)}$ and two terms of $a^{(\dot 2)}$, this can be written as 
$$ \Vert \Phi^\top \Sigma_\theta\Phi\fc - \Phi^\top (\Sigma_\theta\beta-J(1-B_\theta)^{-1}e_t)\Vert_2^2 = r(\Phi)^{(\dot 4)}U^{\Theta}a^{(\dot 4)}.$$

\end{proof}

\begin{theorem}\label{thm:hard}
Given $|\ThetaTr| = O(\frac{k^4+\log\frac{1}{\delta}}{\delta'})$ interventions drawn independently from a distribution $\D$ over all atomic interventions on some fixed set $\At$ of $k$ nodes that with probability at least $1-\delta$ over the randomness in $\ThetaTr$, with probability at least $1-\delta'$ over the randomness in $D$, we have for $\theta\sim \D$,
$\I^\epsilon(\ThetaTr) \subseteq \I^\epsilon(\theta)$
\end{theorem}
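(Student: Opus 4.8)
\textbf{Proof proposal for Theorem \ref{thm:hard}.}

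The plan is to mirror the structure of the proof of the general-interventions theorem, but exploit the tighter parameterization provided by Lemma \ref{lemma:atomic} to replace the ambient dimension $n$ by the number of intervention sites $k$. First I would invoke Lemma \ref{lemma:gradientexpanded} to write the invariance condition for a diagonal representation $\Phi$ and a fixed head $\fc$ as the thresholded norm $\Vert \Phi^\top \Sigma_\theta \Phi \fc - \Phi^\top(\Sigma_\theta \beta - J(1-B_\theta)^{-1}e_t)\Vert_2^2 \le \epsilon^2$. By Lemma \ref{lemma:atomic}, the left-hand side equals $r(\Phi)^{(\dot 4)} U^\Theta a^{(\dot 4)}$, where $a = a^\theta$ is the vector of $k$ assignment values and $U^\Theta$ is a fixed matrix depending only on $B$, $\Xi$ restricted to the intervention structure, and the fixed head $\fc$ — crucially \emph{not} on $\theta$. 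Thus the invariance membership function $f^\epsilon_\Phi(\theta) = \mathbbm{1}\{r(\Phi)^{(\dot 4)} U^\Theta a^{(\dot 4)} \le \epsilon\}$ is, for fixed $\theta$, a halfspace in the coordinates $r(\Phi)^{(\dot 4)} U^\Theta$, i.e.\ linear in a feature vector of $\Phi$; equivalently, viewing $\Phi$ as the parameter and $\theta$ as the point, it is a halfspace classifier whose separating direction is the vector $(U^\Theta a^{(\dot 4)})$ and whose intercept is $\epsilon$.

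Next I would bound the VC dimension of the class $\F = \{f^\epsilon_\Phi\}$. The key observation is that the only quantity through which a single intervention $\theta$ enters is $a^\theta \in \mathbb{R}^k$ (the assignment vector on the $k$ fixed sites), so the point associated to $\theta$ lives — after the fixed linear map $U^\Theta$ composed with the monomial lifting $a \mapsto a^{(\dot 4)}$ — in $\mathbb{R}^{K}$ with $K = O(k^4)$ (the number of monomials of degree $\le 4$ in $k$ variables). Shattering interventions by members of $\F$ is therefore at most as hard as shattering points in $\mathbb{R}^{O(k^4)}$ by halfspaces, so by Lemma \ref{lemma:halfspaceVC} the VC dimension of $\F$ is $O(k^4)$. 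Concretely, one writes $f^\epsilon_\Phi(\theta) = \mathbbm{1}\{\langle r(\Phi)^{(\dot 4)}, U^\Theta a^{(\dot 4)}\rangle \le \epsilon\}$ and, since $U^\Theta$ is a fixed matrix independent of $\theta$, absorbs it into the representation-side feature vector $w_\Phi := (U^\Theta)^\top r(\Phi)^{(\dot 4)}$ so that the classifier is $\mathbbm{1}\{\langle w_\Phi, a^{(\dot 4)}\rangle \le \epsilon\}$, a halfspace in $\mathbb{R}^{O(k^4)}$ with the point $a^{(\dot 4)}$ and direction $w_\Phi$; the $w_\Phi$ range over a subset of $\mathbb{R}^{O(k^4)}$, so the VC dimension is at most $O(k^4) + 1$. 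Finally, Lemma \ref{lemma:realizable} guarantees $\I^\epsilon(\Theta) \ne \emptyset$ (the representation $\mathrm{diag}(\overline\beta_t)$ is invariant, hence $\epsilon$-invariant), so the realizability hypothesis of Corollary \ref{cor:VCtoPAC} is met, and plugging $d = O(k^4)$ into Corollary \ref{cor:VCtoPAC} yields: with $|\ThetaTr| = O(\frac{k^4 + \log(1/\delta)}{\delta'})$ interventions, with probability $1-\delta$ over $\ThetaTr$, $\Pr_{\theta\sim\D}(\I^\epsilon_{\fc}(\ThetaTr) \not\subset \I^\epsilon_{\fc}(\theta)) \le \delta'$, which is the claim.

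The main obstacle — and the place where the argument needs care rather than routine bookkeeping — is making the "only $a^\theta$ varies" reduction airtight: one must verify that $U^\Theta$ genuinely depends on $\Theta$ (the fixed site set) and the fixed base SEM but not on the individual intervention $\theta$, which in turn rests on the fact established inside Lemma \ref{lemma:atomic} that for atomic interventions on a \emph{fixed} site set the matrix $B_\theta = BI_\Theta$ is the same across all $\theta \in \Theta$ and all the $\theta$-dependence is funneled into the rank-one block $a^\theta (a^\theta)^\top$ of $\Xi_\theta$. Once that structural fact is in hand, the dimension counting ($O(k^4)$ monomials of degree $\le 4$ in $k$ variables for each of $r(\Phi)^{(\dot 4)}$ and $a^{(\dot 4)}$) and the appeal to the halfspace VC bound and Corollary \ref{cor:VCtoPAC} are immediate; a minor additional point is that $\I^\epsilon(\ThetaTr) \subseteq \I^\epsilon(\theta)$ as stated suppresses the fixed head, and one should either note it follows from $\I^\epsilon_{\fc}$ by Lemma \ref{lemma:headdoesntmatter} or simply read the conclusion as the $\fc$-version.
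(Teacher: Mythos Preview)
Your proposal is correct and follows essentially the same route as the paper: invoke Lemma~\ref{lemma:gradientexpanded}, apply Lemma~\ref{lemma:atomic} to write $\Vert\nabla\R^\theta_\Phi(\fc)\Vert_2^2 = r(\Phi)^{(\dot 4)} U^\Theta a^{(\dot 4)}$, observe that this realizes $\F$ as a subclass of halfspaces in $\mathbb{R}^{O(k^4)}$ (since only $a^\theta\in\mathbb{R}^k$ varies with $\theta$), and conclude via Lemma~\ref{lemma:halfspaceVC} and Corollary~\ref{cor:VCtoPAC}. If anything, your writeup is more explicit than the paper's in spelling out the absorption $w_\Phi := (U^\Theta)^\top r(\Phi)^{(\dot 4)}$, the realizability check via Lemma~\ref{lemma:realizable}, and the fixed-head caveat.
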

\begin{proof}
Recall Equation (\ref{lemma:gradientexpanded}):
$$\Vert \R^\theta_\Phi(\fc)\Vert_2\nonumber
=2\Vert\Phi^\top \Sigma_\theta\Phi\fc - \Phi^\top (\Sigma_\theta\beta-J(1-B_\Theta)^{-1}e_t)\Vert$$
We show in Lemma \ref{lemma:atomic} that we can simplify this expression down to one of the form 
$$\Vert \nabla \R^\theta_\Phi(\fc)\Vert_2^2 = r(\Phi)^{(\dot 4)}U^\Theta a^{(4)}$$
for some different matrix $U^\theta$. The function class $\F$ now consists of half-spaces, $$f^\epsilon_\Phi(\theta) = 1 \iff r(\Phi)^{(\dot 4)}U^\Theta a^{(4)}\le \epsilon$$

We can thus re-parameterize $\F$ using $r(\Phi)^{(4)}$ rather than $\Phi$. Since these half-spaces live in $\mathbb{R}^{k^4}$ rather than $\mathbb{R}^{n^2}$, by Lemma \ref{lemma:halfspaceVC} the VC-dimension is bounded by $k^4$. From Corollary \ref{cor:VCtoPAC}, we have that $O((k^4+\log\frac{1}{\delta})/\delta')$ interventions suffices to ensure that any $\tilde \Phi$ satisfying $f^\epsilon_{\tilde \Phi}(\theta) = 1$ for all $\theta\in \ThetaTr$ satisfies $f^\epsilon_\Phi(\theta) = 1$ with probability $1-\delta'$ for $\theta\sim \D$.
\end{proof}
\subsection{Soft Interventions}

Soft interventions are those in which the \textit{weights} of the SEM are modified while the underlying causal structure remains the same. That is, a soft intervention at node $v_i$ is equivalent to replacing the equation $X_i = (\beta_i^{\theta_\circ})^\top X + \eta_i$ with $X_i = (\beta_i^\theta)^\top X + \eta_i$. Note that this is equivalent also to changing the $i$th row of $B_{\theta}$. 
$\beta\in \mathbb{R}^{\sum_{i\in \At} |\Pa_i|}$ denote the vector of all weights involved in the set of soft interventions we are considering. Let $\underline \beta^{(2k)}$ denote the vector consisting of every combination of up to $2k$ entries of $\beta$ modulo terms of the form $\beta_{l_2, l_1}^\theta\beta_{l_3, l_1}^\theta\beta_{l_4, l_1}^\theta$. In other words, $\underline\beta^{(2k)}$ does not contain any three weights from the same site.
\begin{lemma}\label{lemma:soft}
For some matrix $U$, we have $$\Vert \Phi^\top \Sigma_\theta\Phi\fc - \Phi^\top (\Sigma_\theta\beta-J(1-B_\theta)^{-1}e_t)\Vert^2 = r(\Phi)^{(\dot 4)} U \underline \beta^{(2k)(\dot 2)}.$$
\end{lemma}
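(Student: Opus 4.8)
The plan is to follow the exact same template as the proof of Lemma \ref{lemma:atomic}, tracking the polynomial degree in the intervention parameters (now the edge weights $\beta^\theta$) and in the entries of $\Phi$. I begin with the gradient expression from Lemma \ref{lemma:gradientexpanded}, $\nabla \R^\theta_\Phi(\fc) = \Phi^\top \Sigma_\theta\Phi\fc - \Phi^\top(\Sigma_\theta\beta - J(1-B_\theta)^{-1}e_t)$, and use the path expansion $\Sigma_\theta = \sum_{r+s\le 2n-2}B_\theta^r\Xi_\theta(B_\theta^\top)^s$ from Lemma \ref{lemma:inverseexpansion}, noting that for soft interventions $\Xi_\theta = I$ is unchanged and only the rows of $B_\theta$ corresponding to the $k$ intervention sites $\At$ vary with $\theta$.

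The key structural observation — the soft-intervention analogue of the ``backwards-then-forwards path'' argument — is that any entry $(\Sigma_\theta)_{ij}$ is a sum over paths that go backwards from $i$ to some common ancestor and then forwards to $j$. Along such a path, each intervention site $v_l \in \At$ can be \emph{entered} at most twice (once on the backward leg, once on the forward leg), and each time it is entered the path uses exactly one of the weights $\beta^\theta_{\cdot, l}$ in row $l$. Hence no monomial in $(\Sigma_\theta)_{ij}$ contains three or more weights from the same site $l$; this is precisely the content of the definition of $\underline\beta^{(2k)}$ (``modulo terms of the form $\beta^\theta_{l_2,l_1}\beta^\theta_{l_3,l_1}\beta^\theta_{l_4,l_1}$''). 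Since there are $k$ sites, each contributing at most two weight-factors, every monomial appearing in any entry of $\Sigma_\theta$ (and of $J(1-B_\theta)^{-1}e_t$, which is a single forward path and so uses each site at most once) is a product of at most $2k$ entries of $\beta^\theta$, with the additional ``at most two per site'' restriction — i.e. it is a coordinate of $\underline\beta^{(2k)}$. Writing this out by expanding the matrix products index-by-index exactly as in Lemma \ref{lemma:atomic}, I get $\Phi^\top\Sigma_\theta\Phi\fc = r(\Phi)^{(\dot 2)}U^{1}\underline\beta^{(2k)}$, $\Phi^\top\Sigma_\theta\beta = r(\Phi)^{(\dot 2)}U^{2}\underline\beta^{(2k)}$, and $\Phi^\top J(1-B_\theta)^{-1}e_t = r(\Phi)^{(\dot 2)}U^{3}\underline\beta^{(2k)}$ (the last having a single $\beta$-factor structure, which is a special case); combining, $\nabla\R^\theta_\Phi(\fc) = r(\Phi)^{(\dot 2)}U^{4}\underline\beta^{(2k)}$ for a suitable fixed matrix $U^4$ (depending only on $\fc$, $\overline\beta_t$, and the fixed graph, not on $\theta$).

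Finally I pass to the squared norm. Each coordinate of $\nabla\R^\theta_\Phi(\fc)$ is bilinear in $(r(\Phi)^{(\dot 2)}, \underline\beta^{(2k)})$, so the squared $\ell_2$ norm is a sum of products of two such coordinates: it is therefore a bilinear form in $(r(\Phi)^{(\dot 2)})^{(\dot 2)} = r(\Phi)^{(\dot 4)}$ and $(\underline\beta^{(2k)})^{(\dot 2)} = \underline\beta^{(2k)(\dot 2)}$, giving $\Vert\nabla\R^\theta_\Phi(\fc)\Vert^2 = r(\Phi)^{(\dot 4)}U\underline\beta^{(2k)(\dot 2)}$ for a fixed $U$, using Lemma \ref{lemma:normsquare} to organize the quadratic-form bookkeeping. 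This is the claimed identity.

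The main obstacle is the careful combinatorial verification of the ``at most two weights per intervention site'' claim: one must check that when $B_\theta^r$ and $(B_\theta^\top)^s$ are expanded as sums over length-$r$ and length-$s$ walks, a walk can pass \emph{through} a node $v_l$ multiple times picking up multiple in-edges of $v_l$ only if $v_l$ has an out-edge back to an earlier node — but acyclicity of $G$ forbids this, so each site is the head of at most one edge used per leg, hence at most two overall (once per leg). Getting the index arithmetic right in the expanded sums (as in the displayed computation in Lemma \ref{lemma:atomic}) is tedious but mechanical once this acyclicity point is pinned down; I would state it cleanly as a sub-claim before grinding the expansion. Everything downstream (reparameterizing $\F$ by $r(\Phi)^{(\dot 4)}$, reading off the VC dimension bound $O(d^{4k})$ from $\dim\underline\beta^{(2k)(\dot 2)} = O(d^{4k})$ via Lemma \ref{lemma:halfspaceVC}, and invoking Corollary \ref{cor:VCtoPAC}) is then identical to the atomic case.
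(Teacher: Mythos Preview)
Your proposal is correct and follows essentially the same route as the paper's proof: expand $\Sigma_\theta$ and $(I-B_\theta)^{-1}$ as sums over directed paths via Lemma~\ref{lemma:inverseexpansion}, invoke acyclicity of $G$ to argue that no path can pick up more than two incoming weights at any single intervention site (hence every monomial lies in $\underline\beta^{(2k)}$), conclude that each coordinate of the gradient is bilinear in $r(\Phi)^{(\dot 2)}$ and $\underline\beta^{(2k)}$, and then square. Your write-up is in fact a bit more explicit than the paper's about the structure of the argument (separating the three terms of the gradient and isolating the acyclicity sub-claim), but the ideas and their order are the same.
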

\begin{proof}
We will look at how $\Sigma_\theta$ and $(1-B_\theta)^{-1}$ depend on the parameters of the intervention. 

Denote by $\Gamma_{ki}$ the set of directed paths from node $v_k$ to $v_i$ in G. From Lemma \ref{lemma:inverseexpansion} we have
\begin{align*}
(\Sigma_{\theta})_{ij} &= 
\sum_{\substack{\gamma_{ki}\in \Gamma_{ki}\\ \gamma_{kj}\in \Gamma_{kj}}} \prod_{\substack{(l_1, l_2)\in \gamma_{ki}\\l_2\in \At}} \beta^{\theta}_{l_1, l_2}\prod_{\substack{(l_1, l_2)\in \gamma_{ki}\\l_2\not\in\At}} \beta_{l_1, l_2}\\
&\hspace{1cm}\prod_{\substack{(l_1', l_2')\in \gamma_{kj}\\l_2'\in \At}} \beta^{\theta}_{l_1', l_2'}\prod_{\substack{(l_1', l_2')\in \gamma_{ki}\\l_2' \not\in \At}} \beta_{l_1, l_2}
\end{align*}
We can bound the number of terms in this expansion that depend on the intervention using a simple counting argument. There are no terms that are multiples of $\beta_{l_2, l_1}^\theta\beta_{l_3, l_1}^\theta\beta_{l_4, l_1}^\theta$ since there can be no directed path that passes through a single node twice (else we could find a cycle in the acyclic graph $G$). Thus each term in $(\Sigma_{\theta})_{ij}$ consists of at most two terms from each site of intervention. There are $k$ total sites, and $d^2$ choices of two terms at each, for a total of $(d^2)^k$ distinct possible terms in the covariance. 

So each entry in the covariance matrix is an inner product of some vector with $\underline \beta^{(2k)}$.

Similarly, consider $(1-B_\theta)^{-1}$. We can write this as $\sum_{k=0}^{n-1}B_\theta^r$ (since $B_\theta$ is lower triangular, higher order terms are 0). We have
\begin{equation}
((1-B_\theta)^{-1})_{ij}=\sum_{\gamma_{ki}\in \Gamma_{ki}} \prod_{\substack{(l_1, l_2)\in \gamma_{ki}\\l_2\in \At}} \beta^{\theta}_{l_1, l_2}\prod_{\substack{(l_1, l_2)\in \gamma_{ki}\\l_2\not\in\At}} \beta_{l_1, l_2}
\end{equation}
Similarly, each entry here is an inner product of some vector with $\underline \beta^{(k)}$. In this case, no terms of the form $\beta_{l_2, l_1}^\theta\beta_{l_3, l_1}^\theta$ occur, since each backwards path can only pass through each site of intervention once.

We can then argue analogously to Lemma \ref{lemma:atomic} that the complete each term in the gradient of the loss can be written using terms consisting of elements of $\underline \beta^{(2k)}$ and elements of $r(\Phi)^{(\dot 2)}$. The squared norm of the gradient can then be written using terms consisting of pairs of $\underline \beta^{(2k)}$ and pairs of elements of $r(\Phi)^{(\dot 2)}$. In other words,
$$\Vert \Phi^\top \Sigma_\theta\Phi\fc - \Phi^\top (\Sigma_\theta\beta-J(1-B_\theta)^{-1}e_t)\Vert^2 = r(\Phi)^{(\dot 4)} U \underline \beta^{(2k)(\dot 2)}$$

\end{proof}
\begin{theorem}\label{thm:soft}
Given $O(\frac{d^{4k}+\log\frac{1}{\delta}}{\delta'})$ interventions drawn independently from a distribution $\D$ over all soft interventions on some fixed set $\At$ of $k$ nodes such that each intervened node has in-degree at most $d$, we have that any $\epsilon-$approximately invariant representation $\Phi$ satisfies that with probability at least $1-\delta$ over the randomness in $\ThetaTr$, with probability at least $1-\delta'$ over the randomness in $D$, we have for $\theta\sim \D$,$\I^\epsilon(\ThetaTr) = \I^\epsilon(\theta)$
\end{theorem}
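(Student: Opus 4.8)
\noindent\emph{Proof proposal.} The plan is to mirror the proof of Theorem~\ref{thm:hard} for atomic interventions: reduce the soft-intervention case to a VC-dimension bound for a class of halfspaces and then apply Corollary~\ref{cor:VCtoPAC}. I would begin from the gradient identity of Lemma~\ref{lemma:gradientexpanded},
$$\nabla\R^\theta_\Phi(\fc) = \Phi^\top\Sigma_\theta\Phi\fc - \Phi^\top\big(\Sigma_\theta\beta - J(1-B_\theta)^{-1}e_t\big),$$
and invoke Lemma~\ref{lemma:soft} to rewrite the squared norm as $\Vert\nabla\R^\theta_\Phi(\fc)\Vert^2 = r(\Phi)^{(\dot 4)}\,U\,\underline\beta^{(2k)(\dot 2)}$, where the matrix $U$ depends on $\theta$ only through the vector $\underline\beta$ of soft-intervention weights. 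This exhibits the classifier $f^\epsilon_\Phi(\theta)=\mathbbm{1}\{r(\Phi)^{(\dot4)}U\,\underline\beta^{(2k)(\dot2)}\le\epsilon\}$ as a halfspace in the variable $\underline\beta^{(2k)(\dot2)}$; re-parameterizing the family $\F$ by $r(\Phi)^{(\dot4)}$ (legitimate since $\Phi$ is diagonal, so the map $\mathrm{diag}(\Phi^{(\dot4)})\mapsto r(\Phi)^{(\dot2)(2)}$ is a fixed $0/1$ matrix, exactly as in Theorems~\ref{thm:general} and~\ref{thm:hard}) realizes $\F$ as a \emph{subclass} of halfspaces in a low-dimensional space.

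Next I would bound the ambient dimension. By construction $\underline\beta^{(2k)}$ enumerates products of edge weights drawn from the $k$ intervention sites, subject to using at most two weights from any single site; since each intervened node has in-degree $\le d$, there are at most $O(d^2)$ admissible choices of $\le 2$ weights per site, hence at most $O(d^2)^k=O(d^{2k})$ such monomials, and therefore $\underline\beta^{(2k)(\dot2)}$ has dimension $O(d^{4k})$. By Lemma~\ref{lemma:halfspaceVC} the VC dimension of $\F$ is then $O(d^{4k})$. Realizability in the sense required by Corollary~\ref{cor:VCtoPAC} holds because $\I^\epsilon(\Theta)\supseteq\I(\Theta)\ne\{\}$ by Lemma~\ref{lemma:realizable}. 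Feeding this VC bound into Corollary~\ref{cor:VCtoPAC} yields the claim: given $O\big((d^{4k}+\log\tfrac1\delta)/\delta'\big)$ training interventions, with probability $1-\delta$ over $\ThetaTr$, any $\epsilon$-approximately invariant $\Phi$ on $\ThetaTr$ remains $\epsilon$-approximately invariant on a fresh $\theta\sim\D$ with probability $\ge 1-\delta'$.

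The step I expect to be the main obstacle is Lemma~\ref{lemma:soft} itself — the combinatorial bookkeeping showing that $\Sigma_\theta$ and $(1-B_\theta)^{-1}$, and hence each term of the gradient and ultimately its squared norm, are polynomials in the intervention weights whose degree in the weights of any one site is bounded (two per site in each factor, at most four per site after squaring). The crux is acyclicity of $G$: the path/trek expansion of Lemma~\ref{lemma:inverseexpansion} writes every entry of these matrices as a sum over directed paths, and a directed path cannot revisit a node, so it cannot traverse three edges into the same intervened node; this caps the per-site degree and makes $\underline\beta^{(2k)}$ and its degree-two lift the correct ambient object. A secondary care point is propagating these degree bounds through the products $\Phi^\top\Sigma_\theta\Phi\fc$ and $\Phi^\top J(1-B_\theta)^{-1}e_t$ and then through the norm, which is routine index-chasing analogous to Lemma~\ref{lemma:atomic}. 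Finally, I would read the stated conclusion as the containment $\I^\epsilon(\ThetaTr)\subseteq\I^\epsilon(\theta)$ matching Theorems~\ref{thm:general} and~\ref{thm:hard}, since the reverse inclusion on a single test intervention is not expected without further structure.
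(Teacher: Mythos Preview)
Your proposal is correct and follows essentially the same approach as the paper: invoke Lemma~\ref{lemma:soft} to express $\Vert\nabla\R^\theta_\Phi(\fc)\Vert_2^2$ as a bilinear form in $r(\Phi)^{(\dot4)}$ and $\underline\beta^{(2k)(\dot2)}$, bound the ambient dimension by $O(d^{4k})$ via the per-site degree cap from acyclicity, then apply Lemma~\ref{lemma:halfspaceVC} and Corollary~\ref{cor:VCtoPAC}. Your reading of the conclusion as the containment $\I^\epsilon(\ThetaTr)\subseteq\I^\epsilon(\theta)$ rather than equality is also the right one, consistent with Theorems~\ref{thm:general} and~\ref{thm:hard}.
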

\begin{proof}
We show in the Lemma \ref{lemma:soft} that we can now write the norm of the gradient of the loss in terms of a product in a lower dimensional space again, similar to what we did previously.
$$\Vert\nabla \R^\theta_\Phi(\fc)\Vert_2^2 = r(\Phi)^{(4)}V\underline{\beta}^{(4k)}.$$
to get an upper bound on the VC dimension of $d^{4k}$. The theorem follows from Corollary \ref{cor:VCtoPAC}.
\end{proof}
\subsection{Indirect Observations}\label{appendix:linear_indirect_observations}
For this section, we change the observation model. There is assumed to be an underlying linear SEM, defined as in \ref{eq:SEM}, but we observe $Z = SX$ for some linear $S$. Here $Z$ is a vector, and there is a specific element of $Z$, say $Z_t$, that we consider to be the target variable. Note that this could just be $X_t$ when the row of $S$ corresponding to $Z_t$ is one-hot. \footnote{Note that the only reason we assume linearity anywhere (in the generation of the SEM or in the observation model) is because this warrants the use of linear regression to find models.}

We use the same definition for the invariant representations, namely,
$$\Phi\in \I^\epsilon_{\fc}(\theta) \iff \Vert \nabla \E_\theta[f^\top \Phi Z_{-t} - Z_t] \mid_{\fc}\Vert_2\le \epsilon.$$

Here lemma \ref{lemma:gradientexpanded} should be modified to account for the indirect observations. 

\begin{lemma}\label{lemma:indirect_gradient_expansion}
Take $S_{-t}$ to be the matrix formed by the rows of $S$ excluding the one corresponding to $Z_t$, and $S_t$ to be just the row corresponding to $Z_t$. Then we have,
$$\Vert \nabla \E_\theta[f^\top \Phi Z_{-t} - Z_t] \mid_{\fc}\Vert_2 = \Vert \Phi^\top S_{-t}\Sigma_\theta \big(S_{-t}^\top\Phi {\fc} - \overline \beta_t\big) - \Phi^\top S_{-t} J(1-B_\theta)^{-1}e_t S_{t}^\top \Vert_2.$$
\end{lemma}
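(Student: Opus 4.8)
The plan is to repeat the computation of Lemma~\ref{lemma:gradientexpanded} essentially verbatim, carrying the observation matrix $S$ through every step, and to invoke the structural equation for the target only at the end. First I would write out the gradient of the squared least-squares loss at the fixed head $\fc$. Since $\R^\theta_\Phi(f) = \E_\theta[(f^\top\Phi Z_{-t} - Z_t)^2]$, differentiating in $f$ and evaluating at $\fc$ gives
\[
\nabla \R^\theta_\Phi(\fc) = 2\,\Phi^\top\big(\E_\theta[Z_{-t}Z_{-t}^\top]\,\Phi\fc - \E_\theta[Z_{-t}Z_t]\big),
\]
which is exactly the direct-observation expression with $X_{-t}$ replaced by $Z_{-t}$; the factor $2$ is immaterial and, as elsewhere in the paper, is absorbed.

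Next I would substitute $Z_{-t} = S_{-t}X$ and $Z_t = S_t X$, pull the deterministic matrices $S_{-t}, S_t$ outside the expectations, and pass to the block structure of $\E_\theta[XX^\top]$: its $X_{-t}$–$X_{-t}$ block is $\Sigma_\theta$, which already yields the first term $\Phi^\top S_{-t}\Sigma_\theta S_{-t}^\top\Phi\fc$. The only real work is the cross-moment. Using the target's structural equation $X_t = \beta_t^\top \Pa_t + \eta_t = \overline\beta_t^\top X_{-t} + \eta_t$ I would split $\E_\theta[X_{-t}X_t] = \Sigma_\theta\overline\beta_t + \E_\theta[X_{-t}\eta_t]$, and then apply Lemma~\ref{lemma:inverseexpansion} together with $X_{-t} = J(I-B_\theta)^{-1}\eta$ and the normalization of the target noise (exactly as in Lemma~\ref{lemma:gradientexpanded}) to identify $\E_\theta[X_{-t}\eta_t] = J(I-B_\theta)^{-1}e_t$. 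Re-assembling, tracking how the $Z_t$-row $S_t$ couples to the $X_t$-coordinate, gives
\[
\E_\theta[Z_{-t}Z_t] = S_{-t}\Sigma_\theta\overline\beta_t + S_{-t}J(I-B_\theta)^{-1}e_t\,S_t^\top,
\]
and plugging this into the gradient expression and factoring $\Phi^\top S_{-t}\Sigma_\theta$ out of the two $\Sigma_\theta$ terms yields the stated identity.

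The step I expect to be the main (and essentially only) obstacle is the bookkeeping around the target row $S_t$: one must be careful that the noise cross-term is routed through the $Z_t$-component and not conflated with the covariate observations, and that the object appearing after the substitution is $\Sigma_\theta$, the covariance of $X_{-t}$ alone, rather than the full $X$-covariance. Once the $S$-matrices are inserted consistently, the derivation is the same chain of equalities as in the direct-observation case and involves no new ideas.
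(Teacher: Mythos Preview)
Your proposal is correct and follows essentially the same approach as the paper: compute the least-squares gradient at $\fc$, substitute $Z_{-t}=S_{-t}X$ and $Z_t=S_tX$, expand the cross-moment via the structural equation $X_t=\overline\beta_t^\top X_{-t}+\eta_t$, and evaluate $\E_\theta[X_{-t}\eta_t]$ using $X_{-t}=J(I-B_\theta)^{-1}\eta$ exactly as in Lemma~\ref{lemma:gradientexpanded}. Your caveat about keeping straight whether $\Sigma_\theta$ or the full $\E_\theta[XX^\top]$ appears is well taken---the paper's own chain of equalities tacitly treats $S_{-t}$ as acting on $X_{-t}$ rather than the full $X$---but the argument is otherwise identical.
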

\begin{proof}
\begin{align*}
\Vert \nabla \E_\theta[f^\top \Phi Z_{-t} - Z_t] \mid_{\fc}\Vert_2 
&= \Vert \Phi^\top \E_\theta[Z_{-t}Z_{-t}^\top]\Phi {\fc} - \Phi^\top \E_\theta [Z_{-t}Z_t] \Vert_2\\
&= \Vert \Phi^\top \E_\theta[S_{-t}X_{-t}X_{-t}^\top S_{-t}^\top]\Phi {\fc} - \Phi^\top \E_\theta [S_{-t}X_{-t}X_t S_{t}^\top] \Vert_2\\
&= \Vert \Phi^\top S_{-t}\Sigma_\theta S_{-t}^\top\Phi {\fc} - \Phi^\top S_{-t} \E_\theta [X_{-t}\big(X_{-t}^\top \overline \beta_t+\eta_t\big)] S_{t}^\top \Vert_2\\
&= \Vert \Phi^\top S_{-t}\Sigma_\theta \big(S_{-t}^\top\Phi {\fc} - \overline \beta_t\big) - \Phi^\top S_{-t} \E_\theta [X_t\eta_t] S_{t}^\top \Vert_2\\
&= \Vert \Phi^\top S_{-t}\Sigma_\theta \big(S_{-t}^\top\Phi {\fc} - \overline \beta_t\big) - \Phi^\top S_{-t} J(1-B_\theta)^{-1}e_t S_{t}^\top \Vert_2
\end{align*}
\end{proof}
The following Lemma is similar to Lemma \ref{lemma:general}; we have included it for completeness.

\begin{lemma}\label{lemma:indirect}
For some matrix $U_\theta$, we have $$\Phi^\top S_{-t}\Sigma_\theta \big(S_{-t}^\top\Phi {\fc} - \overline \beta_t\big) - \Phi^\top S_{-t} J(1-B_\theta)^{-1}e_t S_{t}^\top  = U_\theta r(\Phi)^{(2)}.$$
\end{lemma}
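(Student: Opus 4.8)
The statement to prove is Lemma~\ref{lemma:indirect}: the expression
$\Phi^\top S_{-t}\Sigma_\theta (S_{-t}^\top\Phi {\fc} - \overline \beta_t) - \Phi^\top S_{-t} J(1-B_\theta)^{-1}e_t S_{t}^\top$
can be written as $U_\theta r(\Phi)^{(\dot 2)}$ for some matrix $U_\theta$ depending only on $\theta$.

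The plan is to mimic the proof of Lemma~\ref{lemma:general} term by term, the only change being that each bare $\Phi$ in that argument is now flanked by the fixed (intervention-independent) selection matrices $S_{-t}$ on the left and $S_{-t}^\top$ on the right. First I would split the expression into the two pieces that appear in Lemma~\ref{lemma:gradientexpanded}: the ``quadratic in $\Phi$'' piece $\Phi^\top S_{-t}\Sigma_\theta S_{-t}^\top \Phi \fc$, and the ``linear in $\Phi$'' piece $\Phi^\top S_{-t}\Sigma_\theta\overline\beta_t + \Phi^\top S_{-t}J(1-B_\theta)^{-1}e_t S_t^\top$. For the quadratic piece, I would write out the $a$-th coordinate as a sum $\sum_{i,j,k,l} \Phi_{ia}(S_{-t}\Sigma_\theta S_{-t}^\top)_{jk}\Phi_{kl}(\fc)_l$ where the index contraction forces $i=a$ since $\Phi$ is diagonal; the coefficient of the monomial $\Phi_{aa}\Phi_{kl}$ (an entry of $r(\Phi)^{(\dot 2)}$) is then $(S_{-t}\Sigma_\theta S_{-t}^\top)_{ak}(\fc)_l$, which depends only on $\theta$. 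Assembling these coefficients into rows indexed by $a$ gives a matrix $U^1_\theta$ with $\Phi^\top S_{-t}\Sigma_\theta S_{-t}^\top\Phi\fc = U^1_\theta r(\Phi)^{(\dot 2)}$.

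For the linear piece, observe that $S_{-t}\Sigma_\theta\overline\beta_t - S_{-t}J(1-B_\theta)^{-1}e_t S_t^\top$ is a fixed vector $w_\theta \in \mathbb{R}^{n}$ (note $S_t^\top$ here is a scalar-weighted contraction, so this term is a vector, consistent with the displayed equation), so $\Phi^\top w_\theta$ has $a$-th coordinate $\Phi_{aa}(w_\theta)_a$, which is linear in the entries of $r(\Phi)^{(\dot 2)}$ that lie in the ``degree-one alphabet'' $r(\Phi)^{(\dot 2)}_{(\cdot,(aa))} = \Phi_{aa}$. This gives a second matrix $U^2_\theta$ supported on those coordinates with $\Phi^\top w_\theta = U^2_\theta r(\Phi)^{(\dot 2)}$. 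Setting $U_\theta = U^1_\theta - U^2_\theta$ completes the argument; I would also note explicitly (as in Lemma~\ref{lemma:general}) that $r(\Phi)^{(\dot 2)}$ contains both the degree-one monomials $\Phi_{ij}$ and the degree-two monomials $\Phi_{ij}\Phi_{kl}$, which is why both pieces land in the same vector.

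I do not expect a genuine obstacle here: the whole content is bookkeeping of indices, and the presence of $S_{-t}$ changes nothing essential because $S$ does not depend on $\theta$ and its only effect is to rename which fixed matrix multiplies $\Sigma_\theta$ on each side. The one point requiring a little care is making sure the diagonal structure of $\Phi$ is used correctly so that no cross terms $\Phi_{ij}$ with $i\neq j$ appear and so that each coordinate of the output is genuinely a linear functional of $r(\Phi)^{(\dot 2)}$ with $\theta$-dependent (but $\Phi$-independent) coefficients; this is exactly the role of the case split in the definition of $U^1_\theta, U^2_\theta$ in Lemma~\ref{lemma:general}, which I would reproduce with $\Sigma_\theta$ replaced by $S_{-t}\Sigma_\theta S_{-t}^\top$ and $\Sigma_\theta\beta - J(1-B_\theta)^{-1}e_t$ replaced by $S_{-t}\Sigma_\theta\overline\beta_t - S_{-t}J(1-B_\theta)^{-1}e_t S_t^\top$.
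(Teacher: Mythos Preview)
Your proposal is correct and follows essentially the same route as the paper: split the gradient into the quadratic-in-$\Phi$ term $\Phi^\top S_{-t}\Sigma_\theta S_{-t}^\top\Phi\fc$ and the linear-in-$\Phi$ term, write each as a matrix (call them $U^1_\theta$ and $U^2_\theta$) acting on $r(\Phi)^{(\dot 2)}$ by reading off the coefficient of each monomial, and combine. The paper carries out this bookkeeping for general (not necessarily diagonal) $\Phi$, giving explicit formulas $(U^1_\theta)_{a,((ij),(kl))} = (S_{-t}\Sigma_\theta S_{-t}^\top)_{jk}(\fc)_l$ when $a=i$ and $(U^2_\theta)_{a,((ij),(kl))} = (S_{-t}\Sigma_\theta\overline\beta_t - S_{-t}J(1-B_\theta)^{-1}e_t S_t^\top)_{ij}$ on the degree-one block, whereas you invoke the diagonal structure of $\Phi$ a step earlier; either way the argument is the same index computation you describe.
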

\begin{proof}
Observe that
$$\Phi^\top S_{-t}\Sigma_\theta S_{-t}\Phi\fc = U^1_\theta r(\Phi)^{(\dot 2)}$$
where $$(U^1_\theta)_{a, ((ij),(kl))} = 
\begin{cases}
\big(S_{-t}\Sigma_\theta S_{-t}^\top\big)_{jk}(\fc)_l, & a = i\text{ and }(ij)\ne \cdot\text{ and } (kl)\ne \cdot\\
0, & a\ne i \text{ or } (ij) = \cdot \text{ or } (kl)= \cdot
\end{cases}
$$
The second term is almost in the form we would like already
$$\Phi^\top (S_{-t}\Sigma_\theta\overline \beta_t-J(1-B_\theta)^{-1}e_tS_t^\top) = U^2_\theta r(\Phi)^{(\dot 2)}$$
where $$(U^2_\theta)_{a, ((ij),(kl))} = 
\begin{cases}
(S_{-t}\Sigma_\theta\overline\beta_t-S_{-t}J(1-B_\theta)^{-1}e_tS_t^\top)_{ij}&  (ij)\ne \cdot\text{ and } (kl)= \cdot\\
0, & (ij)= \cdot\text{ or } (kl)\ne \cdot\\
\end{cases}
$$
Our result follows by setting $U_\theta  = U_\theta ^1+U_\theta ^2$.
\end{proof}

Note that the indirect model contains as a special case the latent variable model, in which we only observe a subset of the variables.
\begin{theorem}\label{appendix_thm:indirect}

Suppose that we are given $O(\frac{n^{4}+\log\frac{1}{\delta}}{\delta'})$ interventions drawn independently from distribution $\D$ over the intervention index set $\Theta$. Then with probability at least $1-\delta$ over the randomness in $\ThetaTr$, the following statement holds: 
$$\Pr_{\theta \sim \D} (\I^\epsilon_{\fc}(\ThetaTr)\notin \I^\epsilon_{\fc}(\theta)) \leq \delta'. $$
\end{theorem}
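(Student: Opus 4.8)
The plan is to transcribe the proof of Theorem~\ref{thm:general} into the indirect-observation model, replacing the feature covariance $\Sigma_\theta$ wherever it appears by the observed covariance $S_{-t}\Sigma_\theta S_{-t}^\top$ and the noise-correction term $J(1-B_\theta)^{-1}e_t$ by $S_{-t}J(1-B_\theta)^{-1}e_tS_t^\top$. Concretely, I would begin with the gradient formula of Lemma~\ref{lemma:indirect_gradient_expansion},
$$\nabla \E_\theta\big[f^\top \Phi Z_{-t} - Z_t\big]\big|_{\fc} = \Phi^\top S_{-t}\Sigma_\theta\big(S_{-t}^\top\Phi\fc - \overline\beta_t\big) - \Phi^\top S_{-t}J(1-B_\theta)^{-1}e_tS_t^\top,$$
and then invoke Lemma~\ref{lemma:indirect} to rewrite it as $U_\theta\, r(\Phi)^{(\dot 2)}$, where the matrix $U_\theta$ depends on the intervention only through $\Sigma_\theta$ and $B_\theta$ and is otherwise a fixed function of $S$, $\overline\beta_t$, and $\fc$. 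This reproduces exactly the bilinear-in-$\Phi$ structure used in the direct case.

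Next I would square the norm using Lemma~\ref{lemma:normsquare},
$$\big\Vert \nabla \R^\theta_\Phi(\fc)\big\Vert_2^2 = r\big(U_\theta^\top U_\theta\big)\,\big(r(\Phi)^{(\dot 2)}\big)^{(2)},$$
so that the classifier $f^\epsilon_\Phi$ — which flags an intervention $\theta$ precisely when $\Vert\nabla\R^\theta_\Phi(\fc)\Vert_2 \le \epsilon$ — becomes a threshold on a linear functional of the degree-$\le 4$ monomials in the entries of $\Phi$. Since $\Phi$ is diagonal, I would introduce the fixed $0/1$ linear map $\Psi$ with $\Psi\,\text{diag}(\Phi^{(\dot 4)}) = (r(\Phi)^{(\dot 2)})^{(2)}$ (well defined because every monomial on the right has degree at most four in the diagonal entries of $\Phi$), obtaining
$$f^\epsilon_\Phi(\theta) = 1 \iff r\big(U_\theta^\top U_\theta\big)\,\Psi\,\text{diag}(\Phi^{(\dot 4)}) \le \epsilon^2.$$
Thus the function class $\F$ is realized by a subset of the halfspaces in $\mathbb{R}^{(n+1)^4}$ (one coordinate per degree-$\le 4$ monomial in the $n+1$ diagonal entries of $\Phi$), so Lemma~\ref{lemma:halfspaceVC} bounds its VC dimension by $(n+1)^4 = O(n^4)$, and Corollary~\ref{cor:VCtoPAC} then delivers both the claimed interventional complexity and the high-probability generalization statement.

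The step I expect to require the most care is verifying the hypothesis of Corollary~\ref{cor:VCtoPAC} — namely that $\I^\epsilon_{\fc}(\Theta)$ is nonempty — in the indirect model, since the direct-model witness $\Phi = \text{diag}(\overline\beta_t)$ no longer obviously works once the observation matrix $S$ is present. When $\overline\beta_t$ is realizable through $S_{-t}^\top$ by a diagonal map there is a genuine invariant representation (after a head change via Lemma~\ref{lemma:headdoesntmatter}); in any case the degenerate representation $\Phi = 0$ always lies in $\I^\epsilon_{\fc}(\Theta)$ because its gradient vanishes identically, so nonemptiness holds unconditionally. A secondary point to get right is the dimension bookkeeping: $Z_{-t}$ need not have the same length as $X_{-t}$, but because $\Phi$ acts diagonally on the observed coordinates and $\Psi$ only collects monomials of degree at most four in those diagonal entries, the halfspace dimension is governed by the number of observed (non-target) coordinates, so the theorem should be read with $n$ denoting (an upper bound on) that number. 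Beyond these two checks the argument is a verbatim copy of the direct case, so no further calculation is needed.
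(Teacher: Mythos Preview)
Your proposal is correct and follows essentially the same route as the paper: invoke Lemma~\ref{lemma:indirect_gradient_expansion} for the gradient, Lemma~\ref{lemma:indirect} to write it as $U_\theta\,r(\Phi)^{(\dot 2)}$, Lemma~\ref{lemma:normsquare} for the squared norm, and then finish identically to Theorem~\ref{thm:general} via the $\Psi$ map, the halfspace VC bound, and Corollary~\ref{cor:VCtoPAC}. Your additional remarks on the nonemptiness hypothesis (via $\Phi=0$) and on reading $n$ as the number of observed non-target coordinates are worthwhile clarifications that the paper's terse ``follows identically'' does not spell out.
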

\begin{proof}
We again begin with our expression for the gradient of the loss given by Lemma \ref{lemma:indirect_gradient_expansion}.
$$\nabla \E_\theta[f^\top \Phi Z_{-t} - Z_t] \mid_{\fc} = \Phi^\top S_{-t}\Sigma_\theta \big(S_{-t}^\top\Phi {\fc} - \overline \beta_t\big) - \Phi^\top S_{-t} J(1-B_\theta)^{-1}e_t S_{t}^\top.$$
We show in Lemma \ref{lemma:indirect} that we can write this as 
$\nabla \R^\theta_\Phi(\fc) = U_\theta r(\Phi)^{(\dot2)}$ for some matrix $U_\theta $. From Lemma \ref{lemma:normsquare} we can write the squared norm as 
$\Vert \nabla \R^\theta_\Phi(\fc)\Vert_2^2 = r(U_\theta ^\top U_\theta )(r(\Phi)^{(\dot 2)})^{(2)}.$ The result follows identically to Theorem \ref{thm:general}.
\end{proof}



\section{Finite sample bounds}
\begin{lemma}\label{lemma:singlePhi}
Given $\frac{dL^2}{\epsilon^2}\log\frac{2d}{\delta}$ samples from $\Delta_\theta$, we can compute $\Vert \hat \nabla \R^\theta_\Phi(\fc)\Vert$ such that with probability $1-\delta$ over the samples drawn \textit{in each interventional distribution}, for any fixed $\Phi$,
$$\Vert \nabla \R^\theta_\Phi(\fc)\Vert - \Vert\hat \nabla \R^\theta_\Phi(\fc)\Vert_2 \le \epsilon.$$
\end{lemma}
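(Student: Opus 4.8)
The plan is a routine bounded-vector concentration argument followed by the reverse triangle inequality. First I would use the fact, extracted from the proof of Lemma \ref{lemma:gradientexpanded}, that (up to an absolute constant) $\nabla \R^\theta_\Phi(\fc) = \E_\theta\big[\Phi^\top X_{-t}\big(X_{-t}^\top \Phi \fc - X_t\big)\big]$, i.e.\ the population gradient is the expectation of a function of a single observable sample $(X_{-t},X_t)\sim\Delta_\theta$. I would work with this form rather than the equivalent $\Phi^\top\Sigma_\theta\Phi\fc-\Phi^\top(\Sigma_\theta\beta-J(1-B_\theta)^{-1}e_t)$, because the latter involves the unobserved structural quantities $\beta,B_\theta$ and hence cannot be estimated directly. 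The estimator I would take is the empirical analogue $\hat\nabla \R^\theta_\Phi(\fc) = \frac{1}{N}\sum_{i=1}^N V_i$ with $V_i := \Phi^\top x_{-t}^{(i)}\big((x_{-t}^{(i)})^\top \Phi \fc - x_t^{(i)}\big)\in\mathbb{R}^d$ (here $d=n$), which are i.i.d.\ with $\E_\theta[V_i]=\nabla\R^\theta_\Phi(\fc)$ and are computable from the samples, $\Phi$, and the fixed head $\fc$.

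Next I would verify the hypothesis of Lemma \ref{lemma:vectorHoefding}, namely a uniform $\ell_\infty$ bound $\Vert V_i\Vert_\infty < L$. Since we restrict to diagonal (feature-selector) representations with $\Vert\Phi\Vert_2\le1$, every diagonal entry satisfies $|\Phi_{jj}|\le1$, so $|(\Phi^\top x_{-t}^{(i)})_j|=|\Phi_{jj}(x_{-t}^{(i)})_j|<1$ using $\Vert X\Vert_\infty<1$; and with the fixed head $\fc=(1,0,\cdots,0)$ from Lemma \ref{lemma:headdoesntmatter} we get $(x_{-t}^{(i)})^\top\Phi\fc=\Phi_{11}(x_{-t}^{(i)})_1$, so $|(x_{-t}^{(i)})^\top\Phi\fc-x_t^{(i)}|<2$ and hence $\Vert V_i\Vert_\infty<L$ almost surely for a small absolute constant $L$. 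It is worth emphasizing that it is precisely the diagonal structure of $\Phi$ that makes this $\ell_\infty$ bound dimension-free; a general $\Phi$ with only a spectral-norm bound would incur an extra $\sqrt n$ here.

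Finally I would apply Lemma \ref{lemma:vectorHoefding} with dimension $d$, bound $L$, and accuracy $\epsilon$: the given sample size $\frac{dL^2}{\epsilon^2}\log\frac{2d}{\delta}$ exceeds the threshold $\frac{dL^2}{\epsilon^2}\log\frac{d}{\delta}$ required there (the slack in the logarithm also leaves room, if one prefers, to split $V_i$ into its two summands and union bound), so with probability at least $1-\delta$ over the samples drawn from $\Delta_\theta$ one gets $\Vert\hat\nabla\R^\theta_\Phi(\fc)-\nabla\R^\theta_\Phi(\fc)\Vert_2<\epsilon$. The claimed bound then follows from the reverse triangle inequality, $\Vert\nabla\R^\theta_\Phi(\fc)\Vert_2-\Vert\hat\nabla\R^\theta_\Phi(\fc)\Vert_2\le\Vert\nabla\R^\theta_\Phi(\fc)-\hat\nabla\R^\theta_\Phi(\fc)\Vert_2<\epsilon$, and since $\Phi$ was an arbitrary fixed diagonal representation with $\Vert\Phi\Vert_2\le1$ this holds for any fixed $\Phi$, as stated. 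I do not anticipate a genuine obstacle: the only points needing care are (i) choosing the observable form of the gradient so the estimator is well defined, and (ii) keeping $L$ independent of $n$ via diagonality; after that it is a direct invocation of Lemma \ref{lemma:vectorHoefding}.
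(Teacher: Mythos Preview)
Your argument is correct and is in fact cleaner than the paper's. The paper does not take the plain empirical mean of $V_i=\Phi^\top x_{-t}^{(i)}\big((x_{-t}^{(i)})^\top\Phi\fc-x_t^{(i)}\big)$; instead it uses a sample-splitting construction, defining
\[
\Vert \hat\nabla \R^\theta_\Phi(\fc)\Vert
=\frac{1}{m}\sqrt{\Big(\sum_{i} R^\theta_\Phi(\fc,X^{2i})\Big)^\top\Big(\sum_{i} R^\theta_\Phi(\fc,X^{2i+1})\Big)},
\]
and then appeals to Lemma~\ref{lemma:vectorHoefding} on each of the two independent halves. The cross-product form is the kind of estimator one reaches for when an unbiased estimate of $\Vert\nabla\Vert^2$ is wanted, but for the one-sided deviation bound stated here it is not needed: your route---concentrate the vector $\hat\nabla$ itself, then apply the reverse triangle inequality---gets the conclusion in one step and avoids the extra algebra (and the slightly delicate issue that the inner product under the square root need not be nonnegative). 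Your observation that diagonality of $\Phi$ keeps the $\ell_\infty$ bound $L$ dimension-free is the right reason the hypothesis of Lemma~\ref{lemma:vectorHoefding} is met; the paper leaves $L$ as a parameter and does not spell this out at this lemma.
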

\begin{proof}
Suppose we had $2m$ samples from $\Delta_\theta$. Split these samples into $m$ pairs $(X^{2i}, X^{2i+1})_{i=1}^m$ randomly. Denote by $R^\theta_\Phi(f, X)$ the loss at a point $X$, given by $$R^\theta_\Phi(f, X) = \Phi^\top XX^\top \Phi \fc-\Phi^\top XX_t$$
Then we can estimate $\Vert \nabla \R^\theta_\Phi(\fc)\Vert$ as
$$\Vert \hat \nabla \R^\theta_\Phi(\fc)\Vert = \frac{1}{m}\sqrt{\left(\sum_{i=1}^m R^\theta_\Phi(\fc, X^{2i})\right)^\top\left(\sum_{i=1}^m R^\theta_\Phi(\fc, X^{2i+1})\right)}$$
Each of the inner terms concentrate about their mean as given by Lemma \ref{lemma:vectorHoefding}.
\begin{align*}
    &\frac{1}{m}\sqrt{\left(\sum_{i=1}^m R^\theta_\Phi(\fc, X^{2i})\right)^\top\left(\sum_{i=1}^m R^\theta_\Phi(\fc, X^{2i+1})\right)}-\Vert \R^\theta_\Phi(\fc)\Vert\\
    &\hspace{0.5cm}\le 
    \frac{1}{m}\sqrt{\left(\sum_{i=1}^m R^\theta_\Phi(\fc, X^{2i})-\R^\theta_\Phi(\fc)\right)^\top\left(\sum_{i=1}^m R^\theta_\Phi(\fc, X^{2i+1})-\R^\theta_\Phi(\fc)\right)}\le \epsilon
\end{align*}
where the final inequality is true for $m\ge\frac{dL^2}{\epsilon^2}\log\frac{2d}{\delta}$ with probability $1-\delta$.
\end{proof}
\begin{lemma}\label{lemma:coveringnumber}
Consider the function class consisting of functions parameterized by $\Phi$ for a fixed $\theta$
$$\{\R^\theta_\Phi(f):= \Phi^\top \Sigma_e\Phi f - \Phi^\top \E_\theta[XX_t]\hspace{0.5cm}\big|\hspace{0.5cm}\Vert \Phi\Vert_2 = 1, \Vert f\Vert = \sqrt{n}\}$$
Let $\N_\epsilon$ denote the covering number of this set with respect to the metric $\text{dist}(\Phi_1, \Phi_2) = \Vert \R^\theta_{\Phi_1}(f)-\R^\theta_{\Phi_2}(f)\Vert$. Then $\log \N_\epsilon =n^2\log \left(1+\frac{4n^{3/2}}{\epsilon}\right)$.
\end{lemma}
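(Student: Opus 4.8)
The plan is to realize this function class as a Lipschitz image of the parameter ball $\{\Phi:\Vert\Phi\Vert_2\le 1\}\subset\mathbb{R}^{n^2}$ and then pull back a net. Concretely, I would fix the intervention $\theta$ and the head $f$ (with $\Vert f\Vert=\sqrt n$), prove that the map $\Phi\mapsto\R^\theta_\Phi(f)$ is Lipschitz in $\Phi$ with constant $L=2n^{3/2}$, and then invoke Lemma~\ref{lemma:coveringnumbersphere}: the Euclidean ball of radius $1$ in $\mathbb{R}^{n^2}$ (which contains the admissible $\Phi$, identified with their vectorizations) has an $\eta$-net of size $(1+2/\eta)^{n^2}$, and by Lipschitzness an $\eta$-net with $\eta=\epsilon/L$ maps to an $\epsilon$-net of the function class in the metric $\mathrm{dist}$. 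Taking logarithms then gives $\log\N_\epsilon\le n^2\log(1+2L/\epsilon)=n^2\log(1+4n^{3/2}/\epsilon)$.

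The only real content is the Lipschitz estimate. For $\Phi_1,\Phi_2$ on the unit sphere I would write the difference of the two gradients with a telescoping insertion,
\begin{align*}
\R^\theta_{\Phi_1}(f)-\R^\theta_{\Phi_2}(f)
&=\big(\Phi_1^\top\Sigma_\theta\Phi_1-\Phi_2^\top\Sigma_\theta\Phi_2\big)f-(\Phi_1-\Phi_2)^\top\E_\theta[XX_t]\\
&=\Phi_1^\top\Sigma_\theta(\Phi_1-\Phi_2)f+(\Phi_1-\Phi_2)^\top\Sigma_\theta\Phi_2 f-(\Phi_1-\Phi_2)^\top\E_\theta[XX_t],
\end{align*}
so that by submultiplicativity and $\Vert\Phi_1\Vert_2=\Vert\Phi_2\Vert_2=1$ the left side is at most $\big(2\Vert\Sigma_\theta\Vert_2\Vert f\Vert+\Vert\E_\theta[XX_t]\Vert\big)\Vert\Phi_1-\Phi_2\Vert_2$. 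The scalar prefactor is then controlled by the scale Assumption: since $\Vert X\Vert_\infty<1$, every entry of $\Sigma_\theta=\E_\theta[X_{-t}X_{-t}^\top]$ has magnitude below $1$, and because $\Sigma_\theta\succeq 0$ we get $\Vert\Sigma_\theta\Vert_2\le\Tr(\Sigma_\theta)< n$; likewise $\Vert\E_\theta[XX_t]\Vert<\sqrt n$. With $\Vert f\Vert=\sqrt n$ the prefactor is $O(n^{3/2})$, and folding the lower-order $\sqrt n$ term into the constant yields the clean bound $\Vert\R^\theta_{\Phi_1}(f)-\R^\theta_{\Phi_2}(f)\Vert\le 2n^{3/2}\Vert\Phi_1-\Phi_2\Vert_2$, as needed above.

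I expect no genuine obstacle here — the argument is a standard ``Lipschitz map of a bounded set'' covering-number bound. The one place to be careful is the Lipschitz bookkeeping in Step~1: one must use the right operator-norm inequalities on the quadratic term $\Phi^\top\Sigma_\theta\Phi f$ (handled by the telescoping split) and, crucially, use positive semidefiniteness of $\Sigma_\theta$ to bound its operator norm by its trace (hence by $n$) rather than a cruder $n^2$-type estimate, so that the prefactor stays $O(n^{3/2})$; and one must count the free parameters of $\Phi$ as $n^2$ (a general, not necessarily diagonal, representation), which is what produces the $n^2$ in the exponent of the covering number.
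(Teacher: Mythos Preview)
Your approach is essentially the paper's: both cover the unit ball of representations $\Phi\subset\mathbb{R}^{n^2}$ at a suitable radius and push the net forward through $\Phi\mapsto\R^\theta_\Phi(f)$, using the same scale estimates $\Vert\Sigma_\theta\Vert\le n$ and $\Vert\E_\theta[X_{-t}X_t]\Vert\le\sqrt n$, and both land on a net radius of order $\epsilon/n^{3/2}$ and hence $\log\N_\epsilon\le n^2\log(1+O(n^{3/2}/\epsilon))$.

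The one substantive difference is how the quadratic term is handled. You telescope $\Phi_1^\top\Sigma\Phi_1-\Phi_2^\top\Sigma\Phi_2=\Phi_1^\top\Sigma(\Phi_1-\Phi_2)+(\Phi_1-\Phi_2)^\top\Sigma\Phi_2$ to get a \emph{linear} Lipschitz bound $2\Vert\Sigma\Vert\,\Vert f\Vert\,\Vert\Phi_1-\Phi_2\Vert$. The paper instead writes $\Vert\Phi_1^\top\Sigma\Phi_1-\Phi_2^\top\Sigma\Phi_2\Vert\le\Vert(\Phi_1-\Phi_2)^\top\Sigma(\Phi_1-\Phi_2)\Vert\le\Vert\Phi_1-\Phi_2\Vert^2\Vert\Sigma\Vert$, a \emph{quadratic} bound, and then verifies that the $\epsilon/(2n^{3/2})$-net still yields $\mathrm{dist}\le\epsilon$. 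Your telescoping is cleaner and, in fact, the paper's quadratic inequality is not valid in general (for $\Phi_1$ close to $\Phi_2$ the left side scales like $\Vert\Phi_1-\Phi_2\Vert$ while the right side scales like $\Vert\Phi_1-\Phi_2\Vert^2$), so your route is the more robust one. A minor bookkeeping point: your prefactor is $2n^{3/2}+\sqrt n$, not exactly $2n^{3/2}$, so the constant inside the logarithm becomes $4n^{3/2}+2\sqrt n$ rather than $4n^{3/2}$; this is harmless for the downstream sample-complexity bounds, which only use $\log\N_\epsilon=O(n^2\log(n/\epsilon))$.
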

\begin{proof}
We have
\begin{align*}
    \text{dist}(\Phi_1, \Phi_2) 
    &= \Vert \R^\theta_{\Phi_1}(f)-\R^\theta_{\Phi_2}(f)\Vert\\
    &= \Vert \Phi_1^\top \Sigma_\theta\Phi_1 f -\Phi_2^\top \Sigma_\theta\Phi_2 f + \Phi_2^\top \E_\theta[X_{-t}X_t]- \Phi_1^\top \E_\theta[X_{-t}X_t]\Vert\\
    &\le \Vert \Phi_1^\top \Sigma_\theta\Phi_1 f -\Phi_2^\top \Sigma_\theta\Phi_2 f \Vert+ \Vert\Phi_2^\top \E_\theta[_{-t}XX_t]- \Phi_1^\top \E_\theta[X_{-t}X_t]\Vert\\
    &\le \Vert \Phi_1^\top \Sigma_\theta\Phi_1 -\Phi_2^\top \Sigma_\theta\Phi_2\Vert\Vert f\Vert+ \Vert\Phi_1-\Phi_2\Vert \Vert \E_\theta[X_{-t}X_t]\Vert\\
    &\le \Vert (\Phi_1-\Phi_2)^\top\Sigma_\theta(\Phi_1 -\Phi_2)\Vert\Vert f\Vert+ \Vert\Phi_1-\Phi_2\Vert \Vert \E_\theta[X_{-t}X_t]\Vert\\
    &\le \Vert \Phi_1-\Phi_2\Vert^2\Vert\Sigma_\theta\Vert\Vert f\Vert+ \Vert\Phi_1-\Phi_2\Vert \Vert \E_\theta[X_{-t}X_t]\Vert
\end{align*}
We have $\Vert \fc\Vert = \sqrt{n}$. Because $\Vert X\Vert_\infty \le 1$, we know that $\Vert \Sigma_\theta\Vert\le n$ and $\Vert \E_\theta[X_{-t}X_t]\Vert \le \sqrt{n}$. Now we take $\phi$ to be an $\frac{\epsilon}{2n^{3/2}}$-covering of the space of representations $\Phi$. We know from Lemma \ref{lemma:coveringnumbersphere} that we can find one containing at most $\left(1+\frac{4n^{3/2}}{\epsilon}\right)^{n^2}$ representations. That is, given any $\Phi, \Vert \Phi\Vert \le 1$, we can find $\Phi'\in \phi$ such that $\Vert \Phi-\Phi'\Vert \le \frac{\epsilon}{2n^{3/2}}$. Then we have (assuming $\epsilon<1, n> 1$)
$$\text{dist}(\Phi, \Phi') \le \left(\frac{\epsilon}{2n^{3/2}}\right)^2n^{\frac{3}{2}}+\frac{\epsilon}{2n^{3/2}} \sqrt{n}\le \epsilon.$$

\end{proof}
Putting these together,
\begin{lemma}\label{lemma:overallfinitesample}
Given $\frac{4dL^2}{\epsilon^2}\left(\log\frac{2d}{\delta}+\log\N_{\epsilon/2}\right)$ samples in a dataset, with probability $1-\delta$, for every representation $\Phi$ simultaneously we have 
$$\hat \I^{\frac{\epsilon}{2}}_{\fc}(\theta) \subseteq \I^\epsilon_{\fc}(\theta)$$
\end{lemma}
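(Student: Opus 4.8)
The plan is to combine the single-representation concentration bound (Lemma \ref{lemma:singlePhi}) with the covering number estimate (Lemma \ref{lemma:coveringnumber}) via a standard union bound, taking care to convert a guarantee on the \emph{gradient norm} into a guarantee on membership in $\hat\I^{\epsilon/2}_{\fc}(\theta)$ versus $\I^\epsilon_{\fc}(\theta)$. First I would fix $\theta$ and let $\phi$ be an $(\epsilon/4)$-net of the space $\{\Phi : \Vert\Phi\Vert_2 \le 1\}$ with respect to the metric $\mathrm{dist}(\Phi_1,\Phi_2) = \Vert \R^\theta_{\Phi_1}(\fc) - \R^\theta_{\Phi_2}(\fc)\Vert$; by Lemma \ref{lemma:coveringnumber} its log-cardinality is $\log\N_{\epsilon/4} \le n^2\log(1 + 16n^{3/2}/\epsilon)$, which is absorbed into the stated sample count (the constants differ by a harmless factor from the cleaner statement in the main-body Lemma \ref{lemma:singlePhiMain}). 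On each net point I would apply Lemma \ref{lemma:singlePhi} with accuracy parameter $\epsilon/4$ (noting $d = n$ there, since the gradient vector lives in $\mathbb{R}^n$) and failure probability $\delta/\N_{\epsilon/4}$; by a union bound over the net, with probability at least $1-\delta$ we have $\bigl|\,\Vert\hat\nabla\R^\theta_{\Phi'}(\fc)\Vert - \Vert\nabla\R^\theta_{\Phi'}(\fc)\Vert\,\bigr| \le \epsilon/4$ simultaneously for every $\Phi'\in\phi$.

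Next I would extend this from the net to all $\Phi$ by a chaining/approximation argument: given an arbitrary $\Phi$ with $\Vert\Phi\Vert_2\le 1$, pick $\Phi'\in\phi$ with $\mathrm{dist}(\Phi,\Phi')\le\epsilon/4$. The key observation is that both the population quantity $\R^\theta_\Phi(\fc)$ and its empirical analogue are Lipschitz in $\Phi$ with respect to this metric (the population side is exactly the metric's definition; the empirical side satisfies the same bound with $\hat\Sigma_\theta,\,\hat\E_\theta[XX_t]$ in place of their population versions, and under Assumption $\Vert X\Vert_\infty < 1$ these empirical moments obey the same operator-norm bounds used in Lemma \ref{lemma:coveringnumber}). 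Hence $\Vert\nabla\R^\theta_\Phi(\fc)\Vert$ and $\Vert\hat\nabla\R^\theta_\Phi(\fc)\Vert$ each differ from their values at $\Phi'$ by at most $\epsilon/4$, and combining with the net estimate gives $\bigl|\,\Vert\hat\nabla\R^\theta_\Phi(\fc)\Vert - \Vert\nabla\R^\theta_\Phi(\fc)\Vert\,\bigr| \le 3\epsilon/4 \le \epsilon$ for all $\Phi$ simultaneously, on the same high-probability event.

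Finally I would translate this uniform estimate into the claimed set inclusion. If $\Phi\in\hat\I^{\epsilon/2}_{\fc}(\theta)$ then by definition $\Vert\hat\nabla\R^\theta_\Phi(\fc)\Vert \le \epsilon/2$, so the uniform bound gives $\Vert\nabla\R^\theta_\Phi(\fc)\Vert \le \epsilon/2 + \epsilon/2 = \epsilon$, i.e.\ $\Phi\in\I^\epsilon_{\fc}(\theta)$; this is exactly $\hat\I^{\epsilon/2}_{\fc}(\theta)\subseteq\I^\epsilon_{\fc}(\theta)$. (To make the arithmetic land on the clean constants one simply runs the net at accuracy $\epsilon/4$ and applies Lemma \ref{lemma:singlePhi} at accuracy $\epsilon/4$, so the total slack is $\le\epsilon/2$; the stated sample count $\frac{4dL^2}{\epsilon^2}(\log\frac{2d}{\delta} + \log\N_{\epsilon/2})$ already carries the extra constant factors needed for this.) I expect the main obstacle to be the covering step: one must verify that the \emph{empirical} loss map $\Phi\mapsto\hat\R^\theta_\Phi(\fc)$ is Lipschitz in the same metric with constants that hold on the high-probability event (not just the population map), and keep careful track of which $\epsilon$'s and which dimension ($d=n$ for the gradient net, versus the $n^2$-dimensional representation net) enter where, so that the net size and the per-point failure probability compose correctly. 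Once the Lipschitz-in-$\Phi$ control of the empirical object is in hand, the rest is the routine union-bound bookkeeping sketched above.
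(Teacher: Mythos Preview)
Your proposal is correct and follows essentially the same approach as the paper: apply the single-$\Phi$ concentration bound (Lemma~\ref{lemma:singlePhi}) via a union bound over an $\epsilon$-net supplied by Lemma~\ref{lemma:coveringnumber}, extend from the net to all $\Phi$ by Lipschitzness, and then read off the set inclusion. If anything you are more careful than the paper's own proof, which is quite terse and leaves the net-to-all-$\Phi$ extension (and in particular the Lipschitz control of the empirical gradient that you flag as the main obstacle) implicit.
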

\begin{proof}
Consider $\Phi\in \hat \I^{\frac{\epsilon}{2}}_{\fc}(\theta)$. From the definition, $$\Vert \hat \nabla\R^\theta_\Phi(\fc)\Vert < \frac{\epsilon}{2}.$$ By Lemma \ref{lemma:coveringnumbersphere} and Lemma \ref{lemma:singlePhi}, we know that
$$\Vert \nabla R^\theta_\Phi(\fc)\Vert - \Vert \hat \nabla\R^\theta_\Phi(\fc)\Vert < \frac{\epsilon}{2}$$
Putting these together, we see that 
$$\Vert \nabla R^\theta_\Phi(\fc)\Vert \le \epsilon\implies \Phi\in \I^\epsilon_{\fc}(\theta).$$
\end{proof}

\begin{theorem}\label{appendix_thm:main_thm}
Given 
$$m = 
\begin{cases}O(\frac{k^4+\log\frac{1}{\delta}}{\delta'}) & \text{$\Theta$ is $k$ nodes, hard interventions}\\
O(\frac{d^{4k}+\log\frac{1}{\delta}}{\delta'}) & \text{$\Theta$ is $k$ nodes, soft interventions}\\
O(\frac{n^{4}+\log\frac{1}{\delta}}{\delta'}) & \text{$\Theta$ any interventions}\\
\end{cases}$$
interventional datasets, and $\frac{4nL^2}{\epsilon^2}\left(\log\frac{2nm}{\delta}+n^2\log(1+\frac{8n^{3/2}}{\epsilon})\right)$ samples in each dataset, we have that with probability $1-\delta$, with probability $1-\delta'$ for $\theta\sim\D$,
$$\hat\I^\epsilon_{\fc}(\ThetaTr) \subseteq \I^{2\epsilon}_{\fc}(\theta).$$
\end{theorem}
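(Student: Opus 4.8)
The plan is to combine three ingredients that have all been set up earlier: (i) the interventional-complexity bounds from Theorems~\ref{thm:hard}, \ref{thm:soft}, and~\ref{thm:general}, which handle the randomness over $\D$ in the population (infinite-sample) setting; (ii) the single-environment finite-sample bound of Lemma~\ref{lemma:singlePhiMain} (equivalently Lemma~\ref{lemma:overallfinitesample}), which controls the gap between the empirical and population gradient norms \emph{uniformly over all representations} $\Phi$ within a fixed interventional distribution $\Delta_\theta$; and (iii) two union bounds to glue these together. The key structural observation is that the population PAC guarantee is stated for $\I^\epsilon_{\fc}$, while the finite-sample estimator only certifies membership in $\hat\I^\epsilon_{\fc}(\theta)\subseteq\I^{2\epsilon}_{\fc}(\theta)$ (by Lemma~\ref{lemma:singlePhiMain} with the roles of $\epsilon$ and $2\epsilon$); we must therefore run the PAC argument at radius $2\epsilon$ rather than $\epsilon$, which does not change the VC-dimension bounds and hence does not change the stated interventional complexity $m$.

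First I would split the failure probability $\delta$ into two halves, $\delta/2$ for the interventional sampling and $\delta/2$ for the within-environment sampling (the constants are absorbed into the $O(\cdot)$ and the $\log$ terms). For the within-environment part: apply Lemma~\ref{lemma:overallfinitesample} (with $d=n$, the ambient dimension of the representation parameterization, and the scale constant $L$ from the Assumption) in each of the $m$ training environments $\theta\in\ThetaTr$, taking a union bound over these $m$ environments. This requires replacing the per-environment failure probability $\delta$ by $\delta/(2m)$, which is exactly why the sample-size expression carries $\log\frac{2nm}{\delta}$ rather than $\log\frac{2n}{\delta}$; the covering-number term $n^2\log(1+\tfrac{8n^{3/2}}{\epsilon})$ is unchanged (it comes from Lemma~\ref{lemma:coveringnumber}, noting the $\epsilon/2$ there yields the factor $8$). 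The conclusion of this step is: with probability $\ge 1-\delta/2$ over all the samples drawn across the $m$ training datasets, for every $\Phi$ and every $\theta\in\ThetaTr$ we have $\Phi\in\hat\I^\epsilon_{\fc}(\theta)\implies\Phi\in\I^{2\epsilon}_{\fc}(\theta)$; intersecting over $\theta\in\ThetaTr$ gives $\hat\I^\epsilon_{\fc}(\ThetaTr)\subseteq\I^{2\epsilon}_{\fc}(\ThetaTr)$.

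Next I would invoke the population-level PAC generalization at radius $2\epsilon$: the VC-dimension computations of Lemmas~\ref{lemma:general}, \ref{lemma:atomic}, \ref{lemma:soft} are insensitive to the threshold value (the threshold only shifts the constant $b$ in the halfspace $a^\top x \le b$), so the bound on $\mathrm{VCdim}(\F)$ is $(n+1)^4$, $k^4$, or $d^{4k}$ respectively, exactly as in the three cases of the displayed $m$. Then Corollary~\ref{cor:VCtoPAC} (applied with $2\epsilon$ in place of $\epsilon$, and using Lemma~\ref{lemma:realizable} to certify $\I^{2\epsilon}(\Theta)\neq\varnothing$, since the true representation $\mathrm{diag}(\overline\beta_t)$ lies in $\I^0\subseteq\I^{2\epsilon}$) gives: with probability $\ge 1-\delta/2$ over $\ThetaTr$, $\Pr_{\theta\sim\D}\big(\I^{2\epsilon}_{\fc}(\ThetaTr)\not\subset\I^{2\epsilon}_{\fc}(\theta)\big)\le\delta'$. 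Combining the two $1-\delta/2$ events by a final union bound: with probability $\ge 1-\delta$, we simultaneously have $\hat\I^\epsilon_{\fc}(\ThetaTr)\subseteq\I^{2\epsilon}_{\fc}(\ThetaTr)$ and $\I^{2\epsilon}_{\fc}(\ThetaTr)\subseteq\I^{2\epsilon}_{\fc}(\theta)$ for a $1-\delta'$ fraction of $\theta\sim\D$; chaining the inclusions yields $\hat\I^\epsilon_{\fc}(\ThetaTr)\subseteq\I^{2\epsilon}_{\fc}(\theta)$ with the claimed probabilities.

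The one subtlety I would watch carefully — and what I expect to be the main obstacle, though it is more a bookkeeping hazard than a deep difficulty — is the logical direction of the two approximation steps and making sure the radii line up. The finite-sample step \emph{inflates} the radius (empirical $\epsilon$ $\to$ population $2\epsilon$), so the PAC step must be run at the inflated radius $2\epsilon$; conversely, one must check that the realizability hypothesis needed by Corollary~\ref{cor:VCtoPAC} still holds at radius $2\epsilon$ (it does, trivially, since $\I^\epsilon\subseteq\I^{2\epsilon}$ and the exact invariant representation of Lemma~\ref{lemma:realizable} is in all of them). A second point of care is that Lemma~\ref{lemma:overallfinitesample} as stated gives $\hat\I^{\epsilon/2}\subseteq\I^{\epsilon}$; to get $\hat\I^{\epsilon}\subseteq\I^{2\epsilon}$ one simply reapplies it with $\epsilon$ rescaled to $2\epsilon$, which is consistent with the sample-size formula in the theorem (it is $\frac{4nL^2}{\epsilon^2}(\cdots)$, matching Lemma~\ref{lemma:singlePhiMain} rather than the $\frac{\epsilon^2}{4}$-denominator one would get from a literal $\epsilon/2$ substitution in Lemma~\ref{lemma:overallfinitesample}). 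Once these radii are pinned down, the rest is the two routine union bounds described above.
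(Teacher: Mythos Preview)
Your proposal is correct and follows essentially the same approach as the paper: apply Lemma~\ref{lemma:singlePhiMain} per training environment with a union bound over the $m$ environments to obtain $\hat\I^\epsilon_{\fc}(\ThetaTr)\subseteq\I^{2\epsilon}_{\fc}(\ThetaTr)$, then invoke the relevant interventional-complexity theorem (\ref{thm:general}, \ref{thm:hard}, or \ref{thm:soft}) and chain the inclusions. Your explicit care with running the PAC step at radius $2\epsilon$ and with splitting the $\delta$ budget is in fact more precise than the paper's own presentation, which elides these bookkeeping points.
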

\begin{proof}
From Lemma \ref{lemma:singlePhiMain} we know that \begin{equation}\label{eq:singletheta}
\hat \I^\epsilon_{\fc}(\theta) \subseteq \I^{2\epsilon}_{\fc}(\theta).
\end{equation}
With a union bound over $\ThetaTr$, we can establish Equation (\ref{eq:singletheta}) at once over all interventions in the training set. Since $\I^\epsilon_{\fc}(\ThetaTr) = \bigcap_{\theta\in\ThetaTr} \I^\epsilon_{\fc}(\theta)$ and $\hat \I^\epsilon_{\fc}(\ThetaTr) = \bigcap_{\theta\in\ThetaTr} \hat \I^\epsilon_{\fc}(\theta)$, we get that $\hat \I^\epsilon_{\fc}(\ThetaTr)\in \I^\epsilon_{\fc}(\ThetaTr)$. Finally, from Theorems  \ref{thm:general}, \ref{thm:hard}, \ref{thm:soft}, we have $\I^\epsilon_{\fc}(\ThetaTr) \subseteq \I^\epsilon_{\fc}(\Theta)$ for each of the cases listed.
\end{proof}

\section{Empirical study}
\label{appendix:empirical}

\begin{figure}
  \centering
  \includegraphics[width = 1.1\linewidth]{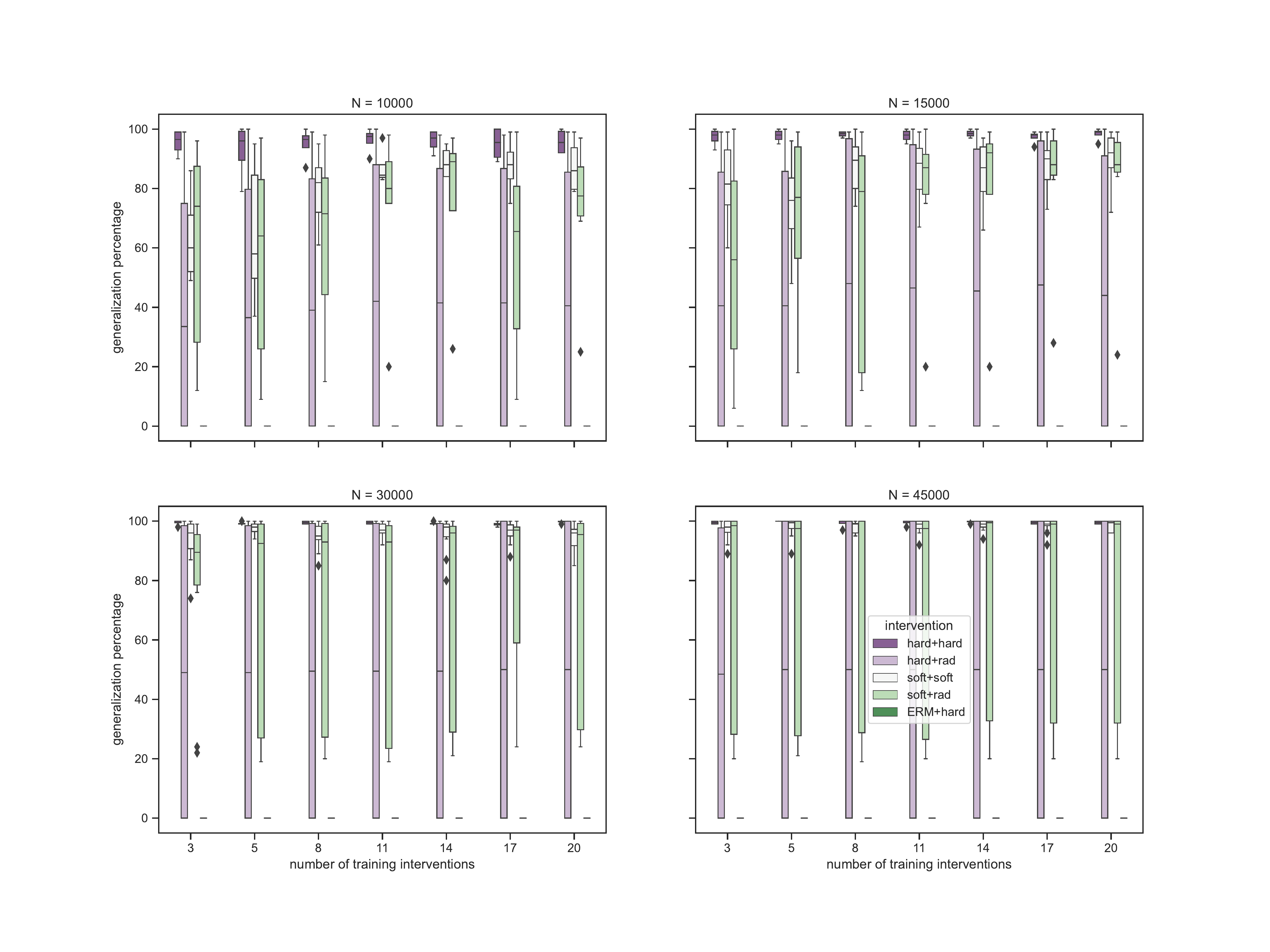}
  \caption{Comparison of generalization for various combinations of training and test interventional distributions. Here ``hard" refers to the i.i.d. interventions setting with hard interventions from the same distribution in both training and test interventions, similarly ``soft" refers to i.i.d. soft interventions, while  ``hard+rad" (respectively ``soft+rad") refer to hard interventions (respectively soft) in training with rademacher interventions in test. \textit{Right: } We repeat the above for $N = 15,000, 30,000$, and $45,000$.}\label{fig:extended}
\end{figure}

\textbf{Construction of training and test interventional distributions:}
For the class of hard interventions, we consider assignments drawn from Gaussians on nodes $\{v_1, v_2, v_3\}$. Specifically, we take $X_i = a_i$ for $i\in \{3,4,5\}$ with $a_i\sim \N(0, 1)$ independently for each node. For the class of soft interventions, we assign each nonzero entry of $\beta^\theta_i \sim \N(0, 1)$ for $i\in \{3,4,5\}$ independently. For Figure \ref{fig:extended}, we use a noise variance for $\eta_i$ of $0.02$ and take $N = 30,000$ generate samples per dataset. In this way, we generate $m$ atomic-interventional datasets $\{N_{j, \text{hard}}\}_{j\in [m]}$ and $m$ soft-interventional datasets $\{N_{j, \text{soft}}\}_{j\in [m]}$ for $m$ varying from $3$ to $20$.

\textbf{Subsets as Representations: }We iterate over the power-set of the nodes to make a list of approximately invariant representations. For every subset $S$ of the non-target nodes, we consider $\Phi_S$ to be the corresponding representation, diagonal, with a one on each index in $S$ (a projection onto $S$). We count the fraction of these representations that are approximately invariant, as defined below.

\textbf{Approximately Invariant Representations: }Here, these are defined as being such that there is not much variation over the least squares solutions on top of the representations. For every such representation $\Phi_S$, we denote by $f_{S, j}$ the least squares solution on top of the representation $\Phi_S$ for dataset $N_j$. We take as a measure of invariance the quantity $$\rho_S = \frac{\sum_{j_1, j_2\in [m]}\Vert f_{S, j_1}-f_{S, j_2}\Vert_2}{(m-1) \Vert \sum_{j\in [m]} f_{S, j}\Vert_2}.$$
Note that this is the ratio of the average distance between the various least squares solutions and the norm of the average least squares solution. We expect this to be low when the various least squares solutions are close to one another. An approximately invariant subset $S$ is then taken to be one such that $\rho_S<0.02$. Let $s_{\text{hard}}$,  $s_{\text{soft}}$ denote the approximately invariant subsets given the training sets $\{N_{j, \text{hard}}\}_{j\in [m]}$, $\{N_{j, \text{soft}}\}_{j\in [m]}$

\begin{wrapfigure}{r}{0.3\textwidth}
\centering
\includegraphics[]{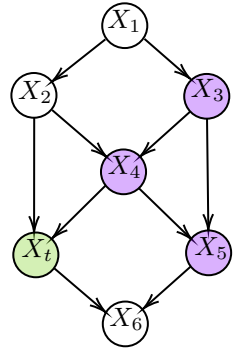}
\caption{Linear SEM for the empirical demonstrations.}
\end{wrapfigure}

\textbf{Test distributions:} We construct $m_{\text{test}}$ test datasets induced by interventions drawn from each of the same families, that is, $\{N^{\text{test}}_{j, \text{hard}}\}_{j\in m_{\text{test}}}$ for atomic interventions, and $\{N^{\text{test}}_{j, \text{hard}}\}_{j\in m_{\text{test}}}$ for soft interventions, as well as a set of datasets $\{N^{\text{test}}_{\alpha, j, \text{rad}}\}_{j\in m_{\text{test}}}$, drawn from the linear SEM constructed by flipping each edge weight in the original SEM with probability $\alpha$ (except the ones connecting the parents of the target to the target). Note that this last dataset is not drawn from interventions that are drawn from the same distribution as the training datasets. These sets are of the same size $N = 30,000$ as the training datasets, and use the same noise variance for $\eta_i$, $0.02$. For each set in $s_{\text{hard}}$, we return the percentage of datasets in $\{N^{\text{test}}_{j, \text{hard}}\}_{j\in m_{\text{test}}}$, $\{N^{\text{test}}_{j, \text{soft}}\}_{j\in m_{\text{test}}}$, and $\{N^{\text{test}}_{\alpha, j, \text{rad}}\}_{j\in m_{\text{test}}}$ such that the least squares solution on top of the subsets continues to be approximately the same. The labels of the plots reflect in order the trianing and test interventional ditributions, so ``soft+rad($0.5$)" means we are evaluating $s_{\text{soft}}$ on $100$ datasets $\{N_{0.5, j, \text{rad}}\}$.

\textbf{ERM generalization:} To demonstrate that generalization in this sense is indeed non-trivial, we also plot the fraction of least squares solutions to the observational datasets (where we pool together the training distributions) that exhibit low variance in test interventional distributions. This box plot is essentially trivial at $0$.


\end{document}